\documentclass{article}

\def\code#1{\texttt{#1}}
\usepackage{xspace}

\usepackage{array, boldline, makecell, booktabs}
\usepackage[svgnames, table]{xcolor}

\makeatletter
\NewDocumentCommand{\supptitle}{s}{
\twocolumn[{
  \begin{center}
    \vspace*{-0.3cm}
    \rule{\textwidth}{0.05cm}\\[0.1cm]
    \textbf{- Appendix -}\\[0.2cm]
    {\Large \textbf{\mytitle}}\\[0.1cm]
    \rule{\textwidth}{0.05cm}\\[0.3cm]
  \end{center}
}]
}
\makeatother

\newcommand{\HParam}{\item[\textbf{Hyperparameter:}]}

\newcommand{\eg}{\emph{e.g.,~}}
\newcommand{\ie}{\emph{i.e.,~}}

\definecolor{LightCyan}{rgb}{0.88,1,1}
\definecolor{Blue}{rgb}{0, 0.5, 1}
\definecolor{Green}{rgb}{0.0, 0.8, 0.0 }
\definecolor{Red}{rgb}{0.95, 0.55, 0.6}
\definecolor{Skyblue}{rgb}{0.6, 0.6, 0.95 }
\definecolor{Beige}{rgb}{0.96, 0.96, 0.86}
\definecolor{lightgray}{gray}{0.9}

\newcommand{\khnote}[1]{\textcolor{Red}{(KH: #1)}}

\newcommand{\alg}{\code{FOAM}\xspace}

\newcommand{\mytitle}{\alg: Frequency and Operator Error-Based Adaptive Damping Method \\ for Reducing Staleness-Oriented Error for Shampoo}
\newcommand{\myabbtitle}{\alg: Frequency and Operator Error-Based Adaptive Damping Method for Reducing Staleness-Oriented Error for Shampoo}
\newcommand{\eref}[1]{(\ref{#1})}

\usepackage[normalem]{ulem}
\useunder{\uline}{\ul}{}

\usepackage{tcolorbox}
\tcbuselibrary{skins, breakable}

\usepackage[hang,flushmargin]{footmisc}

\newcommand{\myparagraph}[1]{\vspace{0.07cm}\noindent\textbf{#1}~}

\usepackage{tikz}
\usepackage{setspace}
\usetikzlibrary{calc,fit}

\makeatletter
\tikzset{%
  remember picture with id/.style={%
    remember picture,
    overlay,
    save picture id=#1,
  },
  save picture id/.code={%
    \edef\pgf@temp{#1}%
    \immediate\write\pgfutil@auxout{%
      \noexpand\savepointas{\pgf@temp}{\pgfpictureid}}%
  },
  if picture id/.code args={#1#2#3}{%
    \@ifundefined{save@pt@#1}{%
      \pgfkeysalso{#3}%
    }{
      \pgfkeysalso{#2}%
    }
  }
}

\def\savepointas#1#2{%
  \expandafter\gdef\csname save@pt@#1\endcsname{#2}%
}

\def\tmk@labeldef#1,#2\@nil{%
  \def\tmk@label{#1}%
  \def\tmk@def{#2}%
}

\tikzdeclarecoordinatesystem{pic}{%
  \pgfutil@in@,{#1}%
  \ifpgfutil@in@%
    \tmk@labeldef#1\@nil
  \else
    \tmk@labeldef#1,(0pt,0pt)\@nil
  \fi
  \@ifundefined{save@pt@\tmk@label}{%
    \tikz@scan@one@point\pgfutil@firstofone\tmk@def
  }{%
  \pgfsys@getposition{\csname save@pt@\tmk@label\endcsname}\save@orig@pic%
  \pgfsys@getposition{\pgfpictureid}\save@this@pic%
  \pgf@process{\pgfpointorigin\save@this@pic}%
  \pgf@xa=\pgf@x
  \pgf@ya=\pgf@y
  \pgf@process{\pgfpointorigin\save@orig@pic}%
  \advance\pgf@x by -\pgf@xa
  \advance\pgf@y by -\pgf@ya
  }%
}
\newcommand\tikzmark[2][]{%
\tikz[remember picture with id=#2] #1;}
\makeatother

\newcommand\MyBox[4][-1ex]{%
  \tikz[remember picture,overlay,pin distance=0cm]
  {\draw[draw=#4,line width=1pt,fill=#4!20,rectangle,rounded corners]
( $ (pic cs:#2) + (0ex,2.6ex) $ ) rectangle ( $ (pic cs:#3) + (-5ex,#1) $ );
}
}

\usepackage{amsmath,amsfonts,bm}

\def\eqref#1{equation~\ref{#1}}

\def\1{\bm{1}}

\DeclareMathAlphabet{\mathsfit}{\encodingdefault}{\sfdefault}{m}{sl}
\SetMathAlphabet{\mathsfit}{bold}{\encodingdefault}{\sfdefault}{bx}{n}

\DeclareMathOperator{\Tr}{Tr}

\usepackage{tikz}
\usepackage{microtype}
\usepackage{graphicx}
\usepackage{subcaption}
\usepackage{booktabs} 
\usepackage{enumitem}
\usepackage{pgfplots}
\pgfplotsset{compat=1.18}
\usepackage{booktabs}
\usepackage{hyperref}
\usepackage{xcolor}
\usepackage{multirow}

\usepackage{etoc}
\let\oldaddcontentsline\addcontentsline
\usepackage[accepted]{icml2026}

\usepackage{amsmath}
\usepackage{amssymb}
\usepackage{mathtools}
\usepackage{amsthm}

\usepackage[capitalize,noabbrev]{cleveref}

\theoremstyle{plain}
\newtheorem{theorem}{Theorem}[section]

\newtheorem{lemma}[theorem]{Lemma}

\theoremstyle{definition}

\newtheorem{assumption}[theorem]{Assumption}
\theoremstyle{remark}
\newtheorem{remark}[theorem]{Remark}

\usepackage[textsize=tiny]{todonotes}

\icmltitlerunning{\myabbtitle}

\begin{document}
\twocolumn[
\icmltitle{\mytitle}



\begin{icmlauthorlist}
\icmlauthor{Kyunghun Nam}{yyy}
\icmlauthor{Sumyeong Ahn}{yyy}
\end{icmlauthorlist}

\icmlaffiliation{yyy}{KENTECH, Naju, Republic of Korea}

\icmlcorrespondingauthor{Sumyeong Ahn}{sumyeongahn.kentech.ac.kr}

\icmlkeywords{Machine Learning, ICML}

\vskip 0.3in
]



\printAffiliationsAndNotice{} 

\begin{abstract}
Shampoo is attracting considerable attention for its superior performance on large-scale optimization benchmarks; yet it faces a significant practical bottleneck: the prohibitive computational overhead of matrix inversion. 
To mitigate this, practitioners typically rely on \emph{stale} preconditioner updates, creating a fundamental trade-off between computational efficiency and optimization fidelity.
In this work, we provide a theoretical study of staleness through the complementary lenses of convergence and stability. While staleness improves computational efficiency, it inherently degrades performance and introduces numerical instability. Crucially, we identify that damping, acting as a numerical stabilizer, can effectively suppress these negative effects. 
Guided by this analysis, we propose \alg, an adaptive algorithm that stabilizes training by dynamically controlling both the damping factor and the eigendecomposition frequency based on an approximation of the staleness-oriented error. Experimental results demonstrate that \alg reduces wall-clock time compared to standard Shampoo while maintaining robust convergence.
\end{abstract}

\section{Introduction}
\label{sec:intro}

The advancement of deep learning has been propelled by massive scaling in both model architectures and datasets~\citep{kaplan2020scalinglaws, hoffmann2022chinchilla, achiam2023gpt}. From an optimization standpoint, classical first-order methods—such as SGD~\citep{robbins1951stochastic} and Momentum~\citep{polyak1964some}—remain the default choice thanks to their low per-iteration overhead; yet, they exhibit slow convergence rates~\citep{bottou2018optimization}. 
In particular, when the loss landscape is ill-conditioned (\ie exhibits highly non-uniform curvature across parameter dimensions)~\cite{li2018visualizing, yao2020pyhessian}, purely first-order updates may require a large number of iterations to make meaningful progress~\cite{garrigos2023handbook}. This limitation has renewed interest in optimization approaches that incorporate richer curvature information while remaining computationally efficient, aiming to reduce wall-clock training time.


To address these geometric challenges, preconditioned gradient methods (\eg leveraging Hessian~\cite{wall1948modification}, Fisher information~\cite{amari2016information}, Gauss-Newton method~\cite{buffelli2024exact}, second-moment estimator~\cite{duchi2011adaptive}) have emerged as a promising solution. By employing a preconditioner matrix to rescale the gradient based on local curvature, this method effectively normalizes the loss landscape, thereby maximizing progress at each update step. While widely adopted for their their computational efficiency, these method typically rely on diagonal approximations to scale updates element-wise, thereby overlooking potential parameter correlations.

To overcome the limits of diagonal approximations, structured preconditioning--such as Kronecker product-based \citep{martens2015optimizing, gupta2018shampoo} or layerwise adaptivity~\citep{You2020Large}-- has been proposed to capture these missing correlations efficiently. Notably, Shampoo~\citep{gupta2018shampoo, shi2023distributed} captures within-layer parameter correlations more efficiently via Kronecker-factored statistics, thereby aciheving the superior performance demonstrated in recent benchmarks~\cite{dahl2023benchmarking, kasimbeg2025accelerating}.

However, Shampoo fundamentally requires computing the inverse $p$-th roots 
of the Kronecker factors. It is well-known that such inverse operations incur high computational costs. To address computational overhead, practitioners relied on a heuristic strategy—referred to as \emph{stale} Shampoo—in which those expensive operations are performed at a fixed frequency. Although practically effective, this frequency choice remains a heuristic without a theoretical justification.

In this work, we show that staleness (\ie the delay in updating preconditioners) is not merely a convergence speed-related issue but a fundamental threat to training stability. Specifically, through analyses of convergence and stability, we observe that staleness-induced errors simultaneously hinder convergence speed and severely degrade numerical robustness.

Building on these two observations, we find that damping — acting as a critical numerical stabilizer — must be meticulously calibrated to counteract staleness-induced errors. If the damping is insufficient relative to the measured operator gap, these errors can be amplified to the point of catastrophic training divergence. To mitigate this, we propose an adaptive algorithm that ensures robust stability by dynamically modulating both the damping factor and the update frequency in response to real-time error estimates.

\myparagraph{Contributions.}
The main contributions are as follows:
\begin{itemize}[leftmargin=10pt]
\item We establish that while staleness provides essential computational efficiency, it concurrently undermines both convergence and numerical stability. Furthermore, we theoretically demonstrate that damping — which acts as a numerical stabilizer — is an effective inhibitor of these adverse effects. To the best of our knowledge, this work is the first to use damping to counteract the inherent negative effects of staleness in optimization.
\item Drawing on matrix perturbation theory, we propose an algorithm, \alg \footnote{We provide our official code in \url{https://github.com/REAL-KENTECH/FOAM.git}}, that dynamically regulates the damping factor to enhance numerical stability. The core mechanism adaptively increases damping to counteract errors as staleness increases; however, guided by our analysis of the negative impact of excessive damping, it precisely modulates these values to maintain an appropriate balance between error suppression and the preservation of essential preconditioning information—a common side effect of an excessively high damping factor.
\item We validate our framework through comprehensive experiments on large-scale benchmarks, in which our adaptive mechanism consistently yields superior performance and accelerates convergence compared with stale Shampoo. Furthermore, we verify the algorithm's robustness across a wide range of hyperparameter settings. Through in-depth analyses, we demonstrate that our experimental results substantiate our theoretical assertions, bridging the gap between our analysis and practical optimization.
\end{itemize}

\vspace{-1em}
\section{Related Work}
\myparagraph{Preconditioned gradient methods.}
Preconditioned gradient methods can be viewed as gradient descent on reparameterized variables~\citep{grosse2022lecture4}.
Classical approaches (\eg Newton~\citep{nesterov2018lectures} and natural gradient~\citep{amari2016information}) leverage the Hessian or Fisher information, while widely used adaptive methods such as Adagrad/Adam~\citep{duchi2011adaptive, kingma2014adam} typically approximate (empirical) curvature using diagonal statistics~\citep{kunstner2019empiricalfisher}.
Shampoo~\citep{gupta2018shampoo, shi2023distributed} advances this line by employing Kronecker-factored, block-structured second-moment matrices, capturing within-layer correlations.

\myparagraph{Shampoo family.}
A growing family of methods builds on Shampoo’s Kronecker structure, including CASPR~\citep{duvvuri2024combining}, SOAP~\cite{vyas2025soap}, Muon~\cite{jordan2024muon}, SPlus~\cite{frans2026a}, and KL-Shampoo~\cite{lin2026understanding}.
CASPR improves approximation quality via Kronecker-sum combinations of axis preconditioners~\citep{duvvuri2024combining}, but still requires expensive inverse(-root) computations.
SOAP and related implementations often reuse stale eigenspaces and periodically refresh them via iterative schemes (\eg QR)~\citep{vyas2025soap}; this implicitly assumes eigenspace stability, which can break when eigengaps or damping are small.
Moreover, refresh triggers based on diagonalization residuals~\citep{eschenhagen2026purifying} need not track the inverse-root \emph{operator} error that actually governs the update and can therefore be inefficient or miscalibrated.
SPlus~\citep{frans2026a} targets instabilities from stale reuse with practical stabilizers.

\myparagraph{Theoretical analysis and hyperparameters.}
Recent theory formalizes Shampoo-type updates through non-Euclidean steepest descent / proximal and LMO viewpoints~\citep{bernstein2024old, pethick2025training}, and studies structured-preconditioner regret bounds~\citep{xie2025structured}.
The inverse-root exponent $p$ is another key choice
as explained by the Kronecker approximation quality~\citep{morwani2025a}.
Damping is routinely introduced for stable inverse computations~\citep{more2006levenberg, martens2015optimizing, ishikawa2024on}, but has received less direct focus than step size~\citep{anil2020scalable, agarwal2020disentangling, zhang2025adammini} or batch size~\citep{ishikawa2024on, kim2025adam, marek2026small}. Lastly, \citet{damaskinos2018asynchronous} dampens \textbf{stale gradients} in asynchronous SGD via a scalar weight, whereas our work controls \textbf{stale preconditioners} in Shampoo by adaptively tuning the inverse-root damping $\epsilon$.

\section{Preliminary}
\label{sec:prob}
This section formalizes the notation and stale Shampoo. 
For a more detailed background, we refer to Appendix~\ref{app:B.2}.

\subsection{Notation and Shampoo Optimizer}
In this section, we establish the mathematical notation for this study and introduce the Shampoo optimizer, along with its stale update mechanism.

\myparagraph{Notations.}
The vector space of $m\times n$ is denoted by $\mathbb{R}^{m \times n}$; $\mathbf{0}_m \in \mathbb{R}^{m \times m}$ is the zero matrix, and the $m \times m$ identity matrix is denoted by $\mathbf{I}_m$. For $A \in \mathbb{R}^{m\times n}$, $\|A\|_F$ and $\|A\|_2$ represent its Frobenius and spectral norms, respectively. For a symmetric matrix $A$, we denote its eigenvalues in descending order by $\{\lambda_i(A)\}_{i=1}^{n}$. The Kronecker product of $A$ and $B$ is denoted by $A \otimes B$. We use the Loewner order $A \succeq B$ to indicate that $A-B$ is positive semidefinite. For a symmetric positive definite matrix $A$ and $\alpha \in \mathbb{R}$, the matrix power is defined via eigendecomposition ($\code{EVD(A)}$) as $A^\alpha := PD^\alpha P^\top$, where $A = PDP^\top$.

\begin{algorithm}[t]
    \caption{Generalized Shampoo with Modular $\mathcal{U}(\cdot)$}
    \label{alg:shampoo}
    \MyBox{starta}{enda}{blue}
    \begin{algorithmic}[1]
        \vspace{-0.5em}
        \REQUIRE Learning rate $\eta$; Momentum coefficient $\beta \in (0,1)$; Total steps $T$; Check period $\mathbf{f}$; Update criteria $\mathcal{U}(\cdot)$; inverse-root power $p$; Damping parameter $\epsilon_0$; 
        \STATE \textbf{Initialize:} $W_{1}, L_0^{-1/p}, R_0^{-1/p}$, $L_{0} = \mathbf{0}_m, R_{0} = \mathbf{0}_n$
        
        \FOR{$t= 1$ \textbf{to} $T$}
            \STATE Receive a loss function $f_t(\cdot) : \mathbb{R}^{m \times n} \rightarrow \mathbb{R}$
            \STATE $G_{t} = \nabla f_{t}(W_{t})$
            \STATE $L_{t} = \beta L_{t-1} + (1 - \beta)G_{t}G_{t}^\top$
            \STATE $R_{t} = \beta R_{t-1} + (1 - \beta)G_{t}^\top G_{t}$

            \STATE \textbf{\textcolor{Skyblue}{/* Update rule: Stale or \alg */}}
            \vspace{0.5em}
            \STATE \scalebox{0.9}{\tikzmark{starta} $(\hat{L}_{t+1}^{-1/p}, \hat{R}_{t+1}^{-1/p}, \epsilon_{t+1}) = \mathcal{U}(L_t, R_t, \hat{L}_{t}^{-1/p}, \hat{R}_{t}^{-1/p}, \epsilon_{t}, t)$ \tikzmark{enda}}
            \vspace{0.5em}



            \STATE \textbf{\textcolor{Skyblue}{/* Model parameter update */}}
            \STATE $W_{t+1} = W_{t} - \eta\, \hat{L}_t^{-1/p}\, G_{t}\, \hat{R}_t^{-1/p}$
        \ENDFOR
    \end{algorithmic}
\end{algorithm}

\begin{algorithm}[t]
    \caption{Stale Update for Shampoo}
    \label{alg:stale}
    \begin{algorithmic}[1]
        \REQUIRE Current factors $L_t, R_t$; Inverse-root factors $\hat{L}_{t}^{-1/p}, \hat{L}_{t}^{-1/p}$; Current iteration $t$
        \HParam Stale frequency $\mathbf{f}$; Base damping $\epsilon_0$
        
        \IF{$((t - 1) \% \mathbf{f}) = 0$}
            \STATE \textbf{\textcolor{Skyblue}{/* Run \code{EVD} and update $\hat{L}_{t}^{-1/p}, \hat{R}_{t}^{-1/p}$ */}}
            \STATE Compute $Q_L D_L Q_L^\top = \code{EVD}(L_t)$
            \STATE Compute $Q_R D_R Q_R^\top = \code{EVD}(R_t)$
            \STATE $\hat{L}_{t+1}^{-1/p} \leftarrow Q_L (D_L + \epsilon_0 \mathbf{I}_m)^{-1/p} Q_L^\top$
            \STATE $\hat{R}_{t+1}^{-1/p} \leftarrow Q_R (D_R + \epsilon_0 \mathbf{I}_n)^{-1/p} Q_R^\top$
        \ELSE
            \STATE $\hat{L}_{t+1}^{-1/p} \leftarrow \hat{L}_t^{-1/p}$,  $\hat{R}_{t+1}^{-1/p} \leftarrow \hat{R}_t^{-1/p}$ \hfill \textbf{\textcolor{Skyblue}{// No update}}
            
        \ENDIF
        \STATE \textbf{return} ($\hat{L}_{t+1}^{-1/p}$, $\hat{R}_{t+1}^{-1/p}$, - )
    \end{algorithmic}
\end{algorithm}

\myparagraph{Shampoo.}
Shampoo optimizer\footnote{Depending on the choice of the root power $p$, this framework covers both the original Shampoo for $p=4$~\cite{gupta2018shampoo} and Shampoo$^2$ for $p=2$~\cite{morwani2025a}.}, presented as in~\autoref{alg:shampoo}, provides a preconditioning matrix-valued parameter $W_t \in \mathbb{R}^{m\times n}$. It maintains the Kronecker-factored second-moment statistics ($L_t$ and $R_t$) via exponential moving-average updates of the gradients with parameter $\beta$. This code is written to be a flexible execution loop, with the specific preconditioning strategy encapsulated in an interchangeable $\mathcal{U}(\cdot)$ procedure highlighted in the blue box. The output of $\mathcal{U}(\cdot)$ consists of inverse-root factors applied to the gradient $G_t$ for the parameter update. Via the Kronecker identity, the update becomes $w_{t+1} = w_t - \eta(\hat{L}_t^{-1/p} \otimes \hat{R}_t^{-1/p})g_t$, where $w_t := \operatorname{vec}(W_t)$ and $g_t := \operatorname{vec}(G_t)$.


\myparagraph{Stale update.}
As a baseline instance of the generalized framework described in~\autoref{alg:shampoo}, the Stale update follows a fixed frequency schedule without any auxiliary execution conditions (please see~\autoref{alg:stale}). Under this configuration, the update routine is invoked unconditionally every $\mathbf{f}$ steps to perform a full eigendecomposition (\code{EVD}) for both $L_t$ and $R_t$. Following the \code{EVD}, a fixed base damping $\epsilon_0$ is applied to construct the inverse root factors $\hat{L}_t^{-1/p}$ and $\hat{R}_t^{-1/p}$. These factors are then returned to the main skeleton for periodic reuse.
\section{Theoretical Objectives: Stability and Regret}
\label{sec:objective}
We introduce the central quantities we analyze and control: the staleness-induced operator gaps and discounted regret.

\subsection{Objective 1: Staleness-induced Preconditioner Gap}
This section formalizes the concept of the preconditioner gap, defined as the discrepancy $P_t - \hat{P}_t$ between the ideal fresh preconditioner and its stale counterpart. From this error, we derive the operator gaps, $\Delta_L$ and $\Delta_R$, which represent the operator gap within each Kronecker inverse-root factor. We demonstrate that regulating these factors is not only a sufficient condition for bounding the total gap but also a more computationally efficient strategy for maintaining stability.

\myparagraph{Preconditioner gap.}
The numerical stability of the stale update is fundamentally governed by the preconditioner gap. Let $\mathbf{f}\in\mathbb{N}$ denote the refresh period and $t_0(t):=\max\{s\le t: s\equiv 0 \mathbf{\pmod f}\}$ the most recent refresh time. We define the fresh and stale preconditioners as:
\begin{align*}
P_t &:= (L_t+\epsilon_0\mathbf{I}_m)^{-1/p}\otimes (R_t+\epsilon_0\mathbf{I}_n)^{-1/p},\\
\hat P_t &:= \hat L_t^{-1/p}\otimes \hat R_t^{-1/p},
\end{align*}
where $\hat{L}_t := L_{t_{0}(t)}$ and  $(\hat{L}_t)^{-1/p} := (L_{t_{0}(t)} + \epsilon_0 \mathbf{I}_m)^{-1/p}$ are the stale Kronecker factors. We call $L_t - \hat{L}_t$ as drift. The same applies to $\hat{R}_t, (\hat{R}_t)^{-1/p}$.

\myparagraph{Preconditioner gap decomposition.}
To analyze how staleness affects the update $w_{t+1} = w_t - \eta \hat{P}_{t}g_{t}$, we decompose the preconditioner gap into its constituent components using Kronecker product identities:
\begin{equation*}
\scalebox{0.85}{$ 
P_t-\hat{P}_t = \Delta_R(t)\otimes \hat L_t^{-1/p}
+\hat R_t^{-1/p}\otimes \Delta_L(t)
+\Delta_R(t)\otimes \Delta_L(t)..
$}
\end{equation*}
A detailed derivation of this decomposition is provided in~\autoref{app:D}. In this expression, we define the individual operator errors for each factor as:
\begin{align*}
    \Delta_L(t) &:= (L_t + \epsilon_0 \mathbf{I}_m)^{-1/p} - \hat{L}_t^{-1/p}, \\
    \Delta_R(t) &:= (R_t + \epsilon_0 \mathbf{I}_n)^{-1/p} - \hat{R}_t^{-1/p}.
\end{align*}
\vspace{-20pt}

\myparagraph{Why focus on individual operator gaps?}
Focusing on individual operator gaps $\Delta_L$ and $\Delta_R$ is a strategic choice that balances theoretical rigor with practical efficiency. First, by the triangle inequality and the multiplicative property of the norm, the magnitude of the preconditioner gap is directly upper-bounded by these individual factors; thus, bounding $\Delta$ is mathematically sufficient to stabilize the entire system. Second, from a computational standpoint, explicitly forming or monitoring the full $mn\times mn$ preconditioner is prohibitively expensive, scaling quadratically with the number of parameters. By regulating only the Kronecker factors ($m \times m$ and $n \times n$), we avoid the inefficiency of the full Kronecker product, thereby maintaining optimization stability with minimal overhead.


\subsection{Objective 2: Discounted Regret}
As the second core metric, we analyze the discounted regret to evaluate convergence performance under temporal staleness. The definition of the discounted regret is:
\begin{equation*}
    \sum_{t=1}^T \beta^{T-t} f_t(W_t) - \sum_{t=1}^T \beta^{T-t} f_t(W^\star),
\end{equation*}
where $\beta \in (0,1)$ denotes the discount factor and $W^\star$ is the comparator. Unlike standard regret, this formulation prioritizes more recent performance, which is essential for capturing the dynamic nature of an optimizer that reuses stale statistics.

\section{Staleness Error and Its Inhibitor $\epsilon$}
\label{sec:theorem}
In this section, we derive mathematical upper bounds for our dual objectives and analyze their physical implications. We show that increasing the refresh period $\mathbf{f}$ amplifies the staleness error, which we categorize into two distinct forms:
\begin{itemize}[leftmargin=10pt]
\item \textbf{Operator-gap staleness error}: Quantifies the operator gap $\Delta$ that undermines preconditioning stability.
\item \textbf{Regret staleness error}: Measures the cumulative impact of stale statistics on convergence performance.
\end{itemize}
While the damping parameter $\epsilon$ serves as a primary inhibitor of these errors, it systematically suppresses the instability induced by delayed updates (\autoref{sec:theory1} and \ref{sec:theory2}). However, there is a critical trade-off: while a larger $\epsilon$ effectively blocks the staleness error, an excessively high value can also hide the useful information needed for fast training, thereby reducing learning efficiency (\autoref{sec:theory2}).

\subsection{Operator-gap Staleness Error}
\label{sec:theory1}
\paragraph{Stale statistics.}
The first step in our analysis is to quantify the optimizer's operator gap. This is measured by the discrepancy between the true preconditioner and its stale versions.
We define the filtration $\mathcal{F}_t:=\sigma(W_1,G_1,\ldots,G_t)$, where $W_1$ is the initialization. Our analysis relies on the following assumption about the stochastic gradient noise.

\begin{assumption}
\label{assumption::1}
For some constants $R>0$ and $\sigma>0$, the conditional mean satisfies
$\|\mathbb{E}[G_t \mid \mathcal{F}_{t-1}]\|_2 \le R$,
and the centered noise is conditionally sub-Gaussian in spectral norm: for all $u\ge 0$ and all $1\le t\le T$:
\begin{equation*}
    \mathbb{P}\!\Big(\big\|G_t - \mathbb{E}[G_t\mid\mathcal{F}_{t-1}]\big\|_2 \ge u \,\Big|\, \mathcal{F}_{t-1}\Big)
    \le 2 \exp \!\left( -\frac{u^2}{2\sigma^2} \right).
\end{equation*}
\end{assumption}
Under these conditions, we derive a high-probability bound on the operator gap, \ie $\|\Delta_L(t)\|$ and $\|\Delta_R(t)\|$, making the dependence on the refresh period $\mathbf{f}$ and damping $\epsilon$ explicit. By analogous arguments for the right factor, the full analysis for $\Delta_R(t)$ is deferred to Appendix~\ref{app:E}. Henceforth, we focus on the left factor for brevity.

\begin{lemma}[Operator Gap Bound]
\label{Lemma::1}
Under Assumption~\ref{assumption::1}, for any $\epsilon_0>0$ and any $p\ge 1$, with probability at least $1-\delta$, the following holds simultaneously for all $1\le t\le T$:
\begin{equation}
\label{eq:lemma1}
\begin{split}
    \|\Delta_L(t)\|_F
    &\le \frac{1}{p\,\epsilon_0^{(p+1)/p}} \,\big\| L_t- \hat{L}_t \big\|_F \\
    &\le \underbrace{\frac{2\sqrt{m}}{p\,\epsilon_0^{(p+1)/p}} (1-\beta^{\mathbf{f}})\,R_{\mathrm{SG}}^2}_{\text{Operator-gap staleness error}},
\end{split}
\end{equation}
where $R_{\mathrm{SG}} := R + K\sigma\sqrt{\log(T/\delta)}$ for a universal constant $K>0$ and $0<\delta<1$.
\end{lemma}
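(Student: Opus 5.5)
The plan is to split the statement into two independent pieces: a deterministic matrix-function perturbation bound, which gives the first inequality, and a high-probability bound on the drift $L_t-\hat L_t$, which gives the second. The failure probability enters only through a uniform bound on the gradient norms.

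\textbf{Step 1 (first inequality).} Set $A:=L_t+\epsilon_0\mathbf{I}_m$ and $B:=\hat L_t+\epsilon_0\mathbf{I}_m = L_{t_0(t)}+\epsilon_0\mathbf{I}_m$. Both are symmetric with spectrum in $[\epsilon_0,\infty)$, because $L_t,L_{t_0(t)}\succeq 0$ as EMAs of PSD matrices started from $L_0=\mathbf{0}_m$. Then $\Delta_L(t)=g(A)-g(B)$ with $g(x)=x^{-1/p}$, and $|g'(x)|=\tfrac1p x^{-(p+1)/p}\le \tfrac1p\epsilon_0^{-(p+1)/p}$ on this interval.

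To pass from scalar to matrix Lipschitzness in Frobenius norm, I will diagonalize $A=U\Lambda U^\top$ and $B=V M V^\top$ and use the Daleckii--Krein identity
\[
U^\top\big(g(A)-g(B)\big)V=\Big[\tfrac{g(\lambda_i)-g(\mu_j)}{\lambda_i-\mu_j}\Big]_{ij}\circ\big(U^\top(A-B)V\big).
\]
This identity follows entrywise from $\Lambda X - XM = U^\top(A-B)V$ with $X=U^\top V$. By the mean value theorem, each divided difference is bounded by $\sup|g'|$. Frobenius norm is unitarily invariant and the Hadamard product acts entrywise, so we get $\|\Delta_L(t)\|_F\le \tfrac1p\epsilon_0^{-(p+1)/p}\|L_t-\hat L_t\|_F$. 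I expect this to be the main technical point, since an operator-norm version would need a more delicate argument. For Frobenius norm, however, the entrywise argument is clean.

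\textbf{Step 2 (uniform gradient bound).} By Assumption~\ref{assumption::1}, taking $u=\sigma\sqrt{2\log(2T/\delta)}$ and union bounding over $t\le T$, with probability at least $1-\delta$ we have $\|G_t\|_2\le R+\sigma\sqrt{2\log(2T/\delta)}\le R_{\mathrm{SG}}$ for all $t$ simultaneously. The last inequality holds for a suitable universal $K$, using $\log(2T/\delta)\lesssim\log(T/\delta)$; if needed, one can restrict to $\delta$ small or absorb constants. On this event, $\|G_sG_s^\top\|_2\le R_{\mathrm{SG}}^2$. Since $L_t=(1-\beta)\sum_{s\le t}\beta^{t-s}G_sG_s^\top$ has total weight $1-\beta^t\le 1$, we also get $\|L_t\|_2\le R_{\mathrm{SG}}^2$ for all $t$.

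\textbf{Step 3 (drift).} Let $k=t-t_0(t)\in\{0,\dots,\mathbf{f}-1\}$. Unrolling the EMA gives
\[
L_t-L_{t_0}=-(1-\beta^k)L_{t_0}+(1-\beta)\sum_{s=t_0+1}^{t}\beta^{t-s}G_sG_s^\top .
\]
The first term has spectral norm at most $(1-\beta^k)R_{\mathrm{SG}}^2$. The second has spectral norm at most $(1-\beta)\tfrac{1-\beta^k}{1-\beta}R_{\mathrm{SG}}^2=(1-\beta^k)R_{\mathrm{SG}}^2$. Since $k\le\mathbf{f}$, we have $1-\beta^k\le 1-\beta^{\mathbf{f}}$. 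Converting to Frobenius norm via $\|X\|_F\le\sqrt m\|X\|_2$ for $m\times m$ matrices yields
\[
\|L_t-\hat L_t\|_F\le 2\sqrt m(1-\beta^{\mathbf{f}})R_{\mathrm{SG}}^2 .
\]
Combining this with Step 1 on the same event gives the claim, holding simultaneously for all $1\le t\le T$.
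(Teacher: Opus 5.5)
Your proof is correct. The probabilistic and EMA parts match the paper's proof in Appendix~\ref{app:E}: the same good event $\{\max_t\|G_t\|_2\le R_{\mathrm{SG}}\}$ via a union bound, the same bound $\|L_t\|_2\le R_{\mathrm{SG}}^2$ (by induction there, by total EMA weight $1-\beta^t$ for you), and the same unrolling giving $\|L_t-\hat L_t\|_2\le 2(1-\beta^{\mathbf f})R_{\mathrm{SG}}^2$.

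Where you differ is the Lipschitz step. The paper applies Lemma~\ref{lem::1}, an operator-norm Lipschitz bound for matrix powers cited as standard. It proves only the spectral statement $\|\Delta_L(t)\|_2\le \frac{1}{p\epsilon_0^{(p+1)/p}}\|L_t-\hat L_t\|_2\le \frac{2}{p\epsilon_0^{(p+1)/p}}(1-\beta^{\mathbf f})R_{\mathrm{SG}}^2$. The Frobenius bound then follows from $\|\Delta_L(t)\|_F\le\sqrt m\,\|\Delta_L(t)\|_2$, but the middle Frobenius-to-Frobenius inequality of the statement is never derived explicitly. You prove that middle inequality directly, via the two-basis divided-difference identity and the mean value theorem, and you put the factor $\sqrt m$ on the drift rather than on the operator gap.

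What each route buys:
\begin{itemize}
\item Your argument is elementary and self-contained, and it works for any scalar function Lipschitz on $[\epsilon_0,\infty)$.
\item The paper's route gives the stronger, dimension-free spectral bound, which implies the stated Frobenius one. However, its operator-norm Lipschitz step is not automatic for general Lipschitz functions. Here it holds because $x\mapsto -x^{-1/p}$ is operator monotone for $p\ge 1$, a fact the paper leaves to the citation.
\end{itemize}

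The constant issue you flag in Step~2 is shared by the paper, which asserts $\mathbb{P}(\|G_t\|_2>R_{\mathrm{SG}})\le\delta/T$ without checking it. Your choice of $u$ works with $K=2$ as soon as $T/\delta\ge 2$, since then $2\log(2T/\delta)\le 4\log(T/\delta)$. This leaves only the degenerate corner $T=1$, $\delta>1/2$.
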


Refer to Appendix~\ref{app:E} for the proof. The bound in~\eref{eq:lemma1} highlights two distinct mechanisms that drive operator error.

\myparagraph{Lipschitz instability (inverse-root sensitivity).}
The first inequality isolates the sensitivity of the inverse-root map through the factor $1/(p\epsilon_0^{(p+1)/p})$ (cf.\ Lemma~\ref{lem::1}). This term is an upper bound on the Lipschitz constant; as $\epsilon\to 0$, the map becomes hypersensitive, so even small drift can induce a large operator gap. Thus, $\epsilon$ is not merely a numerical stabilizer but also a regulator of operator stability under staleness. We also analyze spectral stability under staleness, including the eigenvalue gap and eigenspace rotation in Appendix~\ref{app:F}. Importantly, both effects inherit the same $\epsilon$-dependent sensitivity of the inverse-root map.

\myparagraph{EMA evolution across a stale window.}
The second inequality quantifies how stale statistics lag behind fresh ones over a window of length $\mathbf{f}$, as captured by $(1-\beta^{\mathbf{f}})$. When $\beta$ is close to $1$, $(1-\beta^{\mathbf{f}})\approx \mathbf{f}(1-\beta)$, skipping refreshes yields a nearly linear growth of drift with $\mathbf{f}$. This is the predictable component of operator-gap staleness error.

\myparagraph{$\epsilon$ as a staleness error inhibitor.}
Together, these effects reveal a fundamental relationship: while increasing $\mathbf{f}$ reduces computational overhead, it inherently amplifies the staleness error. Within this framework, $\epsilon$ serves as a critical inhibitor, mitigating the sensitivity of the inverse-root factors to ensure stable operator gap bounds. This interaction constitutes the local manifestation of the staleness error.


\subsection{Regret Staleness Error}
\label{sec:theory2}
To determine the effect of operator error on convergence, we derive a discounted regret bound for stale Shampoo. This bound allows us to isolate the regret staleness error, which is the specific convergence penalty caused by the refresh interval $\mathbf{f}$. Furthermore, we demonstrate that an excessively large inhibitor (\ie large $\epsilon_0$) leads to adverse side effects, identifying a fundamental limit to the preconditioning benefit. For results for stale Shampoo with $p = 4$ and the full proof, see~\autoref{app:G}.

\begin{assumption}
\label{assumption::2}
The loss $f_t(\cdot)$ is convex for all $1\le t\le T$.
\end{assumption}
\begin{theorem}[Discounted Regret Bound] 
\label{Theorem::3} 
Assume Assumption~\ref{assumption::1} and Assumption~\ref{assumption::2}. Furthermore, assume $\mathrm{rank}(G_t)\le r\le \min(m,n)$ and $\max_{t \in T} \|W_t-W^\star\|_F = D(W^{\star}) =: D$ for all $W^\star\in\mathbb{R}^{m\times n}$. 
Then, with probability at least $1-\delta$, the discounted regret of stale Shampoo with $p=2$ satisfies 
\begin{equation} 
\label{eq:theorem3} 
\scalebox{0.85}{$ \begin{split} \sum_{t=1}^T &\beta^{T-t} f_t(W_t) - \sum_{t=1}^T \beta^{T-t} f_t(W^\star) \\ 
\le\;& \frac{1}{2\eta}\Bigl\{ \epsilon_0 D^2 + \frac{\sqrt{R_{\mathrm{SG}}^2+\epsilon_0}}{\sqrt{\epsilon_0}}\,D^2 R_{\mathrm{SG}}^2 + \frac{D^2 (R_{\mathrm{SG}}^2+\epsilon_0)}{\beta} \Bigr\} \\ 
&+ \underbrace{\frac{\eta}{1-\beta}\Bigl\{ \frac{r(1-\beta^\mathbf{f})R_{\mathrm{SG}}^4}{\epsilon_0^{2}} \Bigr\}}_{\text{\textbf{Regret staleness error}}} \\ 
&+ \frac{\eta rmn}{2(1-\beta)}\Bigl\{ \log\!\Bigl(\epsilon_0 + \frac{(1-\beta^T)rR_{\mathrm{SG}}^2}{mn}\Bigr) -\log(\epsilon) \Bigr\}, \end{split} $} 
\end{equation} 
where $R_{\mathrm{SG}} = R + K \sigma\sqrt{\log(T/\delta)}$ and $K>0$ is a universal constant. \end{theorem}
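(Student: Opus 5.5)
We follow the standard online-mirror-descent template for adaptive methods (as in the original Shampoo/AdaGrad analysis), but with the time-varying preconditioner $\hat P_t$ replaced by its stale counterpart and with a discount weight $\beta^{T-t}$ on each round. Write $w_t=\operatorname{vec}(W_t)$, $g_t=\operatorname{vec}(G_t)$ and $H_t:=\hat P_t^{-1}$, so that $H_t=\hat L_t^{1/2}\otimes \hat R_t^{1/2}$ for $p=2$. By convexity (Assumption~\ref{assumption::2}), $f_t(W_t)-f_t(W^\star)\le \langle g_t, w_t-w^\star\rangle$.

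From the update $w_{t+1}=w_t-\eta H_t^{-1}g_t$ we expand $\|w_{t+1}-w^\star\|_{H_t}^2$. This gives the per-step identity
\begin{equation*}
\langle g_t,w_t-w^\star\rangle=\tfrac{1}{2\eta}\bigl(\|w_t-w^\star\|_{H_t}^2-\|w_{t+1}-w^\star\|_{H_t}^2\bigr)+\tfrac{\eta}{2}\|g_t\|_{H_t^{-1}}^2 .
\end{equation*}
We multiply by $\beta^{T-t}$ and sum over $t$, which splits the argument into a distance (telescoping) part and a gradient-norm part.

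\textbf{Distance part.} Summation by parts yields terms of the form $\|w_t-w^\star\|^2_{\beta^{T-t}H_t-\beta^{T-t+1}H_{t-1}}$, together with a boundary term. Each is bounded by $D^2$ times the spectral norm of the matrix difference. We split $\beta^{T-t}H_t-\beta^{T-t+1}H_{t-1}$ into two pieces:
\begin{itemize}
\item a factor $(1-\beta)\beta^{T-t}H_t$, which contributes $D^2\|H_t\|_2/\beta$ with $\|H_t\|_2\le R_{\mathrm{SG}}^2+\epsilon_0$, since $\|L_t\|_2,\|R_t\|_2\le R_{\mathrm{SG}}^2$ on the event of Lemma~\ref{Lemma::1};
\item a factor $H_t-H_{t-1}$, which is nonzero only at refresh times. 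Using $\|A^{1/2}\otimes B^{1/2}\|=\|A\|^{1/2}\|B\|^{1/2}$ and the ratio $\sqrt{(R_{\mathrm{SG}}^2+\epsilon_0)/\epsilon_0}$ between the largest and smallest eigenvalues of one factor, it produces the middle term $\frac{\sqrt{R_{\mathrm{SG}}^2+\epsilon_0}}{\sqrt{\epsilon_0}}D^2R_{\mathrm{SG}}^2$.
\end{itemize}
The initial term $\|w_1-w^\star\|^2_{H_0}$ with $H_0=\epsilon_0 I$ gives $\epsilon_0D^2$.

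\textbf{Gradient part.} We write $\hat P_t=P_t+(\hat P_t-P_t)$ and treat the two pieces separately.

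The fresh part, $\sum_t\beta^{T-t}\|g_t\|^2_{P_t}$, is handled by the AdaGrad-type log-determinant lemma adapted to EMA. By the Shampoo rank argument, $P_t\preceq \bigl((1-\beta)^{-1}\sum_s\beta^{t-s}\, r\,g_sg_s^\top+\epsilon_0 I\bigr)^{-1}$ up to constants for $p=2$. The concavity of $\log\det$ then gives
\begin{equation*}
\tfrac{r}{1-\beta}\bigl[\log\det(\cdot)_T-\log\det(\epsilon_0 I)\bigr]\le \tfrac{rmn}{1-\beta}\bigl[\log(\epsilon_0+(1-\beta^T)rR_{\mathrm{SG}}^2/mn)-\log\epsilon_0\bigr],
\end{equation*}
where the inequality follows from AM--GM on the eigenvalues together with the trace bound $\operatorname{tr}\le (1-\beta^T) r R_{\mathrm{SG}}^2$.

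The stale correction is bounded by $\|g_t\|_2^2\,\|P_t-\hat P_t\|_2$. We use the decomposition of \autoref{sec:objective} and Lemma~\ref{Lemma::1} with $p=2$, which gives $\|\Delta_L\|\lesssim \epsilon_0^{-3/2}(1-\beta^{\mathbf f})R_{\mathrm{SG}}^2$, while $\|\hat L_t^{-1/2}\|\le\epsilon_0^{-1/2}$. Their product is $\lesssim (1-\beta^{\mathbf f})R_{\mathrm{SG}}^2/\epsilon_0^{2}$. Multiplying by $\|g_t\|^2\le rR_{\mathrm{SG}}^2$ (via the rank and $\|G_t\|_F^2\le r\|G_t\|_2^2$) and summing the geometric weights, $\sum_t\beta^{T-t}\le 1/(1-\beta)$, yields the regret staleness error term.

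\textbf{Main obstacle.} The main obstacle is the distance part. With stale and discounted preconditioners, $\beta^{T-t}H_t\succeq \beta^{T-t+1}H_{t-1}$ can fail, because the EMA can decrease the statistics. The usual monotonicity-based telescoping is therefore unavailable, and we must pay the explicit refresh-jump term bounded crudely via the condition number. Getting exactly the stated constants there, and the $\sqrt{m}$ factor of Lemma~\ref{Lemma::1} absorbed into spectral rather than Frobenius norms, is where I expect the bookkeeping to be delicate. A union bound puts everything on the single probability-$(1-\delta)$ event of Lemma~\ref{Lemma::1}.
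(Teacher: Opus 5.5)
Your skeleton matches the paper's proof. It has the same steps: convexity; the per-step mirror-descent identity (the paper cites Lemma~3 of Gupta et al.); a discounted summation by parts splitting the distance sum into a $(1-\beta)\beta^{T-t}H_t$ piece plus refresh jumps (the paper's terms (i) and (ii)); and a gradient sum split into a stale--fresh gap, handled by Lemma~\ref{Lemma::1} with $p=2$, and a fresh part, handled by the rank lemma and a log-det potential. The paper writes the fresh part as FTL--BTL with $\Phi(H)=\log\det H+\epsilon\Tr(H^{-1})$, which is the same telescoping as your concavity argument.

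\textbf{The refresh-jump term.} As you describe it, this step does not give the stated bound. There are about $T/\mathbf f$ jumps with weights $\beta^{T-T_k+1}$, and $\sum_{k\ge2}\beta^{T-T_k+1}\le\beta^{\mathbf f}/(1-\beta^{\mathbf f})$. A per-jump bound using only the ratio $\sqrt{(R_{\mathrm{SG}}^2+\epsilon_0)/\epsilon_0}$ and the trivial $\|L_{T_k}-L_{T_{k-1}}\|_2\le R_{\mathrm{SG}}^2$ leaves an extra factor $\beta^{\mathbf f}/(1-\beta^{\mathbf f})\approx 1/(\mathbf f(1-\beta))$. That factor is not in the statement.

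The missing ingredient is the drift over one window. Unrolling the EMA gives $L_{T_k}-L_{T_{k-1}}=(1-\beta)\sum_{j=T_{k-1}+1}^{T_k}\beta^{T_k-j}G_jG_j^\top-(1-\beta^{\mathbf f})L_{T_{k-1}}$. The paper bounds this by $2(1-\beta^{\mathbf f})R_{\mathrm{SG}}^2$. Since it is a difference of two PSD matrices, each of norm at most $(1-\beta^{\mathbf f})R_{\mathrm{SG}}^2$, you even get $(1-\beta^{\mathbf f})R_{\mathrm{SG}}^2$. Combine this with:
\begin{itemize}
\item the Lipschitz constant $1/(2\sqrt{\epsilon_0})$ of $X\mapsto(X+\epsilon_0\mathbf I_m)^{1/2}$ on PSD matrices;
\item the inequality $\|A\otimes B-C\otimes D\|_2\le\|A-C\|_2\|B\|_2+\|C\|_2\|B-D\|_2$;
\item the fact that the undifferenced factors have norm at most $\sqrt{R_{\mathrm{SG}}^2+\epsilon_0}$.
\end{itemize}
Each jump then carries a factor $(1-\beta^{\mathbf f})$, which cancels the geometric sum. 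With the sharper drift bound, the total is $\beta^{\mathbf f}\sqrt{R_{\mathrm{SG}}^2+\epsilon_0}\,D^2R_{\mathrm{SG}}^2/\sqrt{\epsilon_0}$, within the stated middle term. Note that $\sqrt{R_{\mathrm{SG}}^2+\epsilon_0}/\sqrt{\epsilon_0}$ is not a condition number of one factor. The numerator is the norm of the other Kronecker factor, and the denominator comes from the Lipschitz constant for the drifting one.

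\textbf{Smaller repairs in the gradient part.} First, your domination has the constants inverted. Lemma~\ref{lem::6}, applied up to time $t$, gives $P_t^{-1}\succeq\epsilon_0\mathbf I_{mn}+\frac{1-\beta}{r}\sum_{s\le t}\beta^{t-s}g_sg_s^\top$, not $(1-\beta)^{-1}$ and $r$. To telescope against the outer weights $\beta^{T-t}$, use $\beta^{t-s}\ge\beta^{T-s}$ and $r\ge1$ to get $P_t\preceq r\hat H_t^{-1}$, where $\hat H_t:=\epsilon_0\mathbf I_{mn}+(1-\beta)\sum_{s\le t}\beta^{T-s}g_sg_s^\top$.
\begin{itemize}
\item The raw EMA matrix is not monotone in $t$, but $\hat H_t-\hat H_{t-1}=(1-\beta)\beta^{T-t}g_tg_t^\top$ matches the discount exactly.
\item Hence $(1-\beta)\beta^{T-t}g_t^\top\hat H_t^{-1}g_t\le\log\det\hat H_t-\log\det\hat H_{t-1}$.
\item This is where the prefactor $r$ and the term $(1-\beta^T)rR_{\mathrm{SG}}^2/mn$ come from. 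With this correction your final log-det display is right.
\end{itemize}

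Second, for the stale gap, use the two-term Kronecker difference inequality rather than the three-term decomposition. It gives $\|\hat P_t-P_t\|_2\le 2(1-\beta^{\mathbf f})R_{\mathrm{SG}}^2/\epsilon_0^2$ directly. With the factor $\eta/2$ and $\sum_t\beta^{T-t}\le1/(1-\beta)$, this is exactly the stated staleness term. With the three-term decomposition, the cross term $\Delta_R\otimes\Delta_L$ needs a separate bound, for example $\|\Delta_L(t)\|_2\le\epsilon_0^{-1/2}$, and costs a constant factor.
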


\myparagraph{Validation of the stability logic.}
The regret staleness error term in~\eref{eq:theorem3} scales linearly with the drift factor $(1-\beta^{\mathbf{f}})$ and inversely with $\epsilon
_0^{2}$. Thus, as the refresh period $\mathbf{f}$ increases to reduce computational overhead, $\epsilon_0$ must increase to prevent staleness-driven error accumulation.

\myparagraph{The risk of over-inhibition.}
The regret bound also reveals a countervailing constraint not visible at the operator level: the leading term includes $\epsilon_0 D^2$, reflecting the base cost of damping. Hence, overly large $\epsilon_0$ inflates the regret through the distance component, producing an intrinsic ceiling on admissible damping. Overall, stable and efficient stale Shampoo requires controlling the operator gap while tuning $\epsilon$ to balance inverse-root stability against preconditioning strength, rather than fixing it a priori.

\section{Proposed Method: \alg Update}
\label{sec:method}
To address the staleness error
---as established in~\autoref{sec:theorem}---we propose the
\textbf{F}requency and \textbf{O}perator-error based \textbf{A}daptive Damping \textbf{M}ethod (\alg).
The key idea is to treat the spectral refresh frequency and the damping level $\epsilon$ as \emph{coupled} control variables. In the following, we detail the procedure of \alg.

\vspace{-5pt}
\begin{algorithm}[h]
    \caption{\alg factor-wise update rule at $t$ ($L$ factor)}
    \label{alg:foam}
    \begin{algorithmic}[1]
        \REQUIRE $L_t$; $\mathcal{S}_{t-1}^L = (Q_L,D_L)$; $\hat{L}_{t-1}^{-1/p}$; $\epsilon_{t-1}^L$; $t$
        \HParam Check period $\mathbf{f}$; max damping $\epsilon_{\max}$; base damping $\epsilon_0 \in (0,\epsilon_{\max})$; threshold $\tau \in (0,1)$

        \IF{$t = 1$}
            \STATE Compute $Q_L D_L Q_L^\top = \code{EVD}(L_t)$
            \STATE $\epsilon_t^L \leftarrow \epsilon_0$
            \STATE $\hat{L}_{t}^{-1/p} \leftarrow Q_L (D_L + \epsilon_t^L \mathbf{I}_m)^{-1/p} Q_L^\top$
            \STATE $\mathcal{S}_t^L \leftarrow (Q_L,D_L)$

        \ELSIF{$(t-1) \bmod \mathbf{f} = 0$}
            \STATE \textbf{\textcolor{Skyblue}{/* 1. Compute error proxy */}}
            \STATE Compute $h_t^L = \mathrm{RC}_t^L(\epsilon_{t-1}^L)\cdot \alpha_t^L(\epsilon_{t-1}^L)/p$

            \STATE \textbf{\textcolor{Skyblue}{/* 2. Update damping */}}
            \STATE $\tilde{\epsilon}_t^L \leftarrow \max\!\left(\epsilon_0,\; \epsilon_{t-1}^L\frac{h_t^L}{\tau}\right)$

            \IF{$\tilde{\epsilon}_t^L \le \epsilon_{\max}$}
                \STATE \textbf{\textcolor{Skyblue}{/* No EVD: Only apply updated damping */}}
                \STATE $\epsilon_t^L \leftarrow \tilde{\epsilon}_t^L$
                \STATE $\hat{L}_{t}^{-1/p} \leftarrow Q_L (D_L + \epsilon_t^L \mathbf{I}_m)^{-1/p} Q_L^\top$
                \STATE $\mathcal{S}_t^L \leftarrow \mathcal{S}_{t-1}^L$
            \ELSE
                \STATE \textbf{\textcolor{Skyblue}{/* Refresh by EVD and reset damping */}}
                \STATE Compute $Q_L D_L Q_L^\top = \code{EVD}(L_t)$
                \STATE $\epsilon_t^L \leftarrow \epsilon_0$
                \STATE $\hat{L}_{t}^{-1/p} \leftarrow Q_L (D_L + \epsilon_t^L \mathbf{I}_m)^{-1/p} Q_L^\top$
                \STATE $\mathcal{S}_t^L \leftarrow (Q_L,D_L)$
            \ENDIF

        \ELSE
            \STATE $\hat{L}_{t}^{-1/p} \leftarrow \hat{L}_{t-1}^{-1/p}$
            \STATE $\epsilon_t^L \leftarrow \epsilon_{t-1}^L$
            \STATE $\mathcal{S}_t^L \leftarrow \mathcal{S}_{t-1}^L$
        \ENDIF

        \STATE \textbf{return} $(\hat{L}_{t}^{-1/p}, \mathcal{S}_t^L, \epsilon_t^L)$
    \end{algorithmic}
\end{algorithm}
\vspace{-15pt}

\subsection{Update Inverse-root Factors for Shampoo via \alg}
The objective of \alg is to reduce the number of costly eigendecompositions while maintaining a bounded operator gap.
It does so via a feedback control loop over $\epsilon$, using an inexpensive proxy that can be evaluated in the stale eigenspaces.

The update mechanism consists of three phases, summarized in~\autoref{alg:foam}.
For brevity, we state the update only for the left Kronecker factor $L_t$. 
The right factor $R_t$ is updated by the same rule with the substitutions 
$L \mapsto R$, $m \mapsto n$, $Q_L \mapsto Q_R$, $D_L \mapsto D_R$, and 
$\epsilon_t^L \mapsto \epsilon_t^R$. For the remainder of Section~\ref{sec:method}, we use one-factor notation $h_t, \epsilon_t, \hat{L}_t$ for brevity; the right factor follows by the corresponding substitutions.
Within the overall Shampoo procedure in~\autoref{alg:shampoo},\ref{alg:foam} defines the update criterion (indicated in the blue box, $\mathcal{U}(\cdot)$) that governs both damping adaptation and refresh decisions.

\myparagraph{Phase 1: Sensing (operator gap estimation).}
This phase estimates whether the current damping level $\epsilon_{t-1}$ is sufficient 
to suppress the operator gap induced by the reuse.
Rather than computing a fresh eigendecomposition to evaluate the true inverse-root operator gap, \alg uses a proxy:
\begin{equation*}
    \vspace{-2pt}
    h_t =\frac{\alpha(\epsilon_{t-1})}{p}\,\text{RC}(\epsilon_{t-1}),
    \vspace{-2pt}
\end{equation*}
where RC$(\cdot)/p$ measures the mismatch between the current preconditioners and the stale ones, and $\alpha(\epsilon)$ is a factor that converts this mismatch into a tighter estimate of the operator gap. Detailed definitions and the theoretical derivation of these terms are provided in~\autoref{sec:method_theory}

\myparagraph{Phase 2: Adapting (dynamic damping).}
Given the proxy $h_t$, \alg enforces the stability condition derived in Section~\ref{sec:theory1} via a multiplicative feedback rule:
\begin{equation*}
    \vspace{-2pt}
    \epsilon_t \;\leftarrow\; \max\!\left(\epsilon_0,\; \epsilon_{t-1}\cdot\frac{h_t}{\tau}\right).
    \vspace{-2pt}
\end{equation*}
If the estimated error $h_t$ exceeds a threshold $\tau$, we increase $\epsilon_t$ to stabilize the update. If $h_t \le \tau$, we decrease $\epsilon_t$ towards its baseline $\epsilon_0$. Effectively, $\epsilon_t$ serves as an inhibitor that suppresses staleness errors. This allows \alg to reuse old statistics efficiently while keeping the total error within a safe, manageable range.

\myparagraph{Phase 3: Frequency control (triggered refresh).}
While increasing $\epsilon$ helps stability, too much of it can reduce the benefits of the optimizer, as analyzed in~\autoref{sec:theory2}. To prevent this, \alg sets a maximum limit, $\epsilon_{\text{max}}$. We continue to reuse old statistics as long as $\epsilon$ stays below this limit to save time. However, if the error becomes so large that $\epsilon$ exceeds $\epsilon_{\text{max}}$, \alg triggers a full refresh of the statistics and resets $\epsilon$ to its baseline. This design ensures we perform expensive updates only when damping is no longer sufficient to maintain stability.

\subsection{Construction of the Error Proxy $h_t$}
\label{sec:method_theory}
We now introduce the error proxy $h_t$, a tractable surrogate designed to estimate the operator gap $\Delta$ using only the available statistics. It serves as a real-time sensor that monitors operator error to determine exactly when $\epsilon$ must be adjusted to maintain stability. The design of $h_t$ is governed by three primary considerations: theoretical grounding in perturbation theory, actionable control logic for $\epsilon$, and computational tractability.

\myparagraph{Relative perturbation as a stability metric.}
The decision to refresh a preconditioner should be based not on the absolute reuse error $\lVert \Delta_L(t) \rVert_F$, but on the relative changes it induces in the preconditioner- \ie $\lVert \Delta_L(t) \rVert_F$ normalized by the norm of the stale preconditioner currently in use.
Drawing from matrix perturbation theory~\citep{horn2012matrix, bhatia2013matrix}, we define our sensing objective as controlling the relative inverse-root operator gap: $\frac{\|\Delta_L(t)\|_F}{\|(\hat{L}_t)^{-1/p}\|_F}$.
Unlike standard diagonalization residuals~\citep{eschenhagen2026purifying}, which mainly track how well an eigenspace diagonalizes the current statistic, the relative gap measures the fidelity of the inverse-root mapping itself.
This mapping is anisotropic: it is far more sensitive to perturbations that act on (or mix into) low-spectrum directions, where small drifts can be strongly amplified by the inverse-root.
By normalizing the magnitude of the damped operator in use, the relative operator gap yields a scale-free health signal that is more directly tied to the numerical stability of the Shampoo update  (refer to Appendix~\ref{app:I} for a detailed comparison).

\myparagraph{Multiplicative damping via scaling logic.}
We use the proxy $h_t$ as a feedback signal to update damping at check steps.
The key point is that $h_t$ depends on $\epsilon$ through the whitening factors $(\hat L_t+\epsilon\mathbf I_m)^{-1/2}$: increasing $\epsilon$ down-weights low-eigenvalue directions, precisely where inverse-root amplification is strongest, and therefore reduces the \emph{relative} operator-error signal captured by $RC(\epsilon)$.
This motivates a multiplicative feedback update (with a floor at $\epsilon_0$): if $h_t>\tau$ we increase $\epsilon$ to suppress sensitivity and restore stability, whereas if $h_t\le\tau$ we relax $\epsilon$ toward $\epsilon_0$ to preserve preconditioning strength.

\begin{figure*}[t]
    \centering
    \begin{subfigure}{0.24\textwidth}
        \centering
        \includegraphics[width=\linewidth]{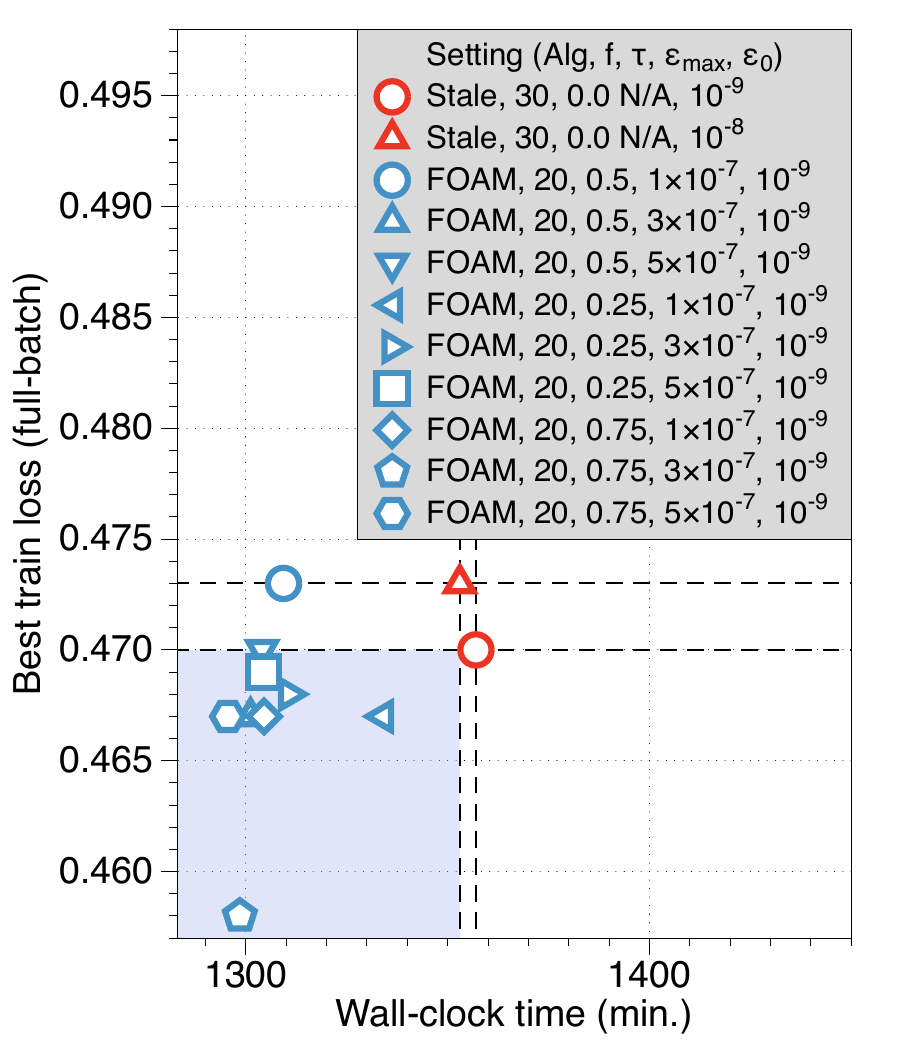}
        \caption{Train loss (ViT)}
        \label{fig:vit_loss}
    \end{subfigure}
    \begin{subfigure}{0.24\textwidth}
        \centering
        \includegraphics[width=\linewidth]{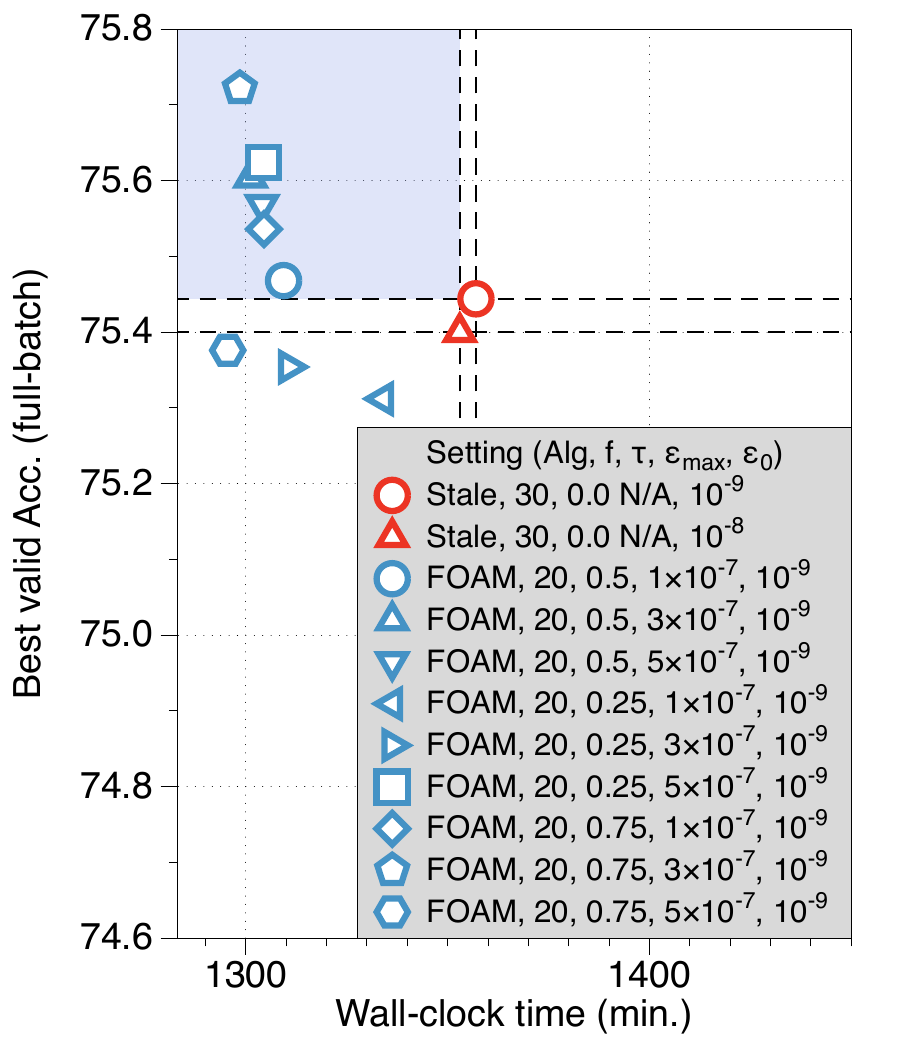}
        \caption{Validation Acc. (ViT)}
        \label{fig:vit_acc}
    \end{subfigure}
    \hfill
    \centering
    \begin{subfigure}{0.24\textwidth}
        \centering
        \includegraphics[width=\linewidth]{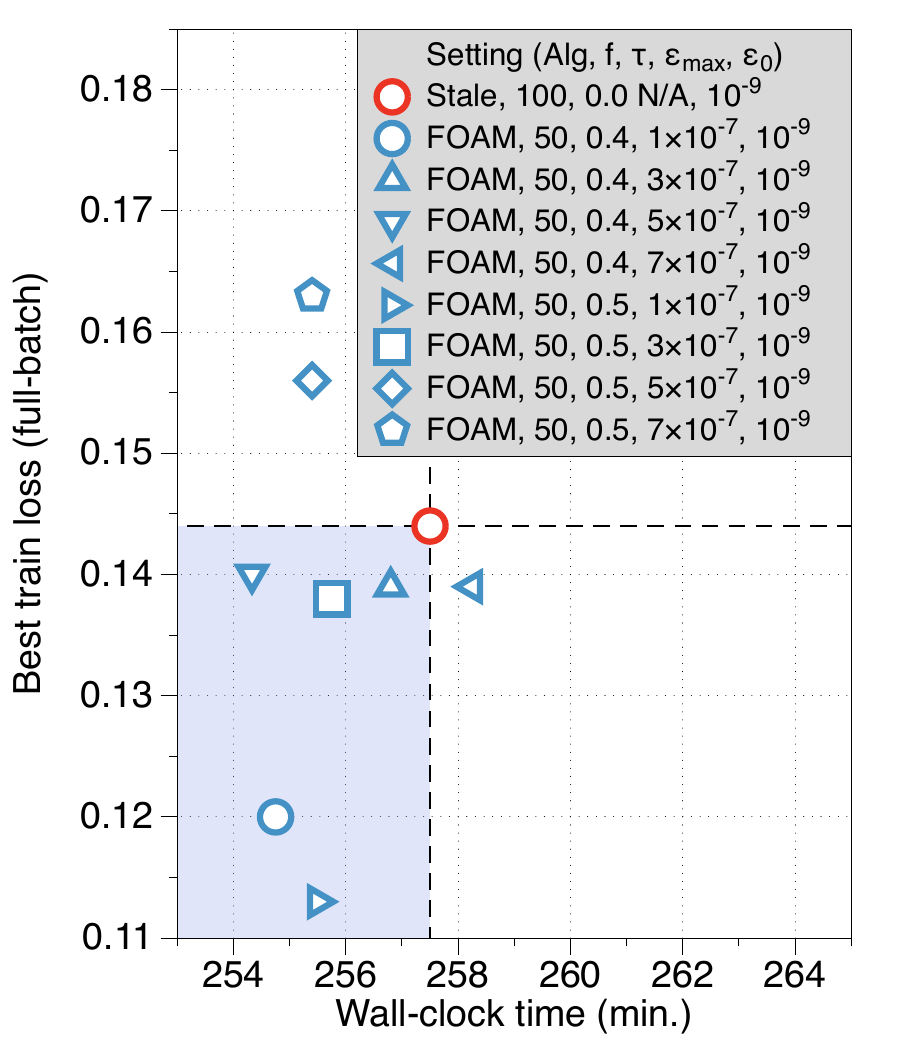}
        \caption{Train loss (Conformer)}
        \label{fig:conformer_loss}
    \end{subfigure}
    \begin{subfigure}{0.24\textwidth}
        \centering
        \includegraphics[width=\linewidth]{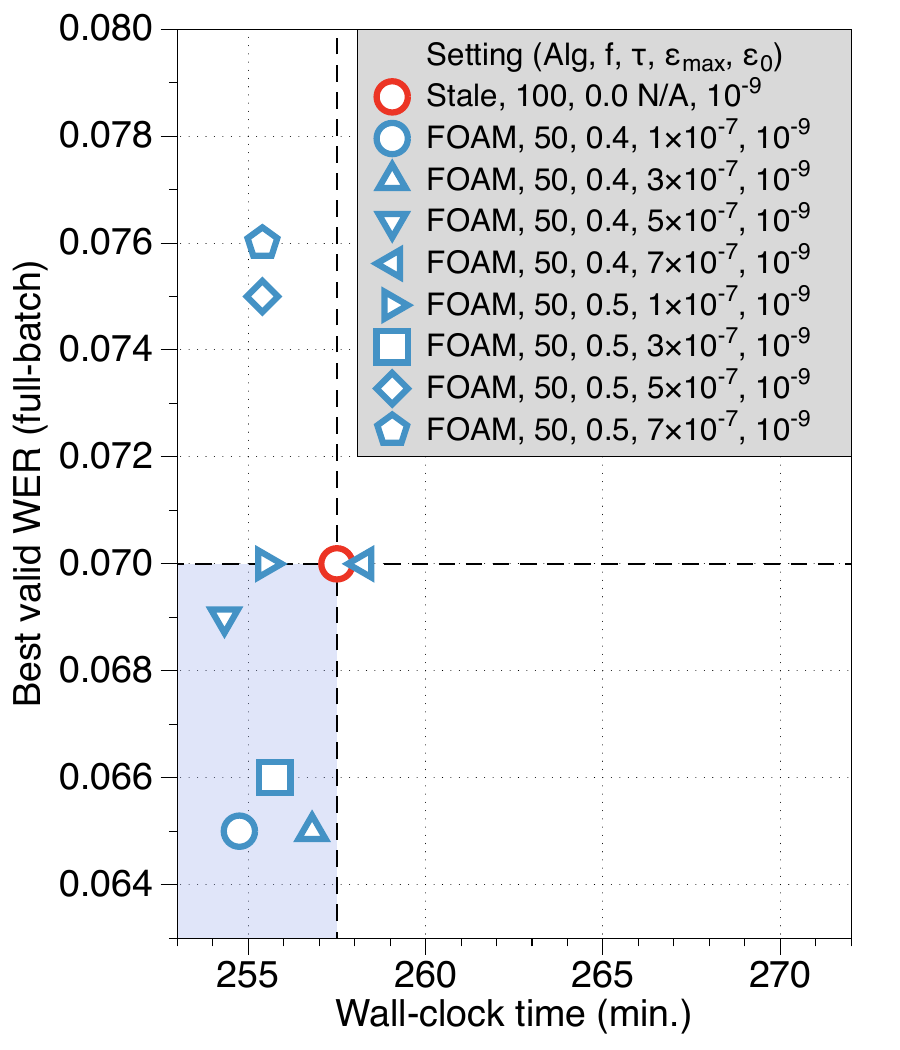}
        \caption{Validation WER (Conformer)}
        \label{fig:conformer_wer}
    \end{subfigure}
    \caption{Wall-clock efficiency comparison between Shampoo with stale update and \alg update. Figure~\ref{fig:vit_loss}-\ref{fig:vit_acc} presents the best training loss and validation accuracy for the ViT (ImageNet-1K) task, while Figure~\ref{fig:conformer_loss}-\ref{fig:conformer_wer} shows the training loss and WER for the Conformer (LibriSpeech) task. In all plots, the \textbf{blue rectangular areas} represent the region of superior performance, where \alg achieves better convergence and final metrics in significantly less wall-clock time than the stale-update baselines.}
    \label{fig:result}
    \vspace{-3pt}
\end{figure*}

\myparagraph{Derivation of the tractable relative operator error.}
While the exact relative operator gap is the ideal metric, computing it precisely at every step would require an eigendecomposition, which defeats the purpose of our method. We therefore seek a proxy that (i) is computable in the stale eigenspaces and (ii) upper-bounds the relative change induced by staleness.

To this end, we approximate the relative operator error and analyze the resulting first-order term.
we obtain a bound on the \emph{relative} magnitude of the first-order approximation $\Delta_L^{(1)}$:
\begin{equation}
\label{rel::err}
\scalebox{0.94}{$
    \frac{\|\Delta^{(1)}_L\|_F}{\|\hat L_t^{-1/p}\|_F}
        \;\le\;
\frac{\alpha}{p}\;
\Big\|(\hat L_t)^{-1/2}\,(L_t -\hat L_t)\,(\hat L_t)^{-1/2}\Big\|_F
$}
\end{equation}
where $\alpha(\epsilon) := \|\hat L_t^{-1/p}\|_2/\|\hat L_t^{-1/p}\|_F$ and $\text{RC}(\epsilon_{t-1}) := \| (\hat L_t)^{-1/2} (L_t - \hat L_t) (\hat L_t)^{-1/2} \|_F$.
We call the RHS of~\eref{rel::err} as $h_t$. A complete derivation of~\eref{rel::err} is provided in Appendix \ref{app:H}. We also present a synthetic experiment in Appendix~\ref{app:K} that shows that our proxy is a better approximation of the true relative error than the diagonalization error.

\section{Experiments}
\label{sec:exp_results}

In this section, we evaluate the performance of the proposed algorithm across several large-scale deep learning tasks. 

\subsection{Experimental Setup}
This section details the experimental configurations and benchmarking protocols used to evaluate the wall-clock efficiency and robustness of \alg diverse large-scale tasks. 

\myparagraph{Tasks and benchmarks.}
We evaluate the performance of \alg on three distinct large-scale benchmarks representing different modalities: (1) image classification on ImageNet-1K~\cite{deng2009imagenet} using a Vision Transformer (ViT-small)~\cite{dosovitskiy2021an}, (2) automatic speech recognition (ASR) on LibriSpeech~\cite {panayotov2015librispeech} using a Conformer architecture~\cite{gulati2020conformer} and (3) language modeling on Wikitext-103-v1~\cite{merity2016pointer} using GPT-2~\cite{radford2019language}. These tasks exhibit diverse spectral characteristics, enabling us to assess the robustness of our adaptive sensing mechanism across varying gradient structures. For the result of GPT-2, refer to Appendix~\ref{app:J}.

\myparagraph{Implementation details.}
Our implementation is integrated into the AlgoPerf~\cite{kasimbeg2025accelerating} framework to provide a standardized, reproducible benchmarking environment. All models are trained for a fixed budget unless otherwise specified. For \alg, we additionally sweep over the sensing frequency $\mathbf{f}$, tolerance $\tau$, and maximum damping $\epsilon_\text{max}$, reporting the configuration that achieves the lowest full-batch loss.

\myparagraph{Evaluation metrics.}
The primary metric for comparison is the total wall-clock time to run fixed epochs. For the vision task, we track training loss and top-1 validation accuracy. For the speech task, we monitor training loss and Word Error Rate (WER). For the language task, we monitor training loss and Perplexity (PPL). A method is considered superior if it achieves comparable or better final performance (accuracy/WER/PPL) while reducing total wall-clock time by decreasing the computational overhead of \code{EVD}.

\myparagraph{Experimental environment.}
Experiments are conducted on high-performance GPU clusters to measure actual wall-clock throughput. For the ViT benchmarks, we use 4 A6000 GPUs, whereas the Conformer tasks are executed on 4 RTX Pro 6000 GPUs. Specific ablation studies are performed on a smaller scale using 2 RTX Pro 6000 GPUs. All software dependencies, including the CUDA version and framework specifications, are detailed in~\autoref{app:J}.

\subsection{Experimental Results}
Across both ViT and Conformer benchmarks, \alg exhibits a significant wall-clock advantage while matching or exceeding the final performance of Shampoo with a stale update, as illustrated in~\autoref{fig:result}. Specifically, for the ViT task, \alg achieves lower training loss and higher validation accuracy in substantially less time than the stale baselines. A similar trend is observed in the Conformer experiments, where \alg achieves a lower WER in less time. The blue rectangular regions in each plot highlight the area of superior performance in which \alg resides.

Furthermore, these gains are consistent across a broad range of hyperparameters. A comprehensive sweep over ($\mathbf{f}, \tau, \epsilon_{\text{max}}, \epsilon_0$) shows that the majority of tested configurations fall within the superior blue regions, demonstrating that the wall-clock advantage of \alg is robust and not dependent on a single tuned setting. This stability confirms the effectiveness of the proxy-driven controller in maintaining performance across various training scenarios. Lastly, we refer to~\autoref{app:J} for comparison experiments with other optimizers (\eg AdamW, SOAP).

\subsection{Analysis}
We provide a detailed examination of \alg's internal dynamics, investigating the evolution of the adaptive damping parameter $\epsilon$, the relative reduction in eigendecomposition frequency for Kronecker factors, and the performance impact of its core components through systematic ablation studies on ViT. For these analysis, the hyperparameter configurations are set to $(\mathbf{f}, \tau, \epsilon_{\text{max}}, \epsilon_0) = (20, 0.75, 3\times10^{-7}, 10^{-9})$ for ViT and $(50, 0.4, 1\times 10^{-7}, 10^{-9})$ for Conformer. Other hyperparameters are referred to in Appendix~\ref{app:J}.

\begin{figure}[h] 
    \centering
    \includegraphics[width=0.2\textwidth]{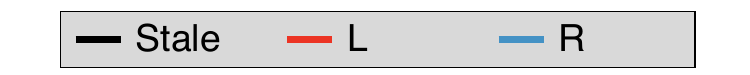}\\
    \centering
    \begin{subfigure}[b]{0.49\columnwidth}
        \centering
        \includegraphics[width=\textwidth]{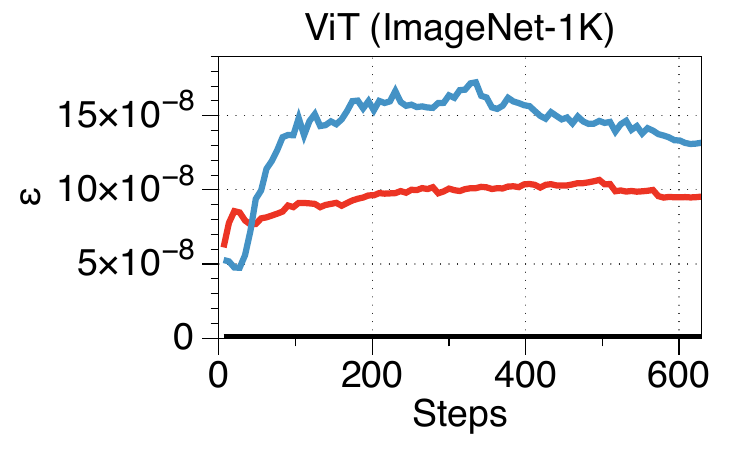}
    \end{subfigure}
    \begin{subfigure}[b]{0.49\columnwidth}
        \centering
        \includegraphics[width=\textwidth]{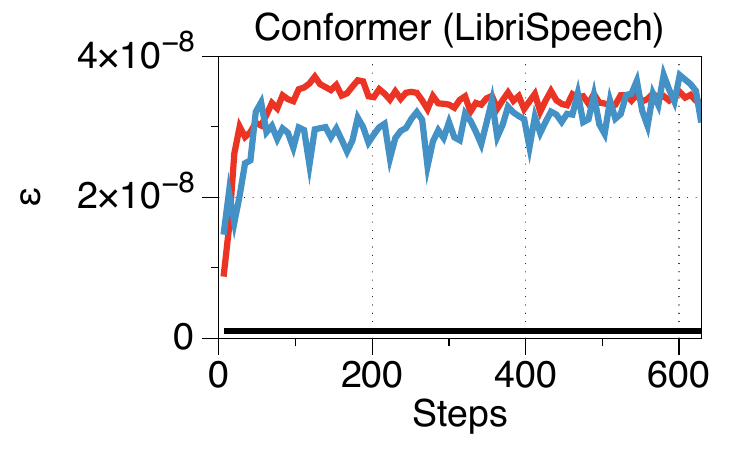}
    \end{subfigure}
    \vspace{-5pt}
    \caption{$\epsilon$ dynamics compared to the iterations.}
    \vspace{-10pt}
    \label{fig:eps_dynamics}
\end{figure}

\myparagraph{The evolution of adaptive damping $\epsilon$.}
\autoref{fig:eps_dynamics} illustrates the adaptive $\epsilon$ dynamics of \alg, contrasting it with the low, static damping maintained by the stale update baseline. While the stale baseline relies on a fixed, minimal value, \alg adaptively raises $\epsilon_t$ as a compensatory mechanism to counteract spectral drift and ensure numerical robustness during the sensing phase. This proactive adjustment keeps the model stable even as the age of the Kronecker factors increases. Crucially, to prevent the side effect of over-damping — which can degrade preconditioning quality — the controller incorporates a modulation strategy that regulates the growth and relaxation of $\epsilon_t$. This feedback-driven mechanism ensures that damping remains within a safe and effective range, balancing error suppression with optimal convergence throughout training.

\begin{figure}[h!] 
    \centering
    \includegraphics[width=0.2\textwidth]{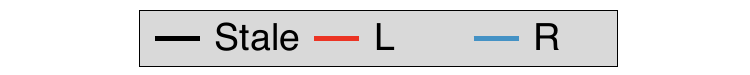}\\
    \centering
    \begin{subfigure}[b]{0.49\columnwidth}
        \centering
        \includegraphics[width=\textwidth]{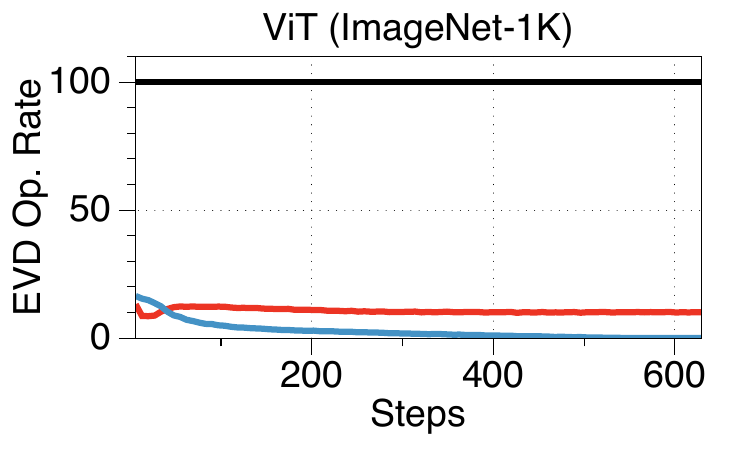}
    \end{subfigure}
    \begin{subfigure}[b]{0.49\columnwidth}
        \centering
        \includegraphics[width=\textwidth]{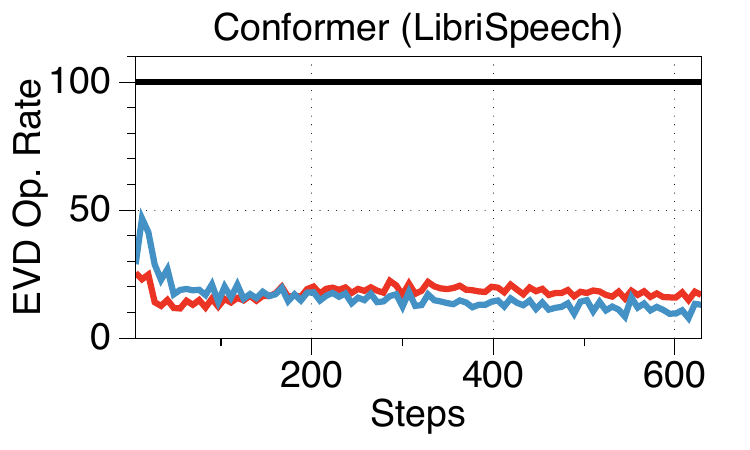}
    \end{subfigure}
    \vspace{-5pt}
    \caption{Relative rate of \code{EVD} function calls.}
    \vspace{-10pt}
    
    \label{fig:evd_dynamics}
\end{figure}

\myparagraph{\code{EVD} frequency.}
\autoref{fig:evd_dynamics} quantifies the reduction in computational overhead by comparing the \code{EVD} frequency of \alg against the fixed-cadence stale update method, normalized to $100\%$. While the stale baseline performs \code{EVD} updates at every scheduled interval regardless of necessity, \alg triggers refreshes only when the error proxy $h_t$ indicates significant spectral drift. In the ViT case, the number of \code{EVD} for $L$ and $R$ drops to approximately $10\%$ and $5\%$, respectively. A similar trend is observed in the Conformer benchmark. This selective refresh mechanism demonstrates that a large portion of scheduled updates in traditional stale preconditioning is computationally redundant, and by bypassing them, \alg achieves direct wall-clock gains.

\begin{figure}[h] 
    \centering
    \includegraphics[width=0.3\textwidth]{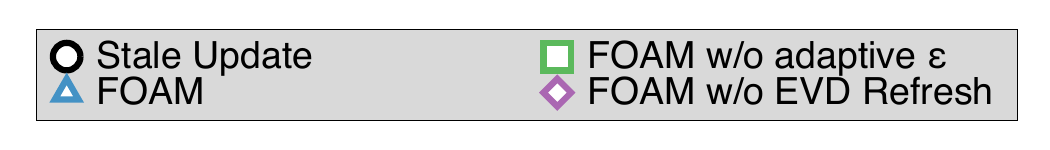}\\
    \centering
    \begin{subfigure}[b]{0.49\columnwidth}
        \centering
        \includegraphics[width=\textwidth]{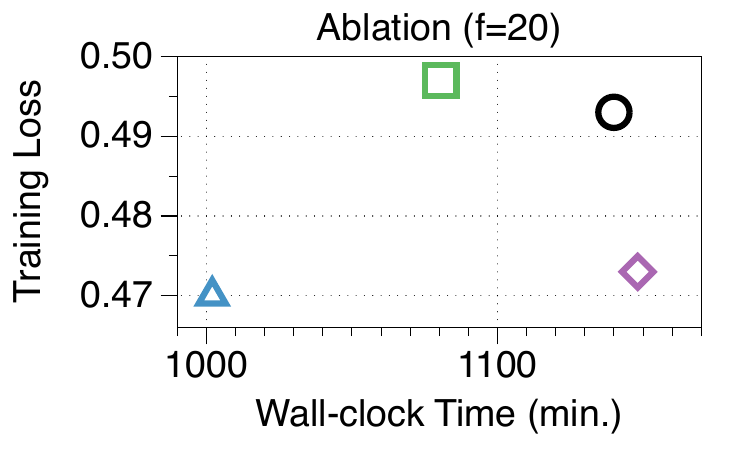}
    \end{subfigure}
    \begin{subfigure}[b]{0.49\columnwidth}
        \centering
        \includegraphics[width=\textwidth]{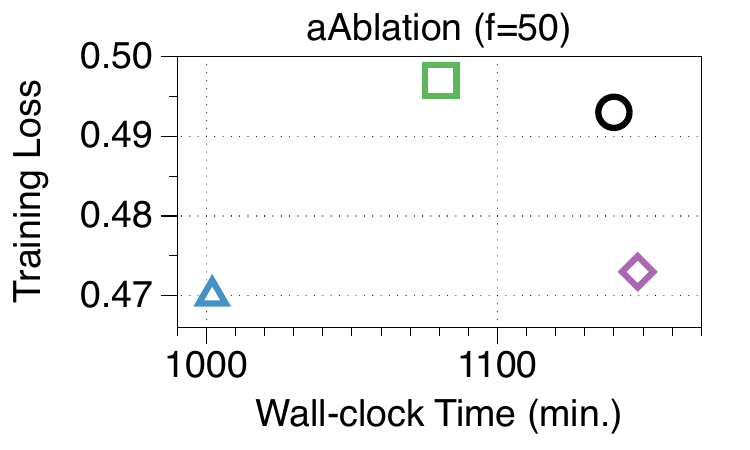}
    \end{subfigure}
    \caption{Ablation study of \alg.}
    \vspace{-10pt}
    \label{fig:ablation}
\end{figure}

\myparagraph{Ablation study.}
\autoref{fig:ablation} presents an ablation study on the ViT architecture, demonstrating the critical synergy between adaptive damping and selective \code{EVD} refreshes. When increasing the interval $\mathbf{f}$ to gain computational efficiency without utilizing adaptive $\epsilon$ compensation (green square), the training loss increases significantly as the stale preconditioner fails to account for spectral drift. Conversely, relying solely on $\epsilon$ control without triggering \code{EVD} refreshes (purple diamond) can reduce the training loss, but at the cost of significantly higher wall-clock time. Only the full configuration of \alg (blue triangle) achieves the balance, simultaneously minimizing both training loss and computational overhead. These results confirm that integrating proxy-driven damping adaptation and well-timed basis refreshes is essential for maintaining high performance within a reduced wall-clock budget.

\begin{figure}[h!] 
    \centering
    \includegraphics[width=0.2\textwidth]{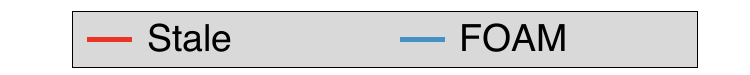}\\
    \centering
    \begin{subfigure}[b]{0.49\columnwidth}
        \centering
        \includegraphics[width=\textwidth]{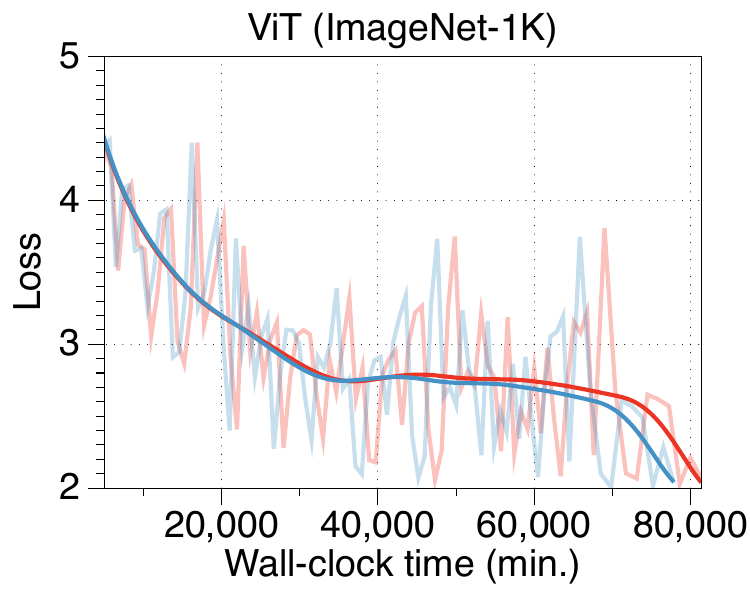}
    \end{subfigure}
    \begin{subfigure}[b]{0.49\columnwidth}
        \centering
        \includegraphics[width=\textwidth]{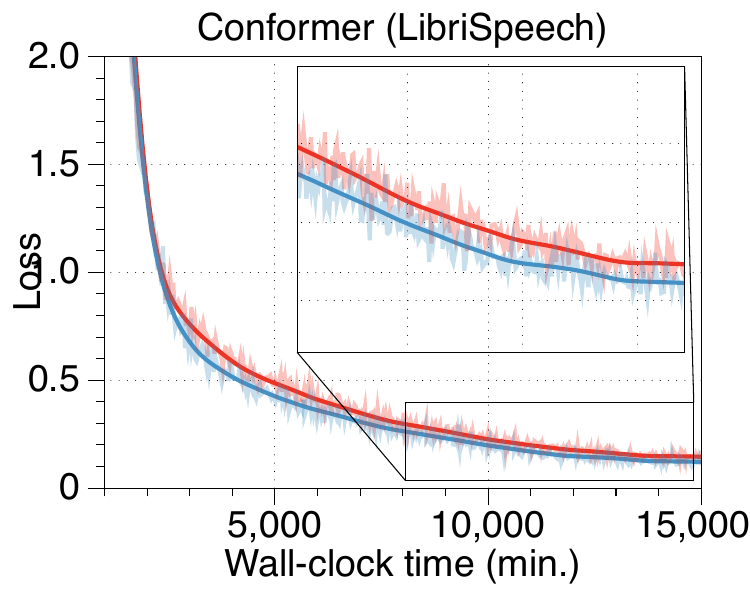}
    \end{subfigure}
    \caption{Learning curve. (Wall-clock time vs Loss)}
    \label{fig:learningcurve}
\end{figure}

\myparagraph{Learning curve.}
As described in~\autoref{fig:learningcurve}, \alg consistently achieves lower training loss compared to Stale across both the ViT and Conformer datasets. The magnified insets further reveal that \alg maintains a superior convergence throughout training, confirming that its adaptive feedback mechanism successfully preserves and stabilizes the optimization while operating at higher computational efficiency.

\section{Conclusion}
\label{sec:conclusion}
We presented \alg, a mathematically grounded framework that resolves the computational bottleneck of preconditioned optimization by decoupling spectral sensing from costly refreshes. By introducing an error proxy and a feedback controller for adaptive damping, our method ensures stability while reducing the frequency of eigendecompositions—often by more than $80\%$ relative to a stale Shampoo baseline. Our extensive evaluation demonstrates that \alg consistently achieves significant wall-clock speedups without sacrificing convergence quality or final performance across diverse modalities. Ultimately, \alg bridges the gap between the superior optimization trajectories of Shampoo-style methods and the practical efficiency required for large-scale deep learning.

\section*{Acknowledgements}
This work was partly supported by Institute of Information \& communications Technology Planning \& Evaluation (IITP) grant funded by the Korea government(MSIT) (RS-2025-25464461, AI's Vision of Harmony: A Fair and Transparent Multimodal Agentic Platform for Conflict Mediation), and the Technology Innovation Program (RS-2025-02653102, An Unbreachable Multilayer AI-based Security Mechanism that Continuously Adapts and Evolves in Dynamic Conditions) funded by the Ministry of Trade Industry \& Energy (MOTIE, Korea).

\section*{Impact Statement}
This paper presents \alg, which aims to improve the computational efficiency and numerical robustness of preconditioned optimization in large-scale machine learning. By providing a theoretical foundation for managing curvature staleness through adaptive damping, this work aims to bridge the gap between high-fidelity optimization and practical training speed-ups. The broader impact of this research is twofold:
\begin{itemize}[leftmargin=10pt]
    \item Resource Efficiency: By accelerating convergence and reducing hyperparameter sensitivity, our approach contributes to the sustainable development of AI, potentially lowering the energy consumption and carbon footprint associated with training large models.
    \item Training Reliability: Enhancing the stability of the preconditioned method is crucial to developing more reliable and trustworthy AI systems, as it mitigates the risk of catastrophic divergence during resource-intensive training runs.
\end{itemize}
We believe that making sophisticated optimization techniques more accessible and stable is essential for the continued progress of the machine learning community.

\bibliography{ref}
\bibliographystyle{icml2026}
\onecolumn
\icmltitle{--Supplementary--\\\mytitle}
\let\addcontentsline\oldaddcontentsline
\etocsettocstyle{\section*{Table of Contents}}{}
\appendix
\localtableofcontents
\onecolumn
\section{Limitations and Future Works}
\label{app:A}
While \alg demonstrates robustness and efficiency, several avenues for improvement and extension remain:
\begin{itemize}
    \item \textbf{Theoretical extensions to realistic settings:}
    Our current regret analysis focuses on convex settings. Extending this to non-convex (and possibly non-smooth) online learning regimes would provide guarantees that better reflect the dynamics of deep learning~\citep{ahn2025general, ahn2024understanding, cutkosky2023optimal}. Furthermore, our operator error bounds rely on sub-Gaussian assumptions; deriving analogous stability results under weaker conditions, such as bounded variance, represents a crucial step toward bridging the gap between theory and practice.

    \item \textbf{Reducing Hyperparameter complexity:}
    Experimentally, \alg has consistently demonstrated superior efficiency and robustness compared to Stale Shampoo across a wide range of $\epsilon_{\max}$ and $\tau$. However, introducing two new hyperparameters expands the search space relative to the original Shampoo. Therefore, research aimed at minimizing the tuning cost for these parameters remains a critical direction for future work. 
\end{itemize}

\paragraph{Toward \alg with SOAP-style (eigengap-aware motivation).}
SOAP updates use only the eigenbases $Q_L, Q_R$ of Shampoo's Kronecker factors to rotate the gradients into an approximate eigenspace, apply an Adam step, and rotate back; the eigenvalues themselves do not explicitly enter the update. Consequently, when $Q_L, Q_R$ are refreshed only periodically (e.g., via QR-iteration), the dominant staleness effect is not merely a small ``operator error'' in the factors, but eigenspace drift: stale bases rotate gradients into wrong coordinates, so even a perfectly well-behaved diagonal update is applied along incorrect directions. Standard subspace-perturbation theory shows that the size of this drift is controlled by a ratio of the form ``statistics drift/eigengap'' (refer to Theorem~\ref{thm::eigenspace}), so the same amount of change in the Kronecker factors can induce dramatically different basis errors depending on how well-separated the relevant eigenvalues are. This suggests that any Soap with \alg style analysis should explicitly account for \textbf{eigengap-dependent} sensitivity when reasoning about (and bounding) the error induced by stale eigenbases updates.

\section{Notation and Preliminary}
\subsection{Notation}
The vector space of $m\times n$ matrices is denoted by $\mathbb{R}^{m\times n}$, and the $m \times m$ zero matrix and identity matrices are denoted by $\mathbf{0}_m$ and $\mathbf{I}_m$, respectively.
We denote by $\mathbb{S}_n$ the set of symmetric $n\times n$ matrices and by $\mathbb{S}_{n,+}$ the cone of symmetric positive definite matrices. For $A\in\mathbb{R}^{m\times n}$, the trace is $\Tr(A)$.
The inner product is $\langle A,B\rangle := \Tr(A^\top B)$, the Frobenius norm is $\|A\|_F := \sqrt{\langle A,A\rangle}$, and the spectral norm is $\|A\|_2 := \max_{\|x\|_2=1}\|Ax\|_2$.
Given $H\in\mathbb{S}_{n,+}$, the energy norm and its dual are
\begin{equation*}
    \|x\|_H := \sqrt{x^\top Hx},
    \qquad
    \|x\|_H^{\star} := \sqrt{x^\top H^{-1}x}.
\end{equation*}
For $A\in\mathbb{S}_n$, we denote its eigenvalues in descending order by $\{\lambda_i(A)\}_{i=1}^n$ (or simply $\{\lambda_i\}_{i=1}^n$ when clear from context).
The Kronecker product of $A\in\mathbb{R}^{m\times n}$ and $B\in\mathbb{R}^{p\times q}$ is denoted by $A\otimes B\in\mathbb{R}^{mp\times nq}$.
For $A\in\mathbb{R}^{m\times n}$ with rows $a_1,\dots,a_m$, we define $\operatorname{vec}(A) := (a_1\ a_2\ \cdots\ a_m)^\top \in \mathbb{R}^{mn}$.
We write $A \succeq 0$ (resp.\ $A \succ 0$) to indicate that $A$ is positive semidefinite (resp.\ positive definite). The Loewner order $A \succeq B$ means $A-B \succeq 0$.
Finally, for $A\in\mathbb{S}_{n,+}$ and $\alpha\in\mathbb{R}$, the matrix power is defined by
$A^\alpha := P D^\alpha P^\top$, where $A = P D P^\top$ is an eigen-decomposition.
If $A$ is merely PSD (not PD), $A^\alpha$ is well-defined for $\alpha \ge 0$. In the Appendix, we denote the base damping factor in Shampoo as $\epsilon > 0$.

\subsection{Preliminary}
\label{app:B.2}
This subsection collects the background tools used in our analysis. We briefly review (i) the interpretation of preconditioning as reparameterization, (ii) discounted regret in Online Convex Optimization (OCO), and (iii) key results from matrix perturbation theory.

\myparagraph{Preconditioned gradient method}
Preconditioned gradient methods can be viewed through reparameterization.
Let us consider
\begin{align*}
    &\min_{w \in \mathbb{R}^m} \ell(w) \\
    w &= R\hat{w}
    \quad \Leftrightarrow \quad
    \hat{w} = R^{-1}w,
\end{align*}
where $R \in \mathbb{R}^{n\times n}$ is invertible and $\hat{\ell}(\hat{w}) := \ell(R\hat{w})$ is defined.
By the chain rule,
\begin{equation*}
    \nabla_{\hat{w}} \hat{\ell}(\hat{w}) = R^\top \nabla_w \ell(w).
\end{equation*}
A gradient step in the transformed space,
\begin{equation*}
    \hat{w}_{t+1} = \hat{w}_t - \eta \nabla_{\hat{w}}\hat{\ell}(\hat{w}_t),
\end{equation*}
corresponds in the original space to
\begin{equation*}
    w_{t+1}
    = R\hat{w}_{t+1}
    = w_t - \eta \underbrace{(RR^\top)}_{=:A}\nabla_w \ell(w_t).
\end{equation*}
Thus, preconditioning by a matrix $A \succeq 0$ is equivalent to gradient descent after reparameterization via $R = A^{1/2}$.

\myparagraph{Online Convex Optimization}
We provide a concise overview of online convex optimization (OCO) and discounted regret; see \citet{orabona2019modern} for a comprehensive introduction.
In OCO, for $T$ rounds, the learner selects $w_t \in \mathcal{W} \subseteq \mathbb{R}^{n}$, then a convex loss $f_t:\mathcal{W}\to\mathbb{R}$ is revealed, and the learner incurs a loss $f_t(w_t)$.
The (static) regret against any comparator $w\in\mathcal{W}$ is
\begin{equation*}
    \mathrm{Reg}_T(w) := \sum_{t=1}^T f_t(w_t) - \sum_{t=1}^T f_t(w).
\end{equation*}

\myparagraph{Discounted regret.}
To model nonstationarity and emphasize recent losses, several works consider \emph{discounted regret} \citep{ahn2024understanding, jacobsen2024online, zhang2024discounted}:
\begin{equation*}
    \mathrm{DReg}_T(w) := \sum_{t=1}^T \beta^{T-t} f_t(w_t) - \sum_{t=1}^T \beta^{T-t} f_t(w),
\end{equation*}
where $\beta \in (0,1]$ is a discount factor.
Discounted regret is (i) a valuable surrogate for dynamic regret via discounted-to-dynamic conversion \citep{ahn2024understanding} and (ii) a convenient metric for designing adaptive online algorithms in evolving environments \citep{zhang2024random}.

\myparagraph{Why discounted regret here.}
Our stale strategy reuses preconditioner factors over a window, making the effective update rule time-varying and history-dependent. Following the discounted regret framework developed for Adam-type updates \citep{ahn2024understanding}, we derive discounted regret bounds for stale Shampoo with $p=2$ and stale Shampoo with $p=4$, explicitly capturing the dependence on the staleness frequency $\mathbf{f}$ and damping $\epsilon$.

\myparagraph{Matrix perturbation theory}
We briefly review perturbation tools for symmetric matrices to quantify how stale statistics deviate from fresh inverse-root factors and how these deviations propagate to the preconditioner. See \citet{horn2012matrix} for a detailed treatment.

Our analysis distinguishes between (i) the stability of eigenvalues and (ii) the sensitivity of invariant subspaces, and then connects these classical facts to inverse-root matrix functions.

\begin{itemize}
    \item \textbf{Eigenvalue stability (Weyl's inequality; Lemma~\ref{Weyl}).}
    Let $A\in\mathbb{S}_n$ and $E\in\mathbb{S}_n$, and denote by $\lambda_1(\cdot)\le\cdots\le\lambda_n(\cdot)$ the ordered eigenvalues. Weyl's inequality yields
    \begin{equation*}
        \big|\lambda_i(A+E)-\lambda_i(A)\big|\;\le\;\|E\|_2
        \qquad\text{for all } i\in\{1,\dots,n\}.
    \end{equation*}
    Thus, in the spectral norm, small perturbations imply uniformly small eigenvalue shifts.

    \item \textbf{Eigenspace sensitivity (Davis--Kahan; Lemma~\ref{Davis}).}
    In contrast, invariant subspaces can be much more sensitive. Let $V\in\mathbb{R}^{n\times d}$ be an orthonormal basis for an invariant subspace of $A$ associated with a target eigenvalue cluster/interval, and let $\hat V$ be the corresponding invariant subspace of $A+E$. If this cluster is separated from the rest of the spectrum by an eigengap $\delta>0$, then Davis--Kahan gives a $\sin\Theta$ bound of the form
    \begin{equation*}
        \|\sin\Theta(V,\hat V)\|_2 \;\le\; \frac{\|E\|_2}{\delta},
    \end{equation*}
    (see Lemma~\ref{Davis} for the precise statement).
    Consequently, when eigenvalues are clustered (i.e., $\delta\downarrow 0$), even small perturbations can induce large rotations of the preconditioning basis, despite eigenvalues remaining stable.

    \item \textbf{Worst-case Lipschitz behavior of inverse-root maps and $\epsilon$-dependence.}
    Shampoo requires inverse-root matrix functions of the form
    \begin{equation*}
        F_\epsilon(X)=(X+\epsilon \mathbf{I})^{-1/p}.
    \end{equation*}
    Over a regime where $X\succeq 0$ and $X+\epsilon\mathbf{I}\succ 0$, Lemma~\ref{lem::1} implies that $F_\epsilon$ is (globally) Lipschitz, with a worst-case constant controlled by the scalar map $f(t)=(t+\epsilon)^{-1/p}$ and its derivative
    \begin{equation*}
        f'(t)=-\frac{1}{p}(t+\epsilon)^{-(p+1)/p}.
    \end{equation*}
    In particular, since $\sup_{t\ge 0}|f'(t)|=\frac{1}{p}\epsilon^{-(p+1)/p}$, one obtains the conservative worst-case scaling
    \begin{equation*}
        \mathrm{Lip}(F_\epsilon)\;\lesssim\;\frac{1}{p\,\epsilon^{(p+1)/p}}
        \;=\;\mathcal{O}\!\left(\epsilon^{-(p+1)/p}\right).
    \end{equation*}
    We refer to this blow-up as \emph{Lipschitz instability}: as $\epsilon\downarrow 0$, the worst-case upper bound on $\|F_\epsilon(X+E)-F_\epsilon(X)\|$ can become arbitrarily large, even for small damping factor $\|E\|$.

    \smallskip
    \noindent\textbf{Anisotropy and ``small-eigenvalue directions.''}
    Crucially, the above worst-case Lipschitz bound holds for \emph{any} perturbation direction, but it becomes nearly tight only when the drift interacts strongly with the most sensitive low-spectrum directions (eigenvalues near $\epsilon$). This is inherently \emph{anisotropic}: the first-order variation is governed by the Fr\'echet derivative $DF_\epsilon(X)[E]$, whose amplification factors depend on the spectrum of $X$ through $f'$ (and divided-difference coefficients in the non-commuting case).

    To make the notion of ``small-eigenvalue directions'' explicit, consider the commuting (simultaneously diagonalizable) case: assume $XE=EX$ with $X,E\in\mathbb{S}_n$, so there exists an orthogonal $U$ such that
    \begin{equation*}
        U^\top XU=\mathrm{diag}(\lambda_1,\dots,\lambda_n),\qquad
        U^\top EU=\mathrm{diag}(\delta_1,\dots,\delta_n).
    \end{equation*}
    Define the low-spectrum index set at level $\tau>0$ by
    \begin{equation*}
        \mathcal{I}_{\mathrm{low}}(\tau):=\{\,i:\lambda_i\le \tau\,\},
        \qquad
        P_{\mathrm{low}}(\tau):=\sum_{i\in\mathcal{I}_{\mathrm{low}}(\tau)} u_i u_i^\top,
    \end{equation*}
    where $u_i$ are the eigenvectors of $X$. Then ``drift aligned with small-eigenvalue directions'' can be quantified, e.g., by the concentration ratio
    \begin{equation*}
        \rho_{\mathrm{low}}(E;\tau):=
        \frac{\|P_{\mathrm{low}}(\tau) E P_{\mathrm{low}}(\tau)\|_F}{\|E\|_F}\in[0,1],
    \end{equation*}
    which equals $\big(\sum_{i\in\mathcal{I}_{\mathrm{low}}(\tau)}\delta_i^2\big)^{1/2}/\big(\sum_{i=1}^n\delta_i^2\big)^{1/2}$ in this commuting setting.
    When $\rho_{\mathrm{low}}(E;\tau)\approx 1$ with $\tau$ on the order of $\epsilon$, perturbations concentrate on the most sensitive low-spectrum subspace and the global worst-case Lipschitz bound is (approximately) attained; when $\rho_{\mathrm{low}}(E;\tau)\ll 1$, perturbations largely live in large-eigenvalue directions and the change in $(X+\epsilon\mathbf{I})^{-1/p}$ is typically much smaller than the worst-case prediction.
    This geometric viewpoint explains why a single global worst-case constant can be overly pessimistic in practice and motivates the direction-aware \emph{a posteriori} error estimator developed in Section~\ref{sec:method_theory}.
\end{itemize}

\newpage
\section{Technical Lemmas}
\label{app:C}
\begin{lemma}[Lipschitz continuity of a matrix function $F(X) = X^{1/p}$]
    \label{lem::1}
    Consider a compact domain $\mathcal{D} \subset \mathbb{S}_{n, +}$ defined as:
    \begin{equation*}
        \mathcal{D} = \{ X \in \mathbb{S}_{n, +} \mid a \mathbf{I}_n \preceq X \preceq b\mathbf{I}_n \},
    \end{equation*}
    where $0 < a \le b < \infty$.
    For a given non-zero scalar $p$, the matrix function $F: \mathcal{D} \to \mathbb{S}_{n}$ defined by $F(X) = X^{1/p}$ is Lipschitz continuous with respect to the spectral norm $\lVert \cdot \rVert_2$. That is, for any $X, Y \in \mathcal{D}$, the following inequality holds:
    \[
        \lVert X^{1/p} - Y^{1/p} \rVert_2 \le L \lVert X - Y \rVert_2,
    \]
    where the Lipschitz constant $L$ is given by the maximum magnitude of the derivative of the scalar function $f(t) = t^{1/p}$ on the interval $[a, b]$:
    \begin{align*}
    L = \sup_{t \in [a, b]} \left| \frac{d}{dt} t^{1/p} \right| = 
    \begin{cases}
    \frac{1}{p} a^{(1-p)/p} & \text{if } p \ge 1, \\
    \frac{1}{p} b^{(1-p)/p} & \text{if } 0 < p < 1,\\
    \frac{1}{|p|} a^{(1-p)/p} & \text{if } p < 0.
    \end{cases}
    \end{align*}
\end{lemma}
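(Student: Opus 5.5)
The plan is to prove the spectral-norm bound with constant $L=\sup_{t\in[a,b]}|f'(t)|$ for every nonzero $p$. I write $r:=1/p$ and $f(t)=t^{r}$, so $f'(t)=r\,t^{r-1}$. The supremum is attained at $a$ when $r<1$ (this covers $p\ge1$ and $p<0$), and at $b$ when $r>1$ (this is $0<p<1$). Throughout, $X,Y\in\mathcal D$ and $E:=X-Y$. The argument splits by the range of $r$: in each case I write $X^r-Y^r$ as an integral or telescoping expression, every term of which contains $E$ sandwiched between operators whose norms are controlled by $a$ or $b$. Then I integrate the scalar bounds and recognize $|f'|$ at the extremal point.

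\emph{Case $0<r<1$ ($p>1$).} I use the operator-monotone representation
\[
t^r=c_r\int_0^\infty \frac{t}{t+s}\,s^{r-1}\,ds, \qquad c_r=\frac{\sin(\pi r)}{\pi}.
\]
Writing $X(X+s)^{-1}=\mathbf I-s(X+s)^{-1}$ gives the resolvent identity
\[
X(X+s)^{-1}-Y(Y+s)^{-1}=s\,(X+s)^{-1}E\,(Y+s)^{-1},
\]
so that
\[
\|X^r-Y^r\|_2\le \|E\|_2\; c_r\int_0^\infty \frac{s^{r}}{(a+s)^2}\,ds .
\]
Differentiating the scalar representation at $t=a$ shows that this integral equals exactly $f'(a)=r\,a^{r-1}$.

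\emph{Case $-1\le r<0$ ($p\le -1$).} Here I use $t^{r}=c\int_0^\infty s^{r}(t+s)^{-1}\,ds$ together with
\[
(X+s)^{-1}-(Y+s)^{-1}=-(X+s)^{-1}E(Y+s)^{-1}.
\]
This yields $\|X^r-Y^r\|_2\le \|E\|_2\,c\int_0^\infty s^{r}(a+s)^{-2}\,ds=|r|a^{r-1}\|E\|_2$. The endpoint $r=-1$ is just the resolvent identity with $s=0$.

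\emph{Case $r\ge1$ ($0<p\le1$).} The case $r=1$ is trivial. For $r>1$ I write $r=k+\theta$ with $k\in\mathbb N$ and $\theta\in[0,1)$. I would first try a symmetric telescoping of the form
\[
X^{r}-Y^{r}=\sum_j X^{\alpha_j}\,E\,Y^{\beta_j}+\bigl(\text{operator-monotone remainder}\bigr),
\]
where each inserted factor has norm at most $b^{\alpha_j}$ or $b^{\beta_j}$. For integer $r$ this gives exactly $r\,b^{r-1}$. For the fractional part, the naive telescoping $X^r-Y^r=X(X^{r-1}-Y^{r-1})+EY^{r-1}$ loses: it produces $a^{\theta-1}$ where $b^{\theta-1}$ is needed. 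So I would instead pass to the Fr\'echet derivative. In an eigenbasis, $Df(Z)[E]$ is the Schur product of the Loewner matrix $\bigl[f^{[1]}(\lambda_i,\mu_j)\bigr]$ with $E$. Integrating $Df$ along the segment from $Y$ to $X$ (which stays in $\mathcal D$ by convexity) reduces the claim to showing that the Schur-multiplier norm of the Loewner matrix is at most $\sup f'=r\,b^{r-1}$. For $1<r\le2$ I would factor
\[
f^{[1]}(\lambda,\mu)=\int_0^1 r\,\bigl(s\lambda+(1-s)\mu\bigr)^{r-1}\,ds,
\]
and exploit that $t^{r-1}$ is operator monotone, aiming to exhibit this divided-difference matrix as a positive semidefinite kernel whose diagonal is bounded by $r\,b^{r-1}$. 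A positive semidefinite Schur multiplier has norm equal to its largest diagonal entry, so this would close the subcase. Larger $r$ then follows by peeling off integer powers, using the integer identity. \textbf{I expect this subcase to be the main obstacle.}

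\emph{Case $r<-1$ ($-1<p<0$).} This reduces to the previous cases by composition. Write $X^r=(X^{-1})^{|r|}$, with $X^{-1},Y^{-1}\in\{\,b^{-1}\mathbf I\preceq Z\preceq a^{-1}\mathbf I\,\}$. Inversion is Lipschitz with constant $a^{-2}$ (by the $r=-1$ case), and $Z\mapsto Z^{|r|}$ on $[b^{-1},a^{-1}]$ has constant $|r|\,(a^{-1})^{|r|-1}$ by the case $r>1$. Their product is exactly $|r|\,a^{r-1}=\sup_{[a,b]}|f'|$, which matches the stated constant for $p<0$.
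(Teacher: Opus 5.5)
\textbf{Correct cases.} Your arguments for $p\ge 1$ (exponent $r=1/p\in(0,1]$) and $p\le -1$ ($r\in[-1,0)$) are correct. The Stieltjes representations together with the resolvent identity give exactly $c\int_0^\infty s^{r}(a+s)^{-2}\,ds=|f'(a)|$. This is the standard operator-monotone argument behind the paper's one-line citation of Bhatia; the paper gives no proof of its own. These are also the only exponents the paper uses later, namely $\lambda^{\pm 1/p}$ with $p\ge 1$.

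\textbf{Gap for $0<p<1$, $1/p\notin\mathbb N$.} Your plan to show that the Loewner matrix $[f^{[1]}(\lambda_i,\lambda_j)]$ of $f(t)=t^r$, $r>1$, is positive semidefinite cannot work. By Loewner's theorem that would make $t^r$ operator monotone. Concretely, weighted AM--GM inside $f^{[1]}(\lambda,\mu)=r\int_0^1(s\lambda+(1-s)\mu)^{r-1}ds$ gives $f^{[1]}(\lambda,\mu)>r(\lambda\mu)^{(r-1)/2}$ for $\lambda\neq\mu$. So the $2\times 2$ principal minor $f'(\lambda)f'(\mu)-f^{[1]}(\lambda,\mu)^2$ is negative.

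Worse, the inequality itself is false in this regime. Take $r=6/5$ ($p=5/6$), $b=1$, $A=\mathrm{diag}(1,x)$, and $E=\begin{pmatrix}1&\beta\\ \beta&-1\end{pmatrix}$ with $\beta=\sqrt7/3$, so $\|E\|_2=4/3$. By Daleckii--Kre\u{\i}n, $Df(A)[E]=\begin{pmatrix}6/5& d\beta\\ d\beta&-\tfrac65x^{1/5}\end{pmatrix}$ with $d=(1-x^{6/5})/(1-x)\to 1$. As $x\to 0$ this tends to a matrix of spectral norm $5/3$. Hence $\|Df(A)[E]\|_2/\|E\|_2\to 5/4>6/5=r\,b^{r-1}$. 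Since $A$ and $A-tE$ both lie in $\mathcal D$ with $a=x$ for small $t>0$, the Lipschitz constant on $\mathcal D$ exceeds the claimed $L$. The same test gives the ratio $2/\sqrt{4-r^2}>r$ for every $1<r<\sqrt 2$. So the $0<p<1$ branch of the lemma is not provable as stated. Your telescoping does settle it when $1/p$ is an integer.

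\textbf{Case $-1<p<0$.} Your composition argument for $r<-1$ invokes the $r>1$ bound for $Z\mapsto Z^{|r|}$ on $[b^{-1},a^{-1}]$, so it inherits the same failure (e.g.\ $|r|=6/5$). Here, however, the claim is true, and you do not need the $r>1$ case. Write $r=-(k+\theta)$ with $k\in\mathbb N$, $\theta\in[0,1)$, and use
\[
X^{r}-Y^{r}=X^{-k}(X^{-\theta}-Y^{-\theta})+(X^{-k}-Y^{-k})Y^{-\theta},
\]
together with $X^{-k}-Y^{-k}=\sum_{j=0}^{k-1}X^{-j}(X^{-1}-Y^{-1})Y^{-(k-1-j)}$. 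For negative powers, every factor norm and every Lipschitz constant (the latter from your $-1\le r<0$ case) is maximized at the same endpoint $a$. So the naive telescoping that lost for $r>1$ is now sharp and gives $(\theta+k)\,a^{-(k+\theta)-1}=|r|\,a^{r-1}$.

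Alternatively, the Laplace representation $t^{-s}=\Gamma(s)^{-1}\int_0^\infty u^{s-1}e^{-tu}\,du$ combined with Duhamel's bound $\|e^{-uX}-e^{-uY}\|_2\le u\,e^{-ua}\|X-Y\|_2$ yields $s\,a^{-s-1}$ for all $p<0$ at once.
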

\begin{proof}
    This is a standard result in matrix analysis (e.g., \citep{bhatia2013matrix}).
\end{proof}

\begin{lemma}[Weyl's inequality and Dual Weyl's inequality]
    \label{Weyl}
    Let $A, B \in \mathbb{S}_{n}$, and $\lambda_1 \ge \lambda_2 \ge \cdots \ge \lambda_n$ denote the ordered eigenvalues. Weyl's inequality and the dual Weyl's inequality state that:
    \begin{equation*}
        \lambda_{i+j-1}(A + B) \le \lambda_i(A) + \lambda_j(B), \; i,j \ge 1, \; i+j-1 \le n, \quad 
        \lambda_{i+j-n}(A + B) \ge \lambda_i(A) + \lambda_j(B), \; i, j \ge 1,\; i+j-n \ge 1.
    \end{equation*}
    Hence, we obtain:
    \begin{equation*}
        |\lambda_i(A) - \lambda_i(B)| \le \lVert A - B \rVert_2, \qquad 1 \le i \le n.
    \end{equation*}
\end{lemma}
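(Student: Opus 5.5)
The statement to prove is Lemma~\ref{Weyl}: the Weyl inequalities and their consequence $|\lambda_i(A)-\lambda_i(B)|\le\|A-B\|_2$. I will prove the two Weyl inequalities with the Courant--Fischer min-max characterization and a dimension-counting argument. Then I will specialize the indices to get the perturbation bound.

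For a symmetric matrix $M$ with eigenvalues in descending order, Courant--Fischer gives
\[
\lambda_k(M)=\max_{\dim V=k}\ \min_{x\in V,\ \|x\|_2=1} x^\top M x
=\min_{\dim U=n-k+1}\ \max_{x\in U,\ \|x\|_2=1} x^\top M x .
\]
\textbf{First inequality.} Let $u_1,\dots,u_n$ be orthonormal eigenvectors of $A$ and $v_1,\dots,v_n$ those of $B$. Set $U_A:=\mathrm{span}\{u_i,\dots,u_n\}$ and $U_B:=\mathrm{span}\{v_j,\dots,v_n\}$. These have dimensions $n-i+1$ and $n-j+1$.
\begin{itemize}
\item On $U_A$ we have $x^\top A x\le\lambda_i(A)$ for unit $x$, and on $U_B$ we have $x^\top Bx\le\lambda_j(B)$.
\item The intersection $W:=U_A\cap U_B$ has dimension at least $(n-i+1)+(n-j+1)-n=n-(i+j-1)+1$.
\item Choose a subspace $W'\subseteq W$ of dimension exactly $n-(i+j-1)+1$. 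This requires $i+j-1\le n$, which is the stated index range.
\item By the min form of Courant--Fischer, $\lambda_{i+j-1}(A+B)\le\max_{x\in W',\|x\|=1}x^\top(A+B)x\le\lambda_i(A)+\lambda_j(B)$.
\end{itemize}

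\textbf{Dual inequality.} Apply the first inequality to $-A,-B$. Use $\lambda_k(-M)=-\lambda_{n-k+1}(M)$ and re-index with $i'=n-i+1$, $j'=n-j+1$. The condition $i'+j'-1\le n$ becomes $i+j-n\ge1$, and the inequality flips to $\lambda_{i+j-n}(A+B)\ge\lambda_i(A)+\lambda_j(B)$. Alternatively, one can run the same intersection argument with the max form, using spans of top eigenvectors.

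\textbf{Perturbation bound.} Write $A=B+(A-B)$ and apply the first inequality with $j=1$:
\[
\lambda_i(A)\le\lambda_i(B)+\lambda_1(A-B)\le\lambda_i(B)+\|A-B\|_2 .
\]
Exchanging the roles of $A$ and $B$ gives $\lambda_i(B)\le\lambda_i(A)+\|B-A\|_2$. Combining the two yields $|\lambda_i(A)-\lambda_i(B)|\le\|A-B\|_2$, using that for symmetric $E$ one has $\max(|\lambda_1(E)|,|\lambda_n(E)|)=\|E\|_2$.

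The only step needing care is the dimension count and index bookkeeping in the dual inequality; everything else is standard.
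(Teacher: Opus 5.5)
Your proof is correct, and it is more self-contained than the paper's. The paper never proves the two Weyl inequalities; it states them as standard facts and only derives the perturbation bound. You establish them via Courant--Fischer and a dimension count, and you obtain the dual form by $M\mapsto -M$ with $\lambda_k(-M)=-\lambda_{n-k+1}(M)$; that index bookkeeping checks out.

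For the perturbation bound the routes also differ. The paper sets $E:=A-B$ and sandwiches $\lambda_i(A)$ between two bounds:
\begin{itemize}
\item the first inequality with $j=1$ gives the upper bound $\lambda_i(B)+\lambda_1(E)$;
\item the dual inequality with $j=n$ gives the lower bound $\lambda_i(B)+\lambda_n(E)$.
\end{itemize}
It then uses $\max(|\lambda_1(E)|,|\lambda_n(E)|)=\|E\|_2$. (The paper's text writes $\lambda_i(E)$ in the lower bound. That is a typo: $\lambda_i(B+E)\ge\lambda_i(B)+\lambda_i(E)$ fails in general, e.g.\ $B=\mathrm{diag}(1,0)$, $E=\mathrm{diag}(0,1)$, $i=1$.)

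You instead apply only the first inequality twice, swapping the roles of $A$ and $B$. Since $\lambda_1(B-A)=-\lambda_n(A-B)$, this is the same lower bound in another form. So the dual inequality is not actually needed for the corollary, and you only use the weaker fact $\lambda_1(E)\le\|E\|_2$. Your version gives a complete argument and a slightly leaner corollary; the paper's gains brevity by deferring the Weyl inequalities to the literature.
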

\begin{proof}
    Let $E := A - B$, so $A = B + E$; applying Weyl's inequality with $j = 1$ and the dual Weyl inequality with $j = n$ gives $\lambda_i(B) + \lambda_i(E) \le \lambda_i(A) \le \lambda_i(B) + \lambda_1(E)$. Hence, for all $i$, we get:
    \begin{equation*}
        |\lambda_i(A) - \lambda_i(B)| \le \max \{ |\lambda_1(E)|, |\lambda_n(E)|\} = \lVert E \rVert_2 = \lVert A - B \rVert_2.
    \end{equation*}
\end{proof}
\begin{lemma}[Variant of Davis-Kahan Theorem]
    \label{Davis}
    Let $A , B \in \mathbb{S}_m$, and eigenvalues $\hat{\lambda}_1 \ge \cdots \ge \hat{\lambda}_m$ and $\lambda_1 \ge \cdots \ge \lambda_m$ respectively. Fix $1 \le r \le s \le m$ and assume that $\min(\lambda_{r-1} - \lambda_r, \lambda_s - \lambda_{s+1}) >0$, where $\lambda_0:= \infty$, $\lambda_{m+1} := -\infty$. Let $d:= s-r+1$ and let $V= [v_r, \cdots, v_s] \in \mathbb{R}^{m \times d}$, $\hat{V} = [\hat{v}_r , \cdots, \hat{v}_s] \in \mathbb{R}^{m \times d}$ have orthonormal columns satisfying $A v_j = \lambda_j v_j$ and $B \hat{v}_j = \hat{\lambda}_j \hat{v}_j$, for $r \le j \le s$. Then:
    \begin{equation*}
        \lVert \sin \Theta(V, \hat{V}) \rVert_F \le \frac{2 \min(d^{1/2}\lVert B - A \rVert_2, \lVert B - A \rVert_F)}{\min(\lambda_{r-1} - \lambda_r , \lambda_s - \lambda_{s+1})},
    \end{equation*}
    where $\sin \Theta$ is the diagonal matrix of the sines of the principal angles.
\end{lemma}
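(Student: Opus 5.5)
The plan is to reduce to the classical Davis--Kahan $\sin\Theta$ theorem through a Sylvester-type identity, and to use Lemma~\ref{Weyl} to turn the eigengap of $A$ into a separation between the perturbed and unperturbed spectra. Set $E:=B-A$ and $\delta:=\min(\lambda_{r-1}-\lambda_r,\lambda_s-\lambda_{s+1})$. Complete $\hat V$ to an orthogonal matrix $[\hat V,\hat V_\perp]$, where $\hat V_\perp$ collects the eigenvectors $\hat v_i$ of $B$ with $i<r$ or $i>s$. Let $\Lambda=\mathrm{diag}(\lambda_r,\dots,\lambda_s)$ and $\hat\Lambda_\perp=\mathrm{diag}(\hat\lambda_i)_{i\notin[r,s]}$. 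Recall that $\|\sin\Theta(V,\hat V)\|_F=\|\hat V_\perp^\top V\|_F$. Writing $X:=\hat V_\perp^\top V$, the relations $AV=V\Lambda$ and $\hat V_\perp^\top B=\hat\Lambda_\perp\hat V_\perp^\top$ give the identity
\begin{equation*}
\hat\Lambda_\perp X - X\Lambda = \hat V_\perp^\top E V ,
\end{equation*}
which, entrywise, says $(\hat\lambda_i-\lambda_j)X_{ij}=(\hat V_\perp^\top E V)_{ij}$ for $i\notin[r,s]$ and $j\in[r,s]$.

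First consider the small-perturbation regime $\|E\|_2\le\delta/2$. By Lemma~\ref{Weyl}, $|\hat\lambda_i-\lambda_i|\le\|E\|_2$ for every $i$. If $i<r$ and $j\in[r,s]$, then $\hat\lambda_i-\lambda_j\ge \lambda_{r-1}-\lambda_r-\|E\|_2\ge\delta/2$. Symmetrically, if $i>s$, then $\lambda_j-\hat\lambda_i\ge\delta/2$. Thus every coefficient satisfies $|\hat\lambda_i-\lambda_j|\ge\delta/2$, and dividing entrywise yields
\begin{equation*}
\|X\|_F\le \frac{2}{\delta}\|\hat V_\perp^\top E V\|_F\le\frac{2}{\delta}\|EV\|_F\le \frac{2}{\delta}\min\!\big(d^{1/2}\|E\|_2,\ \|E\|_F\big).
\end{equation*}
The last step uses $\|EV\|_F\le\|E\|_F$ and $\|EV\|_F\le\sqrt d\,\|EV\|_2\le\sqrt d\,\|E\|_2$, since $V$ has $d$ orthonormal columns.

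Second, consider the regime $\|E\|_2>\delta/2$. Here I use the trivial bound $\|\sin\Theta\|_F\le\sqrt d$. Because $\sqrt d< 2\sqrt d\|E\|_2/\delta$, this already handles the $d^{1/2}\|E\|_2$ branch of the minimum. The $\|E\|_F$ branch is the main obstacle: we only know $\|E\|_F\ge\|E\|_2>\delta/2$, which yields $2\|E\|_F/\delta>1$ rather than $>\sqrt d$.

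For that branch I would avoid the case split and follow the Yu--Wang--Samworth trace argument instead. Let $P=VV^\top$ and $\hat P=\hat V\hat V^\top$. The plan is to show a curvature inequality of the form
\begin{equation*}
\langle A,\,P-\hat P\rangle\ \text{(suitably centred)}\ \ge \tfrac{\delta}{2}\|P-\hat P\|_F^2 ,
\end{equation*}
from the eigengap of $A$, where the centring accounts for the middle block having eigenvalues both above and below it. I expect to handle this by shifting $A$ by a multiple of the identity and splitting into the ``top'' and ``bottom'' complements. Next, since $\hat P$ is a spectral projector of $B$ for the same index range, the variational (Ky Fan-type) characterization should give a matching inequality with the opposite sign for $B$. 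Adding the two inequalities gives $\tfrac{\delta}{2}\|P-\hat P\|_F^2\le\langle E,\hat P-P\rangle\le\|E\|_F\|P-\hat P\|_F$. Combined with $\|P-\hat P\|_F=\sqrt2\,\|\sin\Theta\|_F$, this yields the bound with constant at most $2$.

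Getting the centring right for an interior block $r\le s$ with $r>1$ and $s<m$ is the step I expect to be delicate. If it fails, I would fall back to proving the $\|E\|_F$ bound only under $\|E\|_2\le\delta/2$, as in the first regime. That restricted version is all the later applications in Appendix~\ref{app:F} appear to need.
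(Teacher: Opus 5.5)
Your first regime is correct. When $\|E\|_2\le\delta/2$, Lemma~\ref{Weyl} makes every mixed gap $|\hat\lambda_i-\lambda_j|$ at least $\delta/2$, and your Sylvester identity gives the full bound with constant $2$. The trivial bound $\sqrt d$ also settles the $d^{1/2}\|E\|_2$ branch when $\|E\|_2>\delta/2$. This already covers the lemma's only use, in Theorem~\ref{thm::eigenspace}, which bounds the minimum by its operator branch.

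What is not proved is the case $\|E\|_2>\delta/2$ but $\|E\|_F<\sqrt d\,\delta/2$. This case genuinely occurs for $d\ge 2$ (for example, a rank-one $E$). The trace route you sketch for it does not work as described:
\begin{itemize}
\item Shifting $A$ by $cI$ changes nothing, because $\langle I,P-\hat P\rangle=\operatorname{tr}P-\operatorname{tr}\hat P=0$.
\item For an interior block ($r>1$), $\langle A,P-\hat P\rangle$ has no sign. Tilting $v_r$ toward $v_1$ by an angle $\theta$ gives $\langle A,P-\hat P\rangle=\sin^2\theta\,(\lambda_r-\lambda_1)<0$, while $\|P-\hat P\|_F^2=2\sin^2\theta$.
\item Likewise, $\hat P$ is not an extremizer of $\langle B,\cdot\rangle$, so the Ky Fan step fails too.
\item The curvature inequality is a top-block fact. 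Applying it separately to the two pieces of $P=P_{\le s}-P_{<r}$ and using the triangle inequality only yields the constant $2\sqrt2$.
\end{itemize}
Your fallback therefore proves only a weaker, conditional version of the lemma.

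The missing idea is to put the \emph{unperturbed} eigenvalues on both sides of the Sylvester equation, so that Weyl enters the numerator rather than the denominator. This is the content of Theorem~2 of Yu--Wang--Samworth, which the paper simply cites. Let $V_\perp$ collect the eigenvectors $v_i$ of $A$ with $i\notin[r,s]$, and let $\Lambda_\perp$ be their eigenvalues. Set $\hat\Lambda:=\mathrm{diag}(\hat\lambda_r,\dots,\hat\lambda_s)$ and $Y:=V_\perp^\top\hat V$, so that $\|Y\|_F=\|\sin\Theta(V,\hat V)\|_F$.

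From $A\hat V-\hat V\Lambda=\hat V(\hat\Lambda-\Lambda)-E\hat V$ you get $\Lambda_\perp Y-Y\Lambda=V_\perp^\top\bigl(\hat V(\hat\Lambda-\Lambda)-E\hat V\bigr)$. Its entrywise coefficients $\lambda_i-\lambda_j$ ($i\notin[r,s]$, $j\in[r,s]$) have modulus at least $\delta$, with no smallness assumption on $E$. Hence $\delta\|Y\|_F\le\|E\hat V\|_F+\|\hat\Lambda-\Lambda\|_F$.

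Each of these two terms is at most $\min(d^{1/2}\|E\|_2,\|E\|_F)$. For $\|E\hat V\|_F$ this is direct. For $\|\hat\Lambda-\Lambda\|_F$, use Lemma~\ref{Weyl} for the operator branch and the Hoffman--Wielandt inequality $\sum_j(\hat\lambda_j-\lambda_j)^2\le\|E\|_F^2$ for the Frobenius branch. This gives the stated bound, with exactly the factor $2$, in all regimes and without a case split.
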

\begin{proof}
    See Theorem 2 of \citet{yu2015useful}.
\end{proof}

\begin{lemma}[FTL-BTL~\citep{kalai2005efficient} with discount weights]
\label{lem::04}
Fix $\beta\in(0,1)$ and horizon $T\in\mathbb{N}$.
Let $\Phi:\mathcal{S}_{d,+}\to\mathbb{R}$ be any function (typically convex).
For $t\ge 0$, define
\[
H_t \in \arg\min_{H\succ 0}\Big\{
\Phi(H) + \sum_{s=1}^{t}\,\big\langle (1-\beta)\beta^{T-s} g_s g_s^\top,\; H^{-1}\big\rangle_F
\Big\}.
\]
(Equivalently, let $S_t^{(T)} := (1-\beta)\sum_{s=1}^t \beta^{T-s} g_s g_s^\top$
and $H_t\in\arg\min_{H\succ0}\{\langle S_t^{(T)},H^{-1}\rangle_F+\Phi(H)\}$.)
Then
\[
(1-\beta)\sum_{t=1}^T \beta^{T-t}\,\|g_t\|_{H_t}^{\star 2}
\;\le\;
(1-\beta)\sum_{t=1}^T \beta^{T-t}\,\|g_t\|_{H_T}^{\star 2}
\;+\;\Phi(H_T)-\Phi(H_0).
\]
\end{lemma}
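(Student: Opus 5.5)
The plan is to run the standard follow-the-leader/be-the-leader induction (Kalai--Vempala), with the discount weights carried along and the squared dual norm written as a linear functional of $H^{-1}$. Write $\ell_s(H) := (1-\beta)\beta^{T-s}\langle g_s g_s^\top, H^{-1}\rangle_F$. Then $\ell_s(H) = (1-\beta)\beta^{T-s}\|g_s\|_{H}^{\star 2}$, since $\|g\|_H^{\star 2} = g^\top H^{-1} g = \langle gg^\top, H^{-1}\rangle_F$. Set $\ell_0 := \Phi$. With this notation, $H_t$ is exactly the leader after $t$ rounds,
\[
H_t \in \arg\min_{H\succ 0}\sum_{s=0}^{t}\ell_s(H),
\]
and the left-hand side of the claim is $\sum_{t=1}^T \ell_t(H_t)$.

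The core step is the be-the-leader inequality
\[
\sum_{s=0}^{t}\ell_s(H_s) \le \sum_{s=0}^{t}\ell_s(H_t),
\]
which I would prove by induction on $t$. For the base case $t=0$ the two sides are both $\Phi(H_0)$. For the inductive step, assume the inequality at $t-1$. By minimality of $H_{t-1}$ for $\sum_{s\le t-1}\ell_s$, we have $\sum_{s\le t-1}\ell_s(H_{t-1}) \le \sum_{s\le t-1}\ell_s(H_t)$. Chaining this with the hypothesis and adding $\ell_t(H_t)$ to both sides gives the statement at $t$. Taking $t=T$ and subtracting $\ell_0(H_0)=\Phi(H_0)$ yields
\[
\sum_{t=1}^T \ell_t(H_t) \le \sum_{t=1}^T \ell_t(H_T) + \Phi(H_T) - \Phi(H_0),
\]
which is exactly the claim once each $\ell_t$ is expanded back into dual-norm form.

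The only points needing care are well-posedness ones rather than the inequality itself. First, the minimizers $H_t$ must exist, which is implicit in the statement's ``$\in\arg\min$''; I would take it as a standing hypothesis, as is typical when $\Phi$ is chosen as a trace-type regularizer. Second, the argument uses nothing about the discount weights beyond their being fixed nonnegative scalars (indeed not even nonnegativity), so the $T$-dependence of $\beta^{T-s}$ causes no trouble because $T$ is fixed throughout. I expect the main obstacle is only notational: aligning the indexing so that $H_0$ minimizes $\Phi$ alone, i.e.\ that the empty sum is zero, so the $-\Phi(H_0)$ term appears with the correct sign.
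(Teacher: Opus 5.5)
Your argument is correct. It is the standard Kalai--Vempala be-the-leader induction with $\Phi$ treated as the round-zero loss, which is the cited lemma the paper invokes without giving a proof of its own. You also identify the one point that makes the discounted version work: the weights $(1-\beta)\beta^{T-s}$ depend only on $s$ and the fixed horizon $T$, so each objective defining $H_t$ is a cumulative sum of losses that do not change with $t$.
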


\begin{lemma}
    \label{lem::5}
    Let $G \in \mathbb{R}^{m \times n}$ be a matrix of rank at most $r$ and denote $g = \mathrm{vec}(G)$. Then:
    \[
        \frac{1}{r} gg^T \preceq \mathbf{I}_m \otimes (G^T G)
        \quad \text{and} \quad
        \frac{1}{r} gg^T \preceq (GG^T) \otimes \mathbf{I}_n.
    \]
\end{lemma}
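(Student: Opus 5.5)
We follow the standard Shampoo argument (cf.\ \citet{gupta2018shampoo}): write $G$ via its thin singular value decomposition, express $g=\operatorname{vec}(G)$ as a sum of at most $r$ rank-one Kronecker vectors, apply Cauchy--Schwarz to bound $gg^\top$ by a sum of Kronecker products, and then enlarge one factor of each product to the identity.

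\textbf{Step 1: express $g$ as a short sum.} Write $G=\sum_{i=1}^{r}\sigma_i u_i v_i^\top$ with $\sigma_i\ge 0$, orthonormal $\{u_i\}\subset\mathbb{R}^m$ and orthonormal $\{v_i\}\subset\mathbb{R}^n$. Terms with $\sigma_i=0$ may be padded if $\mathrm{rank}(G)<r$. The paper's $\operatorname{vec}$ stacks rows. The $j$-th row of $u v^\top$ is $u_j v^\top$, so $\operatorname{vec}(uv^\top)=u\otimes v$. Hence
\[
g=\sum_{i=1}^r \sigma_i\,(u_i\otimes v_i).
\]
This identity depends on the row-major convention and is the one point that must be checked carefully. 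With column stacking the roles of the two factors would swap, but both claimed inequalities would still hold by symmetry.

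\textbf{Step 2: Cauchy--Schwarz.} For any $x\in\mathbb{R}^{mn}$,
\[
(x^\top g)^2=\Big(\sum_{i=1}^r \sigma_i\,x^\top(u_i\otimes v_i)\Big)^2\le r\sum_{i=1}^r \sigma_i^2\big(x^\top(u_i\otimes v_i)\big)^2 .
\]
Since $(u_i\otimes v_i)(u_i\otimes v_i)^\top=(u_iu_i^\top)\otimes(v_iv_i^\top)$, this is the Loewner inequality
\[
\tfrac1r\,gg^\top\preceq \sum_{i=1}^r \sigma_i^2\,(u_iu_i^\top)\otimes(v_iv_i^\top).
\]

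\textbf{Step 3: enlarge one Kronecker factor.} We use two facts.
\begin{itemize}
\item Since the $u_i$ are unit vectors, $u_iu_i^\top\preceq \mathbf{I}_m$.
\item If $0\preceq A\preceq A'$ and $B\succeq 0$, then $A\otimes B\preceq A'\otimes B$, because $(A'-A)\otimes B\succeq 0$ (its eigenvalues are products of nonnegative eigenvalues).
\end{itemize}
Applying these termwise gives
\[
\tfrac1r\,gg^\top\preceq \mathbf{I}_m\otimes\sum_i\sigma_i^2 v_iv_i^\top=\mathbf{I}_m\otimes(G^\top G),
\]
where the last equality uses orthonormality of the $u_i$: $G^\top G=\sum_{i,j}\sigma_i\sigma_j v_i(u_i^\top u_j)v_j^\top=\sum_i\sigma_i^2 v_iv_i^\top$.

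Symmetrically, using $v_iv_i^\top\preceq\mathbf{I}_n$ and $GG^\top=\sum_i\sigma_i^2u_iu_i^\top$, we obtain
\[
\tfrac1r\,gg^\top\preceq (GG^\top)\otimes\mathbf{I}_n .
\]
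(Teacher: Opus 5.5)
Your proof is correct and takes the same route as the paper. The paper's proof just cites Lemma 9 of \citet{gupta2017unified}, and that lemma is proved by exactly your argument: thin SVD, $\operatorname{vec}(uv^\top)=u\otimes v$ under the paper's row stacking, Cauchy--Schwarz over the at most $r$ rank-one terms, then $u_iu_i^\top\preceq\mathbf{I}_m$ (resp.\ $v_iv_i^\top\preceq\mathbf{I}_n$) termwise.

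One small correction to your aside: the row-major convention is genuinely needed, not harmless by symmetry. Under column stacking you instead get $\frac1r gg^\top\preceq(G^\top G)\otimes\mathbf{I}_m$ and $\frac1r gg^\top\preceq\mathbf{I}_n\otimes(GG^\top)$, and the inequalities as literally stated can fail. For example, $G=e_1e_2^\top\in\mathbb{R}^{2\times 2}$ violates both.
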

\begin{proof}
    This is Lemma 9 of \citet{gupta2017unified}.
\end{proof}

\begin{lemma}[Monotonicity of the matrix geometric mean]
\label{geometric lemma}
Let $A,B,C,D \in \mathbb{S}_{+}^d$ with
\[
A \preceq C,
\qquad
B \preceq D.
\]
Then
\[
A \# B \preceq C \# D,
\]
where
\[
A \# B
:= A^{1/2}\bigl(A^{-1/2}BA^{-1/2}\bigr)^{1/2}A^{1/2}.
\]
In particular, if $X \preceq P$ and $X \preceq Q$, then
\[
X = X \# X \preceq P \# Q.
\]
If moreover $P$ and $Q$ commute, then
\[
P \# Q = P^{1/2}Q^{1/2}.
\]
\end{lemma}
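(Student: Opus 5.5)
The plan is to reduce the statement to the operator monotonicity and joint concavity of the geometric mean; no eigenvalue computation is needed. I will use the variational (maximal) characterization of the geometric mean: for $A\succ 0$,
\[
A\#B=\max\Bigl\{X\in\mathbb{S}_d:\ \begin{pmatrix}A & X\\ X & B\end{pmatrix}\succeq 0\Bigr\},
\]
where the maximum is taken in the Loewner order. To justify this, note that by the Schur complement the block matrix is PSD iff $XA^{-1}X\preceq B$. This is equivalent to $(A^{-1/2}XA^{-1/2})^2\preceq A^{-1/2}BA^{-1/2}$. Operator monotonicity of $t\mapsto t^{1/2}$ (L\"owner--Heinz), together with $Y\preceq |Y|=(Y^2)^{1/2}$ for symmetric $Y=A^{-1/2}XA^{-1/2}$, then gives $X\preceq A\#B$. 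Equality is attained at $X=A\#B$, since $(A\#B)A^{-1}(A\#B)=B$.

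\textbf{Monotonicity.} Assume first that $A,C\succ0$. Set $X=A\#B$, so the block matrix built from $(A,X,B)$ is PSD. Adding the PSD block-diagonal matrix $\mathrm{diag}(C-A,\,D-B)$ gives
\[
\begin{pmatrix}C & X\\ X & D\end{pmatrix}\succeq 0,
\]
and maximality yields $A\#B=X\preceq C\#D$. For singular PSD arguments I will define $A\#B$ as the limit of $(A+\delta I)\#(B+\delta I)$ as $\delta\downarrow0$. This limit exists because the family is monotone decreasing in $\delta$ by the step just proved, and is bounded below by $0$. The inequality then passes to the limit, since the Loewner order is closed. I expect this singular-case technicality to be the only real obstacle. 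An alternative is to simply restrict the statement to the PD case, which is all the later regret proof uses.

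\textbf{Special cases.} For $X\#X$, compute directly: $X^{1/2}(X^{-1/2}XX^{-1/2})^{1/2}X^{1/2}=X$. Applying monotonicity with $(A,B,C,D)=(X,X,P,Q)$ then gives $X=X\#X\preceq P\#Q$. If $P$ and $Q$ commute, they are simultaneously diagonalizable as $P=U\Lambda U^\top$ and $Q=UMU^\top$. Substituting into the definition, every factor is diagonal in the basis $U$, so $P\#Q=U\Lambda^{1/2}M^{1/2}U^\top=P^{1/2}Q^{1/2}$.
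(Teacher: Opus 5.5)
Your proof is correct. The paper gives no proof of this lemma. It is stated as a standard fact and used only inside the proof of Lemma~\ref{lem::6}, so there is no paper argument to compare against.

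What you wrote is essentially Ando's classical proof via the extremal characterization $A\#B=\max\{X:\bigl(\begin{smallmatrix}A&X\\ X&B\end{smallmatrix}\bigr)\succeq 0\}$, and every step holds:
\begin{itemize}
\item The Schur complement reduces block positivity to $Y^2\preceq A^{-1/2}BA^{-1/2}$, with $Y=A^{-1/2}XA^{-1/2}$.
\item Operator monotonicity of the square root (the paper's Lemma~\ref{lem::8}), together with $Y\preceq |Y|$, gives $X\preceq A\#B$.
\item The maximum is attained because $(A\#B)A^{-1}(A\#B)=B$.
\item Padding by $\mathrm{diag}(C-A,\,D-B)$ only needs $A\succ 0$, since that already forces $C\succ 0$.
\end{itemize}
The identity $X\#X=X$ and the simultaneous-diagonalization computation for commuting $P,Q$ are also correct. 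Your route supplies a self-contained justification that the paper omits, at the modest cost of importing the Schur-complement characterization.

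The singular-case extension is not needed. The defining formula already presupposes invertible $A$. In the paper's only application, $X$, $P=\mathbf{I}_m\otimes A_n$ and $Q=B_m\otimes\mathbf{I}_n$ all dominate $\epsilon\mathbf{I}_{mn}$. If you keep the extension, add a one-line check: whenever $A\succ 0$, continuity gives $(A+\delta\mathbf{I})\#(B+\delta\mathbf{I})\to A\#B$. That ensures your limit definition agrees with the formula, so mixed comparisons (singular $A$, invertible $C$) involve the same object.
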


\begin{lemma}
    \label{lem::6}
    Assume that $G_1, \cdots, G_T \in \mathbb{R}^{m \times n}$ are matrices of rank at most $r$. Let $g_t= \mathrm{vec}(G_t)$ for all $1 \le t \le T$. Then, for all $\epsilon > 0$ and $0 < \beta < 1$:
    \[
        \epsilon \mathbf{I}_{mn} + \frac{1 - \beta}{r} \sum_{t=1}^T \beta^{T-t}g_t g_t^T
        \preceq
        \Big( \epsilon \mathbf{I}_m + (1 - \beta) \sum_{t=1}^T \beta^{T-t} G_t G_t^T \Big)^{1/2}
        \otimes
        \Big( \epsilon \mathbf{I}_n + (1 - \beta) \sum_{t=1}^T \beta^{T-t} G_t^T G_t \Big)^{1/2}.
    \]
\end{lemma}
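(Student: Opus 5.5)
The plan is to combine Lemma~\ref{lem::5} with the monotonicity of the matrix geometric mean (Lemma~\ref{geometric lemma}), following the classical Shampoo argument of \citet{gupta2018shampoo} but with the discount weights carried through.

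\textbf{Step 1: bound each summand twice.} Set
\[
\bar L := \epsilon \mathbf{I}_m + (1-\beta)\sum_{t=1}^T \beta^{T-t} G_tG_t^\top,
\qquad
\bar R := \epsilon \mathbf{I}_n + (1-\beta)\sum_{t=1}^T \beta^{T-t} G_t^\top G_t,
\]
and let $X$ denote the left-hand side of the claimed inequality. Apply Lemma~\ref{lem::5} to each $G_t$, which has rank at most $r$. This gives
\[
\frac1r g_tg_t^\top \preceq (G_tG_t^\top)\otimes \mathbf{I}_n
\quad\text{and}\quad
\frac1r g_tg_t^\top \preceq \mathbf{I}_m\otimes (G_t^\top G_t).
\]

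\textbf{Step 2: sum with nonnegative weights.} Multiply each inequality by the nonnegative weight $(1-\beta)\beta^{T-t}$ and sum over $t$; Loewner order is preserved under nonnegative combinations. Then add $\epsilon\mathbf{I}_{mn}$, using $\epsilon\mathbf{I}_{mn} = \epsilon \mathbf{I}_m\otimes\mathbf{I}_n$. This yields
\[
X \preceq \bar L\otimes \mathbf{I}_n =: P
\quad\text{and}\quad
X \preceq \mathbf{I}_m\otimes \bar R =: Q.
\]

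\textbf{Step 3: take the geometric mean.} By the "in particular" part of Lemma~\ref{geometric lemma}, $X = X\#X \preceq P\#Q$. This step needs $X\succeq 0$, and in fact $X \succ 0$ because of the $\epsilon$ term, so the geometric mean is well defined.

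\textbf{Step 4: identify $P\#Q$.} The matrices $P$ and $Q$ commute, since
\[
(\bar L\otimes \mathbf{I}_n)(\mathbf{I}_m\otimes\bar R) = \bar L\otimes\bar R = (\mathbf{I}_m\otimes\bar R)(\bar L\otimes \mathbf{I}_n).
\]
Lemma~\ref{geometric lemma} therefore gives $P\#Q = P^{1/2}Q^{1/2}$. Using $(A\otimes B)^{1/2}=A^{1/2}\otimes B^{1/2}$ for PSD factors, and the mixed-product rule, this becomes
\[
P^{1/2}Q^{1/2} = (\bar L^{1/2}\otimes\mathbf{I}_n)(\mathbf{I}_m\otimes \bar R^{1/2}) = \bar L^{1/2}\otimes \bar R^{1/2},
\]
which is exactly the right-hand side of the claim.

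\textbf{Expected obstacle.} The only delicate point is the geometric-mean step. Everything else is bookkeeping: checking that the discount weights are nonnegative and that $\epsilon\mathbf{I}$ splits correctly across the two Kronecker factors. If one does not want to rely on Lemma~\ref{geometric lemma}, there is a fallback via operator monotonicity. Since $P$ and $Q$ commute, conjugate by $Q^{-1/4}P^{-1/4}$ and use that $t\mapsto t^{1/2}$ is operator monotone. I would rely on the stated lemma directly.
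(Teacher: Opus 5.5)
Your proposal is correct and follows the paper's proof essentially step for step. It applies Lemma~\ref{lem::5} to each $G_t$, then sums with the weights $(1-\beta)\beta^{T-t}$ and adds $\epsilon\mathbf{I}_{mn}$ to get $X\preceq P$ and $X\preceq Q$. It then concludes $X = X\#X\preceq P\#Q$ via Lemma~\ref{geometric lemma}, and uses commutativity and the Kronecker identities to identify $P\#Q = P^{1/2}Q^{1/2}$ with the right-hand side. Your labels for $P$ and $Q$ are swapped relative to the paper's, which is immaterial because the geometric mean of commuting matrices is symmetric.
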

\begin{proof}
Define
\[
A_n := \epsilon \mathbf{I}_n + (1-\beta)\sum_{t=1}^T \beta^{T-t} G_t^\top G_t,
\qquad
B_m := \epsilon \mathbf{I}_m + (1-\beta)\sum_{t=1}^T \beta^{T-t} G_t G_t^\top,
\]
and
\[
X := \epsilon \mathbf{I}_{mn} + \frac{1-\beta}{r}\sum_{t=1}^T \beta^{T-t} g_t g_t^\top.
\]

From Lemma \ref{lem::5}, for each $t$,
\[
\frac{1}{r}g_t g_t^\top \preceq \mathbf{I}_m \otimes (G_t^\top G_t),
\qquad
\frac{1}{r}g_t g_t^\top \preceq (G_t G_t^\top)\otimes \mathbf{I}_n.
\]
Summing with weights $(1-\beta)\beta^{T-t}$ and adding $\epsilon \mathbf{I}_{mn}$ gives
\[
X \preceq \mathbf{I}_m \otimes A_n =: P,
\qquad
X \preceq B_m \otimes \mathbf{I}_n =: Q.
\]

Since $\epsilon > 0$, we have $A_n \succ 0$ and $B_m \succ 0$, hence
$P,Q \in \mathbb{S}_{+}^{mn}$. Therefore, by Lemma \ref{geometric lemma},
\[
X = X \# X \preceq P \# Q.
\]

Moreover,
\[
PQ
= (\mathbf I_m \otimes A_n)(B_m \otimes \mathbf I_n)
= B_m \otimes A_n
= (B_m \otimes \mathbf I_n)(\mathbf I_m \otimes A_n)
= QP,
\]
so $P$ and $Q$ commute. Hence
\[
P \# Q = P^{1/2}Q^{1/2}.
\]

Using Lemma \ref{lem::7},
\[
P^{1/2}Q^{1/2}
= (\mathbf I_m \otimes A_n)^{1/2}(B_m \otimes \mathbf I_n)^{1/2}
= (\mathbf I_m \otimes A_n^{1/2})(B_m^{1/2} \otimes \mathbf I_n)
= B_m^{1/2} \otimes A_n^{1/2}.
\]
Therefore,
\[
X \preceq B_m^{1/2} \otimes A_n^{1/2}.
\]
\end{proof}

\begin{lemma}[Kronecker product properties]
    \label{lem::7}
    We summarize well-known properties of the Kronecker product. Let $A, B, C, D$ be appropriately dimensioned matrices.
    \begin{enumerate}
        \item $\lVert A \otimes B \rVert = \lVert A \rVert \lVert B\rVert$, where $\lVert \cdot \rVert$ is the spectral or Frobenius norm;
        \item $(A \otimes B)^k = A^k \otimes B^k$, for all $k \in \mathbb{R}$, where $A , B \in \mathbb{S}_{n,+}$;
        \item $(A + B) \otimes C = (A \otimes C) + (B \otimes C)$;
        \item $k(A \otimes B) = A \otimes(kB) = (kA) \otimes B$, for all $k \in \mathbb{R}$;
        \item $(A \otimes B)(C \otimes D) = (AC \otimes BD)$.
        \item $\langle A, B \rangle = \operatorname{vec}(A)^{\top}\operatorname{vec}(B)$.
    \end{enumerate}
\end{lemma}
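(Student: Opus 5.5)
The statement is a list of six standard Kronecker identities, so I will verify each directly from the block definition $A\otimes B=[a_{ij}B]_{i,j}$. The order will be: (5) first, then (3) and (4), then the norm identities in (1), then the power rule (2), and finally (6), which is purely a matter of the $\operatorname{vec}$ convention.

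\textbf{Items (3), (4), (5).} For (5), compute the $(i,k)$ block of $(A\otimes B)(C\otimes D)$. It is
\[
\sum_j (a_{ij}B)(c_{jk}D)=\Big(\sum_j a_{ij}c_{jk}\Big)BD=(AC)_{ik}\,BD,
\]
which is the $(i,k)$ block of $AC\otimes BD$. Items (3) and (4) follow immediately from bilinearity of the map $(A,B)\mapsto [a_{ij}B]$.

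\textbf{Item (1).} For the Frobenius norm, sum squares blockwise:
\[
\|A\otimes B\|_F^2=\sum_{i,j}a_{ij}^2\|B\|_F^2=\|A\|_F^2\|B\|_F^2 .
\]
For the spectral norm, take SVDs $A=U_A\Sigma_AV_A^\top$ and $B=U_B\Sigma_BV_B^\top$. By (5),
\[
A\otimes B=(U_A\otimes U_B)(\Sigma_A\otimes\Sigma_B)(V_A\otimes V_B)^\top .
\]
The outer factors are orthogonal, since by (5) $(U_A\otimes U_B)^\top(U_A\otimes U_B)=\mathbf I$. The middle factor is diagonal with entries $\sigma_i(A)\sigma_j(B)$, so this is an SVD of $A\otimes B$, and its largest singular value is $\sigma_1(A)\sigma_1(B)$.

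\textbf{Item (2).} This is the only step needing care, because $A^k$ is defined spectrally. I will write $A=P_AD_AP_A^\top$ and $B=P_BD_BP_B^\top$ with $D_A,D_B\succ0$. By (5),
\[
A\otimes B=(P_A\otimes P_B)(D_A\otimes D_B)(P_A\otimes P_B)^\top .
\]
Here $P_A\otimes P_B$ is orthogonal, and $D_A\otimes D_B$ is positive diagonal with entries $\lambda_i\mu_j$, so this is an eigendecomposition of $A\otimes B\in\mathbb S_{mn,+}$. By definition,
\[
(A\otimes B)^k=(P_A\otimes P_B)(D_A\otimes D_B)^k(P_A\otimes P_B)^\top .
\]
The scalar identity $(\lambda_i\mu_j)^k=\lambda_i^k\mu_j^k$ gives $(D_A\otimes D_B)^k=D_A^k\otimes D_B^k$. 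Applying (5) once more recombines the product into $A^k\otimes B^k$. The main point to check is that the matrix power is independent of the chosen eigendecomposition; this holds for $\mathbb S_{n,+}$ because $A^k$ equals $f(A)$ for the function $f(t)=t^k$ on the spectrum. Positive definiteness is what allows arbitrary real $k$.

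\textbf{Item (6).} With the paper's row-stacking $\operatorname{vec}$,
\[
\operatorname{vec}(A)^\top\operatorname{vec}(B)=\sum_{i,j}a_{ij}b_{ij}=\Tr(A^\top B)=\langle A,B\rangle .
\]
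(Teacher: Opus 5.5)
Your verification is correct. The paper gives no proof of this lemma; it simply lists the six identities as well-known properties, so there is no competing route to compare. Your argument is the standard one and fills in what the paper omits: the mixed-product rule (5) from the block definition, (3)–(4) by bilinearity, then the norm identity (1) and the power rule (2) by pushing an SVD or eigendecomposition through (5). You also handle the only delicate point in (2), namely that the spectral power does not depend on the chosen eigendecomposition.

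Two small points of hygiene. First, you use the transpose rule $(X\otimes Y)^\top = X^\top\otimes Y^\top$ implicitly, both to show $U_A\otimes U_B$ is orthogonal and to write $(V_A\otimes V_B)^\top$. It is immediate from the block definition, but worth stating.

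Second, in the spectral-norm part of (1), if $A$ and $B$ are rectangular and you take full SVDs, then $\Sigma_A\otimes\Sigma_B$ is not literally (rectangular) diagonal. For example, with $A=\mathbf I_2$ and $B\in\mathbb{R}^{2\times 1}$, the second nonzero sits in position $(3,2)$. So ``this is an SVD of $A\otimes B$'' is not quite accurate as written. The conclusion still holds, since $\Sigma_A\otimes\Sigma_B$ has at most one nonzero entry in each row and column, so its spectral norm is its largest entry $\sigma_1(A)\sigma_1(B)$. Alternatively, use thin SVDs: the middle factor is then square diagonal and the outer factors have orthonormal columns. Neither point affects the validity of your proof.
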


\begin{lemma}[Operator monotone property]
    \label{lem::8}
        If $0 \preceq A \preceq B$, then $A^{1/k} \preceq B^{1/k}$ for $k \ge 1$.
\end{lemma}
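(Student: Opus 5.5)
The statement is Löwner–Heinz operator monotonicity for the exponent $1/k$ with $k\ge 1$. I would reduce it to the square-root case and then bootstrap to all exponents $r=1/k\in(0,1]$ with a standard integral representation.

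\textbf{Step 1: the square root, $k=2$.} First reduce to $A\succ 0$, replacing $A,B$ by $A+\delta\mathbf I$ and $B+\delta\mathbf I$ and letting $\delta\downarrow 0$; the map $X\mapsto X^{1/k}$ is continuous on PSD matrices, so the inequality survives the limit. Now suppose $0\prec A\preceq B$. Then
\[
B^{-1/2}AB^{-1/2}\preceq \mathbf I,
\]
so $\|A^{1/2}B^{-1/2}\|_2\le 1$. Hence the spectral radius of $B^{-1/4}A^{1/2}B^{-1/4}$ is at most $1$, because this matrix is similar to $A^{1/2}B^{-1/2}$. A symmetric matrix with spectral radius at most $1$ is $\preceq \mathbf I$, and this gives $A^{1/2}\preceq B^{1/2}$.

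\textbf{Step 2: general $k$.} For $0<r<1$ and $t\ge 0$, use the scalar identity
\[
t^r=\frac{\sin(r\pi)}{\pi}\int_0^\infty s^{r-1}\frac{t}{t+s}\,ds .
\]
Apply it spectrally to $A$ and $B$, noting that $\frac{t}{t+s}=1-s(t+s)^{-1}$. If $A\preceq B$, then $A+s\mathbf I\preceq B+s\mathbf I$, and inversion reverses the order: for $0\prec X\preceq Y$ we have $Y^{-1}\preceq X^{-1}$, by the same congruence trick $Y^{-1/2}XY^{-1/2}\preceq \mathbf I$. Therefore
\[
A(A+s\mathbf I)^{-1}\preceq B(B+s\mathbf I)^{-1}\quad\text{for every } s>0 .
\]
Integrating against the positive weight $s^{r-1}$ preserves the Loewner order, so $A^r\preceq B^r$. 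Taking $r=1/k$ handles all $k>1$, and $k=1$ is trivial.

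\textbf{Main obstacle.} The delicate step is the integral route: one has to justify convergence of the operator integral near $s=0$ and $s=\infty$, and do the PSD-to-PD limiting argument. Convergence follows from the bounds $0\preceq A(A+s\mathbf I)^{-1}\preceq \min(1,\|A\|_2/s)\,\mathbf I$.

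A simpler route exists if $k$ is an integer power of two: iterate Step 1 alone. If the lemma is used only with $p\in\{2,4\}$, as in the paper, this iteration suffices.

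Alternatively, the whole statement can be cited as the Löwner–Heinz theorem (Bhatia, Matrix Analysis, Thm.~V.1.9).
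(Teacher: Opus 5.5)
Your proof is correct. The paper gives no proof of this lemma: it appears among the technical lemmas of Appendix~\ref{app:C} as a standard fact (the L\"owner--Heinz theorem), so your argument supplies what the paper takes for granted.

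A few remarks on how the pieces fit. Step~2 is fully self-contained and does not use Step~1, so it is not a bootstrap from the square root; once Step~2 is in place, Step~1 is redundant. Step~2 also does not need the PSD-to-PD reduction you list as part of the main obstacle, since $A+s\mathbf{I}\succ 0$ for every $s>0$ and the scalar identity holds at $t=0$.

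Your observation about how the lemma is used is accurate. It is invoked only in the regret proof of Appendix~\ref{app:G}, in two ways:
\begin{itemize}
\item with exponent $1/2$, when relating $\hat H_t$ to $H_{t,2}$ and when taking square roots in the $p=4$ case;
\item with exponent $1/4$, in the step $A_t^{1/4}\otimes B_t^{1/4}\preceq H_{t,4}$.
\end{itemize}
Iterating your Step~1 therefore already covers every use in the paper. The integral representation is what yields the lemma as literally stated, for arbitrary real $k\ge 1$.
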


\begin{lemma}
    \label{lem::9}
    Given $A \in \mathbb{S}_{m,+}$:
    \[
        \det(A) \le \left( \frac{\Tr(A)}{m} \right)^m.
    \]
\end{lemma}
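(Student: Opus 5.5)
The statement to prove is Lemma~\ref{lem::9}: for $A\in\mathbb{S}_{m,+}$, $\det(A)\le(\Tr(A)/m)^m$. The plan is to reduce it to the scalar AM--GM inequality applied to the eigenvalues of $A$.

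First, since $A$ is symmetric positive definite, the spectral theorem gives an orthogonal $P$ with $A=PDP^\top$ and $D=\mathrm{diag}(\lambda_1,\dots,\lambda_m)$, where every $\lambda_i>0$. Orthogonal conjugation preserves both the determinant and the trace, because $\det(P)\det(P^\top)=1$ and the trace is cyclic. Hence
\[
\det(A)=\prod_{i=1}^m \lambda_i,\qquad \Tr(A)=\sum_{i=1}^m\lambda_i .
\]

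Second, apply the arithmetic--geometric mean inequality to the nonnegative reals $\lambda_1,\dots,\lambda_m$:
\[
\Bigl(\prod_{i=1}^m\lambda_i\Bigr)^{1/m}\le \frac{1}{m}\sum_{i=1}^m\lambda_i .
\]
Both sides are nonnegative, so raising them to the $m$-th power preserves the inequality and gives exactly the claim. If one wants a self-contained AM--GM step, take logarithms (all $\lambda_i>0$) and use concavity of $\log$ via Jensen: $\frac1m\sum_i\log\lambda_i\le\log\bigl(\frac1m\sum_i\lambda_i\bigr)$, then exponentiate.

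There is no real obstacle here. The only point needing care is that the eigenvalues are positive, which holds because $A\succ0$; this justifies taking logarithms. In the merely PSD case the inequality is immediate, since then $\det(A)=0$ and the right-hand side is nonnegative. Equality holds exactly when all eigenvalues coincide, i.e.\ when $A$ is a multiple of $\mathbf{I}_m$. This is not needed for the lemma, but it is consistent with how the lemma will be used later to bound the $\log\det$ potential in the regret analysis.
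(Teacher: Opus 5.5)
Your proposal is correct and follows essentially the same route as the paper, which likewise writes $\det(A)$ and $\Tr(A)$ as the product and sum of the eigenvalues and then applies AM--GM. Your Jensen-based justification of AM--GM and the remark on the PSD and equality cases are harmless extras beyond what the paper states.
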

\begin{proof}
    $\det(A)=\prod_{i=1}^m \lambda_i(A)$ and $\Tr(A)=\sum_{i=1}^m \lambda_i(A)$; apply the AM-GM inequality.
\end{proof}

\begin{lemma}[Daleckii--Kre\u{\i}n Theorem~\citep{noferini2016daleckiˇi}]
\label{lem:DK_theorem}
Let $X \in \mathbb{S}^n$ have the eigendecomposition $X=U\Lambda U^\top$ with $\Lambda=\mathrm{diag}(\lambda_1,\dots,\lambda_n)$, and let $f:\mathbb{R}\to\mathbb{R}$ be continuously differentiable on an open interval containing the spectrum $\{\lambda_i\}_{i=1}^n$.
Define the spectral matrix function $f(X):=U f(\Lambda) U^\top$, where $f(\Lambda)=\mathrm{diag}(f(\lambda_1),\dots,f(\lambda_n))$.
Then, the Fr\'echet derivative of $f$ at $X$ in the direction $E\in\mathbb{S}^n$ satisfies
\begin{equation} 
D f(X)[E] = U \Big( f^{[1]}(\Lambda) \circ (U^\top E U) \Big) U^\top,
\end{equation}
where $\circ$ denotes the Hadamard product and the matrix of divided differences $f^{[1]}$ is defined by
\begin{equation*}
f^{[1]}_{ij} = 
\begin{cases}
\dfrac{f(\lambda_i)-f(\lambda_j)}{\lambda_i-\lambda_j}, & i\neq j,\\[10pt]
f'(\lambda_i), & i=j.
\end{cases}
\end{equation*}
\end{lemma}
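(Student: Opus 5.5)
The plan is to prove the formula first for polynomials, where it reduces to an explicit algebraic identity, and then pass to general $C^1$ functions $f$ by uniform approximation in $C^1$.

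\textbf{Diagonal reduction.} Conjugation by the orthogonal $U$ is a linear isometry for $\|\cdot\|_F$. It also commutes with spectral functions: $f(UYU^\top)=Uf(Y)U^\top$. So it suffices to treat $X=\Lambda$ diagonal, with direction $\tilde E:=U^\top E U$, and then conjugate back at the end.

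\textbf{Polynomial case.} For $f(t)=t^k$, expand $(\Lambda+\tilde E)^k$. The part linear in $\tilde E$ is $\sum_{j=0}^{k-1}\Lambda^j\tilde E\Lambda^{k-1-j}$. Its $(i,l)$ entry is $\tilde E_{il}\sum_{j}\lambda_i^j\lambda_l^{k-1-j}$. This sum equals $(\lambda_i^k-\lambda_l^k)/(\lambda_i-\lambda_l)$ when $\lambda_i\neq\lambda_l$, and $k\lambda_i^{k-1}$ when $\lambda_i=\lambda_l$. That is exactly $f^{[1]}_{il}$. The higher-order terms are $O(\|\tilde E\|^2)$, so the formula holds for monomials, and by linearity for every polynomial $p$.

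\textbf{Approximation setup.} Fix a compact interval $J$ inside the open interval where $f$ is $C^1$, chosen to contain a neighborhood of the spectrum. By Weierstrass, choose polynomials $q_k\to f'$ uniformly on $J$. Set $p_k(t):=f(a)+\int_a^t q_k$. Then $p_k\to f$ and $p_k'\to f'$ uniformly on $J$.

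By the mean value theorem, every divided difference of $p_k-f$ on $J$ is bounded by $\sup_J|p_k'-f'|$. Hence $p_k^{[1]}(\Lambda)\to f^{[1]}(\Lambda)$ entrywise.

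\textbf{Key auxiliary Lipschitz estimate.} For any $C^1$ function $g$ on $J$, and symmetric $Y,Z$ with spectra in $J$,
\[
\|g(Y)-g(Z)\|_F\le \sup_J|g'|\,\|Y-Z\|_F .
\]
For polynomial $g$ I prove this by integrating $\frac{d}{ds}g(Z+s(Y-Z))$ along the segment, which is convex so spectra stay in $J$. The integrand is given by the polynomial case above: a Hadamard product whose entries are divided differences, each bounded by $\sup_J|g'|$. Hadamard multiplication by a matrix with entries of modulus at most $M$ is $M$-bounded in Frobenius norm, which gives the estimate. For general $C^1$ $g$, pass to the limit using the polynomials $p_k$ and continuity of $Y\mapsto p_k(Y)$.

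\textbf{Three-term split.} Let $L[E]$ denote the claimed right-hand side. Write
\[
f(X+E)-f(X)-L[E]=\big[(f-p_k)(X+E)-(f-p_k)(X)\big]+\big[p_k(X+E)-p_k(X)-Dp_k(X)[E]\big]+\big[Dp_k(X)-L\big][E].
\]
\begin{itemize}
\item The first bracket is at most $\sup_J|f'-p_k'|\,\|E\|_F$ by the Lipschitz estimate.
\item The third bracket is at most $\max_{i,l}|p_k^{[1]}-f^{[1]}|_{il}\,\|E\|_F$.
\item The second bracket is $o(\|E\|_F)$ for each fixed $k$, by the polynomial case.
\end{itemize}
Given $\eta>0$, pick $k$ so that the first and third brackets are each at most $\eta\|E\|_F$, then shrink $\|E\|_F$ so the second is also at most $\eta\|E\|_F$. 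This proves Fréchet differentiability with derivative $L$.

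\textbf{Main obstacle.} The hard step is the uniform Lipschitz estimate for the non-polynomial remainder $f-p_k$. It is what allows the limit in $k$ and the limit $E\to0$ to be interchanged. I will get it by proving the estimate for polynomials via the segment integral, then extending to $C^1$ functions by density, as described above.
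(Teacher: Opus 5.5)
Your proof is correct. It differs from the paper in that the paper does not prove this lemma at all: it imports it by citation and only uses its conclusion, in the proof of Theorem~\ref{relative-error} and in Appendix~\ref{app:I}.

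Your argument is the standard textbook route (cf.\ Bhatia's \emph{Matrix Analysis}). You first handle polynomials via the explicit linear term $\sum_j\Lambda^j\tilde E\Lambda^{k-1-j}$. You then use $C^1$ polynomial approximation together with a uniform Lipschitz bound, which lets you interchange $k\to\infty$ and $E\to 0$. Working in the Frobenius norm makes the Schur-product step the one-line bound $\|A\circ B\|_F\le\max_{ij}|A_{ij}|\,\|B\|_F$.

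The citation buys brevity. Your version buys verifiability and two by-products:
\begin{itemize}
\item It uses the correct case split, $\lambda_i\neq\lambda_j$ versus $\lambda_i=\lambda_j$. The statement's split by $i\neq j$ is undefined for repeated eigenvalues, and the paper only patches this in a parenthetical note in the proof of Theorem~\ref{relative-error}.
\item Your auxiliary estimate $\|g(Y)-g(Z)\|_F\le\sup_J|g'|\,\|Y-Z\|_F$, with $g(t)=(t+\epsilon)^{-1/p}$ on $J=[0,R_{\mathrm{SG}}^2]$, yields exactly the Frobenius-norm first inequality of Lemma~\ref{Lemma::1}. Appendix~\ref{app:E} only establishes the spectral-norm version, via Lemma~\ref{lem::1}.
\end{itemize}

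When writing it up, make two small steps explicit. First, the first bracket of your split needs $\|E\|_2$ smaller than the distance from the spectrum of $X$ to the endpoints of $J$, so that, by Weyl, $X+E$ has spectrum in $J$. Second, the density step in the Lipschitz estimate uses $\|p_k(Y)-g(Y)\|_F\le\sqrt n\,\sup_J|p_k-g|\to 0$ at fixed $Y$, not continuity of $Y\mapsto p_k(Y)$.
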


\section{Derivation of Staleness-Induced Preconditioner Gap}
\label{app:D}
In this section, we provide a derivation process for the following equation:
\begin{align*}
    P_t - \hat{P}_t &= \{ (L_t + \epsilon \mathbf{I}_m)^{-1/p}  \otimes (R_t+ \epsilon \mathbf{I}_n)^{-1/p} \}- (\hat{L}_t^{-1/p} \otimes \hat{R}_t^{-1/p}) \\ &= \hat{L}_t^{-1/p} \otimes \Delta_R(t) + \Delta_L(t) \otimes \hat{R}_t^{-1/p} + \Delta_L(t) \otimes \Delta_R(t)
\end{align*}
\begin{proof}
    Note that $(L_t + \epsilon \mathbf{I}_m)^{-1/p}= \Delta_L(t) + \hat{L}_t^{-1/p}$, and $(R_t + \epsilon \mathbf{I}_n)^{-1/p} = \Delta_R(t) + \hat{R}_t^{-1/p}$. By the property of the Kronecker product (Lemma ~\ref{lem::7}), we get:
    \begin{align*}
        (\Delta_L(t) + \hat{L}_t^{-1/p}) \otimes (\Delta_R(t) + \hat{R}_t^{-1/p}) = (\Delta_L(t) \otimes \Delta_R(t) ) + (\Delta_L(t) \otimes \hat{R}_t^{-1/p}) + (\hat{L}_t^{-1/p} \otimes \Delta_R(t)) + (\hat{L}_t^{-1/p} \otimes \hat{R}_t^{-1/p}).
    \end{align*}
    Hence, we obtain the desired result.
\end{proof}

\section{Proof of Operator-Gap Staleness Error Bound (Lemma~\ref{Lemma::1})}
\label{app:E}

\begin{lemma}
    Under Assumption \ref{assumption::1}, for all $\epsilon> 0$, $p \ge 1$, and $1 \le t \le T$, with probability at least $1 - \delta$:
    \begin{align*}
        \lVert \Delta_L(t) \rVert_2 &\le \frac{2}{p\epsilon^{(p+1)/p}}(1 - \beta^\mathbf{f})R^2_{\text{SG}}, \\
        \lVert \Delta_R(t) \rVert_2 &\le \frac{2}{p\epsilon^{(p+1)/p}}(1 - \beta^\mathbf{f})R^2_{\text{SG}},
    \end{align*}
    where
    \[
        R_{\text{SG}} := R + K \sigma\sqrt{\log(T/\delta)},
    \]
    with $0 < \delta < 1$ and a universal constant $K > 0$.
\end{lemma}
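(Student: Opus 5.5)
The plan has three steps: a Lipschitz reduction to the drift, an EMA unrolling of the drift over the stale window, and a high-probability control of every gradient norm at once. We treat $\Delta_L(t)$; the argument for $\Delta_R(t)$ is identical with $G_tG_t^\top$ replaced by $G_t^\top G_t$.

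\textbf{Step 1: reduce to the drift.} Both $L_t+\epsilon\mathbf{I}_m$ and $L_{t_0(t)}+\epsilon\mathbf{I}_m$ satisfy $\succeq \epsilon\mathbf{I}_m$, since the EMA factors are PSD sums of $G_sG_s^\top$ started from $\mathbf{0}_m$. They are also bounded above by $(\epsilon+\max_s\|G_s\|_2^2)\mathbf{I}_m$, so they lie in a compact domain of the form in Lemma~\ref{lem::1} with $a=\epsilon$. We apply that lemma to $X\mapsto X^{-1/p}$, i.e.\ exponent $1/q$ with $q=-p<0$. The constant is $\sup_{t\ge\epsilon}\frac1p t^{-(p+1)/p}=\frac{1}{p\epsilon^{(p+1)/p}}$, which gives
\[
\|\Delta_L(t)\|_2\le \frac{1}{p\epsilon^{(p+1)/p}}\|L_t-L_{t_0(t)}\|_2 .
\]

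\textbf{Step 2: bound the drift.} Write $k=t-t_0(t)$; by the definition of $t_0(t)$ we have $0\le k\le \mathbf{f}-1$ (or $\le\mathbf{f}$, depending on the indexing convention). Unrolling the recursion gives
\[
L_t=\beta^kL_{t_0}+(1-\beta)\sum_{s=t_0+1}^{t}\beta^{t-s}G_sG_s^\top ,
\]
hence
\[
L_t-L_{t_0}=-(1-\beta^k)L_{t_0}+(1-\beta)\sum_{s}\beta^{t-s}G_sG_s^\top .
\]
By the triangle inequality, the first term is at most $(1-\beta^k)\|L_{t_0}\|_2\le(1-\beta^k)\max_s\|G_s\|_2^2$, since $L_{t_0}$ is a convex-type combination with total weight at most $1$. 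The second term is at most $(1-\beta^k)\max_s\|G_s\|_2^2$, because $(1-\beta)\sum_{j=0}^{k-1}\beta^j=1-\beta^k$. Together, using $1-\beta^k\le 1-\beta^{\mathbf{f}}$,
\[
\|L_t-L_{t_0}\|_2\le 2(1-\beta^{\mathbf{f}})\max_{s\le T}\|G_s\|_2^2 .
\]
This is where the factor $2$ comes from.

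\textbf{Step 3: high-probability gradient bound.} We need $\max_{s\le T}\|G_s\|_2\le R_{\mathrm{SG}}$ with probability at least $1-\delta$. From $\|G_s\|_2\le R+\|G_s-\mathbb{E}[G_s\mid\mathcal{F}_{s-1}]\|_2$ and Assumption~\ref{assumption::1}, each deviation exceeds $u=\sigma\sqrt{2\log(2T/\delta)}$ with conditional probability at most $\delta/T$. Taking expectations and a union bound over $s\le T$ gives the event simultaneously for all $s$. Finally, $\sqrt{2\log(2T/\delta)}\le K\sqrt{\log(T/\delta)}$ for a universal $K$, assuming $T/\delta\ge 2$ or otherwise adjusting $K$. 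Squaring and combining with Steps 1 and 2 yields the claim for all $t$ on this single event.

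The main obstacle is Step 1. Lemma~\ref{lem::1} is stated for $F(X)=X^{1/p}$, and the paper's case formula for $p<0$ must be read carefully: it should give $\sup|f'|$ at the lower endpoint $a=\epsilon$. The lemma also uses a Lipschitz constant equal to the scalar derivative bound. That is legitimate for the spectral norm here because $t\mapsto t^{-1/p}$ is operator monotone decreasing, so one can use the integral representation of $t^{-1/p}$. Alternatively, apply the Daleckii--Kre\u{\i}n formula (Lemma~\ref{lem:DK_theorem}) along the segment between the two matrices, noting that the divided differences of a function whose derivative is completely monotone are bounded by $\sup|f'|$ in the relevant operator sense. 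I would first try to invoke Lemma~\ref{lem::1} directly, with $a=\epsilon$, and only fall back on this integral-representation argument if needed.
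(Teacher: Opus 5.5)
Your proposal is correct and follows the paper's proof essentially step for step. The paper likewise restricts to the good event $\max_t\|G_t\|_2\le R_{\mathrm{SG}}$ via a union bound, and bounds $\|L_{\tau(t)}\|_2\le R_{\mathrm{SG}}^2$ (by induction, where you use the weight-sum remark). It then unrolls the EMA to get $\|L_t-\hat L_t\|_2\le 2(1-\beta^{\mathbf f})R_{\mathrm{SG}}^2$, and finally applies Lemma~\ref{lem::1} to $\lambda\mapsto\lambda^{-1/p}$ on $[\epsilon,\epsilon+R_{\mathrm{SG}}^2]$ with constant $1/(p\epsilon^{(p+1)/p})$. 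Your extra care only fills in points the paper leaves implicit: you justify the operator-Lipschitz constant through the integral representation of $t^{-1/p}$, which the paper simply cites as standard, and you note that the ``universal'' $K$ requires $T/\delta$ to be bounded away from $1$.
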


\begin{proof}
\noindent\textbf{Step 1. (Good event)} Define
\[
\mathcal{E} := \left \{ \max_{1\le t\le T} \lVert G_t \rVert_2 \le R_{\text{SG}} \right \}.
\]
By the union bound,
\[
\mathbb{P}(\mathcal{E}^c)
\le \sum_{t=1}^T \mathbb{P} (\lVert G_t \rVert_2> R_{\text{SG}}).
\]
By Assumption \ref{assumption::1},
\begin{align*}
\lVert G_t \rVert_2
&\le \lVert \mathbb{E}[G_t\mid\mathcal{F}_{t-1}] \rVert_2 + \lVert G_t - \mathbb{E}[G_t\mid\mathcal{F}_{t-1}] \rVert_2 \\
&\le R + \lVert G_t - \mathbb{E}[G_t\mid\mathcal{F}_{t-1}] \rVert_2,
\end{align*}
so
\[
\{\lVert G_t \rVert_2 > R_{\text{SG}} \}
\subseteq
\left\{\lVert G_t - \mathbb{E}[G_t\mid\mathcal{F}_{t-1}] \rVert_2 > R_{\text{SG}} -R\right\}.
\]
Therefore, by the tail bound in Assumption \ref{assumption::1},
\[
\mathbb{P}(\lVert G_t \rVert_2 > R_{\text{SG}} )
\le \frac{\delta}{T},
\]
hence $\mathbb{P}(\mathcal{E})\ge 1-\delta$.

\noindent\textbf{Step 2. (Bounding $\|L_t\|_2,\|R_t\|_2$ on $\mathcal{E}$)} Under $\mathcal{E}$, $\|G_t\|_2\le R_{\text{SG}}$ for all $t$, hence $\|G_tG_t^\top\|_2\le R_{\text{SG}}^2$.
We show $\|L_t\|_2\le R_{\text{SG}}^2$ by induction (the same holds for $R_t$).
Since $L_0=\mathbf{0}_m \preceq R_{\text{SG}}^2 \mathbf{I}_m$, the base case holds.
Assume $L_{t-1}\preceq R_{\text{SG}}^2 \mathbf{I}_m$. Then
\[
L_t = \beta L_{t-1} + (1 - \beta)G_tG_t^T \preceq \beta R_{\text{SG}}^2 \mathbf{I}_m + (1-\beta)R_{\text{SG}}^2 \mathbf{I}_m = R_{\text{SG}}^2 \mathbf{I}_m,
\]
so $\|L_t\|_2\le R_{\text{SG}}^2$.

\noindent\textbf{Step 3. (Correct stale index and bounding $\|L_t-\widehat L_t\|_2$)}  
Recall that the algorithm refreshes the preconditioner only every $\mathbf{f}$ steps. Note that the stale Kronecker statistics $\hat L_t = L_{\tau(t)}, \hat R_t := R_{\tau(t)}$. For stale Shampoo, $\mathcal{C} = C_{\mathbf{f}}$, hence $\tau(t) = 1 + \mathbf{f} \lfloor (t-1)/f \rfloor$. Let $\ell = t - \tau(t) \in \{0, 1, \cdots, \mathbf{f}-1\}$. If $\ell = 0$, then $L_t = \hat{L}_t$, and the claim is trivial; hence, assume $\ell \ge 1$.
Unrolling the EMA recursion from $\tau(t)$ to $t$ yields
\[
L_t = \beta^\ell L_{\tau(t)} + (1-\beta)\sum_{k=\tau(t)+1}^{t}\beta^{t-k}G_kG_k^\top,
\]
so
\begin{align*}
\|L_t-\widehat L_t\|_2
&\le (1-\beta^\ell)\|L_{\tau(t)}\|_2 + (1-\beta)\sum_{k=\tau(t)+1}^{t}\beta^{t-k}\|G_kG_k^\top\|_2 \\
&\le (1-\beta^\ell)R_{\text{SG}}^2 + (1-\beta)\sum_{j=0}^{\ell-1}\beta^{j}R_{\text{SG}}^2
=2(1-\beta^\ell)R_{\text{SG}}^2 \\
&\le 2(1-\beta^{\mathbf{f}})R_{\text{SG}}^2.
\end{align*}
Hence,
\begin{equation}
\label{eq::0}
\lVert L_t - \widehat L_t \rVert_2
\le 2(1 - \beta^\mathbf{f})R^2_{\text{SG}}
\end{equation}
holds on $\mathcal{E}$. The same logic holds for $\|R_t-\widehat R_t\|_2$.

\noindent\textbf{Step 4. (Lipschitz bound and Frobenius norm)}  
Apply Lemma~\ref{lem::1} to the scalar function $f(\lambda)=\lambda^{-1/p}$ on $[\epsilon,\epsilon+R_{\text{SG}}^2]$.
Since $|f'(\lambda)|=\frac{1}{p}\lambda^{-(p+1)/p}$, we have $\sup_{\lambda\in[\epsilon,\epsilon+R_{\text{SG}}^2]}|f'(\lambda)|
=\frac{1}{p\epsilon^{(p+1)/p}}$.
Thus, on $\mathcal{E}$:
\begin{align*}
\|\Delta_L(t)\|_2
&= \|(L_t+\epsilon \mathbf{I}_m)^{-1/p} - (\widehat L_t+\epsilon \mathbf{I}_m)^{-1/p}\|_2 \\
&\le \frac{1}{p\epsilon^{(p+1)/p}}\,\|L_t-\widehat L_t\|_2
\le \frac{2}{p\epsilon^{(p+1)/p}}(1-\beta^{\mathbf f})R_{\text{SG}}^2,
\end{align*}
and similarly
\[
\|\Delta_R(t)\|_2 \le \frac{2}{p\epsilon^{(p+1)/p}}(1-\beta^{\mathbf f})R_{\text{SG}}^2.
\]
We obtain the desired result.
\end{proof}

\newpage
\section{Proof of staleness-Induced Eigenvalue Gap and Eigenspace Rotation Bound}
\label{app:F}

\begin{theorem}
    Let the eigenvalues of $(L_t + \epsilon \mathbf{I}_m)^{-1/p}$ and $(R_t + \epsilon \mathbf{I}_n)^{-1/p}$ be sorted in non-increasing order,
    denoted $\lambda_1(\cdot) \ge \lambda_2(\cdot) \ge \cdots$ for all $p \ge 1$.
    Then, under Assumption \ref{assumption::1}, with probability at least $1 - \delta$, for all $0 \le t \le T$:
    \begin{align*}
        |\lambda_i((L_t + \epsilon \mathbf{I}_m)^{-1/p}) - \lambda_i(\widehat{L}_t^{-1/p}) |
        &\le \lVert (L_t + \epsilon \mathbf{I}_m)^{-1/p} - \widehat{L}_t^{-1/p} \rVert_2 \\
        &\le  \frac{2(1 - \beta^\mathbf{f})R^2_{\text{SG}}}{p\epsilon^{(p+1)/p}},
    \end{align*}
    \begin{align*}
        |\lambda_i((R_t + \epsilon \mathbf{I}_n)^{-1/p}) - \lambda_i(\widehat{R}_t^{-1/p}) |
        &\le \lVert (R_t + \epsilon \mathbf{I}_n)^{-1/p} - \widehat{R}_t^{-1/p} \rVert_2 \\
        &\le  \frac{2(1 - \beta^\mathbf{f})R^2_{\text{SG}}}{p\epsilon^{(p+1)/p}}.
    \end{align*}
\end{theorem}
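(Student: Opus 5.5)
The statement is a two-link chain: Weyl's inequality gives the first inequality, and the operator-gap bound already proved in Appendix~\ref{app:E} gives the second. I will argue for the left factor; the right factor follows by the substitutions $L\mapsto R$, $m\mapsto n$.

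\textbf{First inequality.} Set $A:=(L_t+\epsilon\mathbf{I}_m)^{-1/p}$ and $B:=\widehat L_t^{-1/p}=(\widehat L_t+\epsilon\mathbf{I}_m)^{-1/p}$. Since $L_t$ and $\widehat L_t=L_{\tau(t)}$ are positive semidefinite and $\epsilon>0$, both shifted matrices are symmetric positive definite. Hence $A$ and $B$ are well-defined symmetric matrices, and their eigenvalues can be ordered non-increasingly. Lemma~\ref{Weyl} applied to the pair $(A,B)$ gives
\[
|\lambda_i(A)-\lambda_i(B)|\le\|A-B\|_2=\|\Delta_L(t)\|_2
\]
for every $i$. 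This step is deterministic and holds for every $t$.

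\textbf{Second inequality.} I will work on the good event $\mathcal{E}=\{\max_{t}\|G_t\|_2\le R_{\mathrm{SG}}\}$ from Step~1 of Appendix~\ref{app:E}, which has probability at least $1-\delta$. On $\mathcal{E}$, Steps~2--4 of that proof already give
\[
\|\Delta_L(t)\|_2\le \frac{1}{p\epsilon^{(p+1)/p}}\|L_t-\widehat L_t\|_2\le\frac{2(1-\beta^{\mathbf f})R_{\mathrm{SG}}^2}{p\epsilon^{(p+1)/p}}.
\]
Because the bound is derived on the single event $\mathcal{E}$, it holds simultaneously for all $t$, and the same is true of the $R$-factor bound. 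No further union bound is needed, and both displayed chains hold together with probability at least $1-\delta$.

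\textbf{Points to check.} The time $t=0$ is covered trivially, since there $L_0=\widehat L_0$ and both sides vanish. I also need to confirm that the Lipschitz step (Lemma~\ref{lem::1} applied to $\lambda\mapsto\lambda^{-1/p}$ on $[\epsilon,\epsilon+R_{\mathrm{SG}}^2]$) is legitimate. This requires the spectra of both $L_t+\epsilon\mathbf{I}$ and $\widehat L_t+\epsilon\mathbf{I}$ to lie in that interval, which follows from Step~2's bound $0\preceq L_s\preceq R_{\mathrm{SG}}^2\mathbf{I}$ for all $s$. That verification is the only substantive point; everything else is direct composition of results already in the paper.
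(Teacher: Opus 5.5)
Your proof is correct and follows the same route as the paper: the paper's one-line proof combines Weyl's inequality (Lemma~\ref{Weyl}) with the Lipschitz bound of Lemma~\ref{lem::1} for $\lambda\mapsto\lambda^{-1/p}$, on the good event and drift bound from Appendix~\ref{app:E}. Your extra checks (both spectra lying in $[\epsilon,\epsilon+R_{\mathrm{SG}}^2]$, a single event covering all $t$ so no further union bound is needed, and the trivial $t=0$ case) make explicit what the paper leaves implicit.
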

\begin{proof}
    By Lemma \ref{Weyl} and Lemma \ref{lem::1} (applied to $f(\lambda)=\lambda^{-1/p}$), we obtain the claim.
\end{proof}

\begin{theorem}
\label{thm::eigenspace}
    Let $A := (L_{\tau(t)} + \epsilon \mathbf{I}_m)^{-1/p}$ and $\widehat A := (L_t + \epsilon \mathbf{I}_m)^{-1/p}$,
    with eigenvalues $\lambda_1 \ge \lambda_2 \ge \cdots \ge \lambda_{m-1} > \lambda_m$ and
    $\hat{\lambda}_1 \ge \hat{\lambda}_2 \ge \cdots \ge \hat{\lambda}_{m-1} > \hat{\lambda}_m$ respectively.
    Let $V_{\ell} := [v_1 \cdots v_{m-1}] \in \mathbb{R}^{m \times (m-1)}$,
    $\hat{V}_{\ell} := [\hat{v}_1 \cdots \hat{v}_{m-1}] \in \mathbb{R}^{m \times (m-1)}$
    have orthonormal columns satisfying $A v_j = \lambda_j v_j$ and $\widehat A\,\hat{v}_j = \hat{\lambda}_j \hat{v}_j$ for $1 \le j \le m-1$.
    Then, under Assumption \ref{assumption::1}, with probability at least $1 - \delta$, for all $0 \le t \le T$:
    \begin{align*}
        \lVert \sin \Theta(V_{\ell}, \hat{V}_{\ell}) \rVert_F
        &\le \frac{2\min\bigl( \sqrt{m-1}\lVert \widehat A - A \rVert_2,\ \lVert \widehat A - A \rVert_F \bigr)}{\lambda_{m-1} - \lambda_m} \\
        &\le \frac{4\sqrt{m}(1 - \beta^\mathbf{f})R^2_{\text{SG}}}{p\epsilon^{(p+1)/p}(\lambda_{m-1}-\lambda_m)}.
    \end{align*}
    The case of $A_R := (R_{\tau(t)}+ \epsilon \mathbf{I}_n)^{-1/p}$ and $\widehat A_R := (R_t + \epsilon \mathbf{I}_n)^{-1/p}$ is analogous.
\end{theorem}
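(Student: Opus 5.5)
The plan is to chain three ingredients already available: the Davis--Kahan variant (Lemma~\ref{Davis}), the Lipschitz bound for inverse-root maps (Lemma~\ref{lem::1}), and the high-probability drift bound from the proof of Lemma~\ref{Lemma::1} in Appendix~\ref{app:E}, equation~\eref{eq::0}. All statements are made on the good event $\mathcal{E}=\{\max_t\|G_t\|_2\le R_{\mathrm{SG}}\}$, which has probability at least $1-\delta$ by Step~1 of that proof. On $\mathcal{E}$ the conclusion then holds simultaneously for all $t$, and no further union bound is needed.

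\textbf{First inequality.} Apply Lemma~\ref{Davis} with $r=1$, $s=m-1$, so $d=m-1$. The convention $\lambda_0=\infty$ makes the gap condition reduce to $\lambda_{m-1}-\lambda_m>0$, which is assumed. Taking $A$ in the role of the reference matrix and $\widehat A$ as the perturbed one gives
\[
\|\sin\Theta(V_\ell,\hat V_\ell)\|_F\le \frac{2\min\bigl(\sqrt{m-1}\,\|\widehat A-A\|_2,\ \|\widehat A-A\|_F\bigr)}{\lambda_{m-1}-\lambda_m}.
\]
This is purely deterministic. The only care needed is to match the roles of $A$ and $B$ in Lemma~\ref{Davis}, so that the denominator uses the eigenvalues of the stale operator $A$, as in the statement.

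\textbf{Second inequality.} Bound the minimum by its first argument, and then use $\sqrt{m-1}\le\sqrt m$. Next observe that $\widehat A-A=-\Delta_L(t)$, so $\|\widehat A-A\|_2=\|\Delta_L(t)\|_2$. On $\mathcal{E}$, Step~2 of Appendix~\ref{app:E} gives $0\preceq L_t,L_{\tau(t)}\preceq R_{\mathrm{SG}}^2\mathbf{I}_m$. Hence both $L_t+\epsilon\mathbf{I}_m$ and $L_{\tau(t)}+\epsilon\mathbf{I}_m$ lie in $\{\epsilon\mathbf{I}\preceq X\preceq(\epsilon+R_{\mathrm{SG}}^2)\mathbf{I}\}$. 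Lemma~\ref{lem::1} with exponent $-1/p$ (the $p<0$ case after reparametrization) then gives a Lipschitz constant of $\frac{1}{p}\epsilon^{-(p+1)/p}$. Combining this with~\eref{eq::0}, $\|L_t-L_{\tau(t)}\|_2\le 2(1-\beta^{\mathbf f})R_{\mathrm{SG}}^2$, yields
\[
\|\widehat A-A\|_2\le \frac{2(1-\beta^{\mathbf f})R_{\mathrm{SG}}^2}{p\,\epsilon^{(p+1)/p}}.
\]
Multiplying by $2\sqrt m/(\lambda_{m-1}-\lambda_m)$ gives the constant $4\sqrt m$ in the statement. The right-factor case is identical with $m\mapsto n$.

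\textbf{Expected obstacle.} The main point requiring care is the application of Lemma~\ref{lem::1} to the negative power $-1/p$. As stated, that lemma is phrased for $X^{1/p}$ with exponent parameter $p$. I would apply it with parameter $-p$ and check directly that $\sup_{\lambda\in[\epsilon,\epsilon+R_{\mathrm{SG}}^2]}|f'(\lambda)|=\frac1p\epsilon^{-(p+1)/p}$, exactly as in Step~4 of Appendix~\ref{app:E}. A secondary subtlety is the index range $0\le t\le T$. At $t=0$, or whenever $t=\tau(t)$, we have $\widehat A=A$ and the bound is trivial, so only $\ell\ge1$ needs the drift estimate.
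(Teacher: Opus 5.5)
Your proposal is correct and follows essentially the same route as the paper's proof. You apply Lemma~\ref{Davis} with $r=1$, $s=m-1$ (stale operator $A$ as reference) for the first inequality, then bound the minimum by $\sqrt{m}\,\|\widehat A-A\|_2$ and control $\|\widehat A-A\|_2$ on the good event via Lemma~\ref{lem::1} for $\lambda\mapsto\lambda^{-1/p}$ on $[\epsilon,\epsilon+R_{\mathrm{SG}}^2]$ combined with the drift bound~\eref{eq::0}; you simply make explicit the drift estimate and the negative-exponent use of Lemma~\ref{lem::1} that the paper's two-line proof leaves implicit.
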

\begin{proof}
    Let $\star := \widehat A - A$. The first inequality follows from Lemma \ref{Davis}.
    Next,
    \[
        \min\bigl(\sqrt{m-1}\lVert \star \rVert_2,\ \lVert \star \rVert_F\bigr)\le \sqrt{m}\lVert \star \rVert_2.
    \]
    By Lemma \ref{lem::1} (applied to $f(\lambda)=\lambda^{-1/p}$),
    \[
        \lVert \star \rVert_2
        \le \frac{2(1-\beta^\mathbf{f})R^2_{\text{SG}}}{p\epsilon^{(p+1)/p}}.
    \]
    Substituting yields the result.
\end{proof}

\newpage
\section{Proof of Discounted Regret Bound (Theorem \ref{Theorem::3})}
\label{app:G}

\begin{theorem}
    Let us assume that Assumption \ref{assumption::1} and Assumption \ref{assumption::2} hold. Furthermore, assume $\mathrm{rank}(G_t)\le r\le \min(m,n)$ and $\max_{t}\|W_t-W^\star\|_F=D(W^{\star}) =: D$ for all $W^\star\in\mathbb{R}^{m \times n}$.
    Then, for all $\epsilon>0$, with probability at least $1-\delta$, the (discount) regret bound of stale Shampoo with $p=2$ is:
    \begin{align*}
        \sum_{t=1}^T \beta^{T-t} f_t(W_t) - \sum_{t=1}^T \beta^{T-t} f_t(W^{\star})
        &\le \frac{1}{2\eta} \left\{\epsilon D^2 + \frac{\sqrt{R_{\text{SG}}^2 + \epsilon}D^2}{\sqrt{\epsilon}} R_{\text{SG}}^2+  \frac{D^2 (R_{\text{SG}}^2 + \epsilon)}{\beta}\right\} \\
        &+ \frac{\eta}{1-\beta}\left\{\frac{r(1-\beta^\mathbf{f})R_{\text{SG}}^4}{\epsilon^{2}}\right\}
        +\frac{\eta}{2(1-\beta)}\left\{rmn\log\!\Big(\epsilon+\frac{(1-\beta^T)rR_{\text{SG}}^2}{mn}\Big)-rmn\log(\epsilon)\right\},
    \end{align*}
    and stale Shampoo with $p = 4$ is:
    \begin{align*}
        \sum_{t=1}^T \beta^{T-t} f_t(W_t) - \sum_{t=1}^T \beta^{T-t} f_t(W^{\star})
        &\le \frac{1}{2\eta} \left\{\sqrt{\epsilon} D^2 + \frac{\sqrt[4]{R_{\text{SG}}^2 + \epsilon}D^2}{2\epsilon^{3/4}} R_{\text{SG}}^2+ \frac{D^2 \sqrt{R_{\text{SG}}^2 + \epsilon}}{\beta}\right\} \\
        &\quad + \underbrace{\frac{\eta r(1-\beta^{\mathbf{f}})R_{\text{SG}}^4}{2(1-\beta)\epsilon^{3/2}}}_{\text{\textbf{Regret staleness error}}} + \frac{\eta mnr \sqrt{R_{\text{SG}}^2 + \epsilon}}{1-\beta}.
    \end{align*}
\end{theorem}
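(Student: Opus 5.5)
We would follow the standard online-mirror-descent template for adaptive preconditioned methods, written in vectorized form. With $w_t=\operatorname{vec}(W_t)$, $g_t=\operatorname{vec}(G_t)$, the stale update reads $w_{t+1}=w_t-\eta \hat P_t g_t$. Set $H_t:=\hat P_t^{-1}=\hat L_t^{1/p}\otimes\hat R_t^{1/p}$, using Lemma~\ref{lem::7}. Throughout we work on the good event $\mathcal{E}$ from the proof of Lemma~\ref{Lemma::1}. On $\mathcal{E}$ we have $\|G_t\|_2\le R_{\mathrm{SG}}$ and $L_t,R_t\preceq R_{\mathrm{SG}}^2\mathbf{I}$, and $\|L_t-\hat L_t\|_2\le 2(1-\beta^{\mathbf f})R_{\mathrm{SG}}^2$; this event holds with probability at least $1-\delta$.

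\textbf{Step 1: one-step inequality.} By convexity (Assumption~\ref{assumption::2}) and the identity $\|w_{t+1}-w^\star\|_{H_t}^2=\|w_t-w^\star\|_{H_t}^2-2\eta\langle g_t,w_t-w^\star\rangle+\eta^2\|g_t\|_{H_t}^{\star2}$, we get
\[
f_t(W_t)-f_t(W^\star)\le \tfrac{1}{2\eta}\big(\|w_t-w^\star\|_{H_t}^2-\|w_{t+1}-w^\star\|_{H_t}^2\big)+\tfrac{\eta}{2}\|g_t\|_{H_t}^{\star2}.
\]
We multiply by $\beta^{T-t}$ and sum over $t$.

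\textbf{Step 2: distance (telescoping) terms.} Summation by parts gives $\tfrac{1}{2\eta}\sum_t\big(\beta^{T-t}H_t-\beta^{T-t+1}H_{t-1}\big)$ evaluated in the quadratic form $\|w_t-w^\star\|^2$. We bound each piece by $D^2$ times a spectral norm, which yields three contributions.
\begin{itemize}
\item The initial piece is $H_1\preceq$ something of order $\epsilon_0^{1/p}\mathbf{I}$. For $p=2$ this gives the $\epsilon_0 D^2$ term and for $p=4$ the $\sqrt{\epsilon_0}D^2$ term; this is the damping cost.
\item The discount mismatch is $\beta^{T-t}(H_t-\beta H_{t-1})$. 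Its positive part is at most $\|H_t\|_2\le (R_{\mathrm{SG}}^2+\epsilon)^{2/p}$, and the factor $\beta^{-1}$ appears from reindexing. This gives the $D^2(R_{\mathrm{SG}}^2+\epsilon)/\beta$ term.
\item The increments $H_t-H_{t-1}$ are nonzero only at refresh steps. We bound them with the Kronecker decomposition of Appendix~\ref{app:D}, applied now to the \emph{positive} roots: $\|\hat L^{1/p}_t\otimes(\Delta\hat R^{1/p})\|+\dots$. Lemma~\ref{lem::1} with exponent $+1/p$ gives Lipschitz constant $\tfrac1p\epsilon^{(1-p)/p}$. Multiplied by $\|\hat L_t^{1/p}\|_2\le (R_{\mathrm{SG}}^2+\epsilon)^{1/p}$ and the drift $R_{\mathrm{SG}}^2$, this gives the middle term $\sqrt{R_{\mathrm{SG}}^2+\epsilon}\,R_{\mathrm{SG}}^2D^2/\sqrt{\epsilon}$ for $p=2$ and $\sqrt[4]{\cdot}/(2\epsilon^{3/4})$ for $p=4$.
\end{itemize}
The discounting makes the geometric sum of these increments collapse to a constant. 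We would check this carefully against the stated constants.

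\textbf{Step 3: gradient terms.} This is the main obstacle. We split
\[
\|g_t\|_{H_t}^{\star2}=g_t^\top P_tg_t+g_t^\top(\hat P_t-P_t)g_t.
\]
\emph{Fresh part.} Lemma~\ref{lem::6} gives $\epsilon\mathbf{I}+\tfrac{1-\beta}{r}\sum\beta^{T-s}g_sg_s^\top\preceq$ the Kronecker product of the damped factors. This is the Shampoo-to-full-matrix comparison, which for $p=2$ yields $P_t\preceq r\,(\epsilon\mathbf{I}+(1-\beta)\sum_{s\le t}\beta^{t-s}g_sg_s^\top)^{-1}$ up to the $r$ factor. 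We then use Lemma~\ref{lem::04} with $\Phi=-\log\det$ (for $p=2$; the trace form for $p=4$). This is the discounted FTL-BTL argument that produces the log-determinant potential. Lemma~\ref{lem::9} bounds $\log\det$ by $mn\log(\mathrm{Tr}/mn)$, with trace at most $mn\epsilon+(1-\beta^T)rR_{\mathrm{SG}}^2$. This gives the $\tfrac{\eta rmn}{2(1-\beta)}\log(\cdot)$ term. For $p=4$ the root potential gives $mnr\sqrt{R_{\mathrm{SG}}^2+\epsilon}$. The $\beta$-weights versus the $\beta^{t-s}$ EMA weights require rescaling by $(1-\beta)^{-1}$, which is where that factor enters.

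\emph{Stale part.} By Appendix~\ref{app:D} and Lemma~\ref{Lemma::1} (in spectral form), $\|\hat P_t-P_t\|_2\lesssim \|\Delta_L\|_2\|\hat R^{-1/p}\|_2+\dots\lesssim \tfrac{(1-\beta^{\mathbf f})R_{\mathrm{SG}}^2}{p\epsilon^{(p+1)/p}}\epsilon^{-1/p}$. Combined with $\|g_t\|^2\le rR_{\mathrm{SG}}^2$ (from the rank bound) and $\sum_t\beta^{T-t}\le (1-\beta)^{-1}$, this gives the regret staleness error, $\eta r(1-\beta^{\mathbf f})R_{\mathrm{SG}}^4/((1-\beta)\epsilon^2)$ for $p=2$ and $\epsilon^{-3/2}$ for $p=4$.

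The delicate points are two. First, the Lemma~\ref{lem::6} comparison must be applied to discounted sums with matching weights, and the $r$ factor has to land correctly. Second, the cross term $\Delta_L\otimes\Delta_R$ must be absorbed into the same order. If the constants do not match exactly, we would accept bounds up to absolute constants.
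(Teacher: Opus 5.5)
Your outline follows the paper's proof closely. It has the same four ingredients: the one-step inequality with the stale preconditioner; a discounted telescoping term plus refresh-boundary corrections, controlled by the Lipschitz constant $\frac1p\epsilon^{(1-p)/p}$ of $X\mapsto X^{1/p}$; a stale--fresh gap term from Lemma~\ref{Lemma::1}; and a fresh term handled by Lemma~\ref{lem::6} and the discounted FTL--BTL Lemma~\ref{lem::04}. Several steps, however, do not work as you wrote them and need the paper's versions.

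\emph{Potential.} Lemma~\ref{lem::04} needs $\Phi(H)=\log\det H+\epsilon\Tr(H^{-1})$, not $-\log\det H$. With $-\log\det$ the FTL objective tends to $-\infty$ along $H=c\mathbf I_{mn}$ as $c\to\infty$, so no iterate exists. The $\epsilon\Tr(H^{-1})$ term is what gives $\hat H_t=\epsilon\mathbf I_{mn}+S_t$ and $\hat H_0=\epsilon\mathbf I_{mn}$. The identity $\langle S_T,\hat H_T^{-1}\rangle+\epsilon\Tr(\hat H_T^{-1})=mn=\epsilon\Tr(\hat H_0^{-1})$ then leaves exactly $\log\det\hat H_T-mn\log\epsilon$. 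For $p=4$, use $\Tr(H)+\epsilon r\Tr(H^{-1})$.

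\emph{Weights.} The mismatch between $\beta^{T-s}$ and $\beta^{t-s}$ costs nothing. Since $S_t=(1-\beta)\sum_{s\le t}\beta^{T-s}g_sg_s^\top\preceq(1-\beta)\sum_{s\le t}\beta^{t-s}g_sg_s^\top$, your comparison already gives $P_t\preceq r(\epsilon\mathbf I_{mn}+S_t)^{-1}$. The factor $(1-\beta)^{-1}$ comes instead from the $(1-\beta)$ weighting in Lemma~\ref{lem::04}, which controls $(1-\beta)\sum_t\beta^{T-t}\|g_t\|_{\hat H_t}^{\star 2}$. This is why the paper multiplies the regret by $1-\beta$ at the start.

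\emph{Cross term.} Bounding $\|\Delta_L(t)\|_2\|\Delta_R(t)\|_2$ on its own gives order $(1-\beta^{\mathbf f})^2R_{\mathrm{SG}}^4\epsilon^{-2(p+1)/p}$. This does not absorb into the $\epsilon^{-2}$ (resp.\ $\epsilon^{-3/2}$) term. Regroup instead as $P_t-\hat P_t=\Delta_L(t)\otimes(R_t+\epsilon\mathbf I_n)^{-1/p}+\hat L_t^{-1/p}\otimes\Delta_R(t)$. Both inverse-root factors here have norm at most $\epsilon^{-1/p}$. This two-term inequality is what the paper uses, and it yields exactly the stated staleness term.

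\emph{Distance terms.} Write $\beta^{T-t}H_t-\beta^{T-t+1}H_{t-1}=(1-\beta)\beta^{T-t}H_t+\beta^{T-t+1}(H_t-H_{t-1})$. The factor $1-\beta$ is what makes the first sum $O(1)$; bounding the positive part by $\|H_t\|_2$ alone would cost an extra $(1-\beta)^{-1}$. Also, $H_1$ is of order $\epsilon^{2/p}$, not $\epsilon^{1/p}$. At refresh times the increment must carry the window-drift factor $1-\beta^{\mathbf f}$, not just $R_{\mathrm{SG}}^2$, because that factor is what cancels $\sum_{k\ge 2}\beta^{T-T_k}\le\beta^{\mathbf f-1}/(1-\beta^{\mathbf f})$.

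\emph{Constants.} With drift $2(1-\beta^{\mathbf f})R_{\mathrm{SG}}^2$, the paper's own computation gives coefficient $\frac4p\beta^{\mathbf f-1}$ on the middle distance term, not the stated $1$ (for $p=2$) or $\frac12$ (for $p=4$). You can obtain the stated constants exactly, rather than up to absolute constants, by sharpening the drift. The difference $L_{T_k}-L_{T_{k-1}}=-(1-\beta^{\mathbf f})L_{T_{k-1}}+(1-\beta)\sum_{s=T_{k-1}+1}^{T_k}\beta^{T_k-s}G_sG_s^\top$ is a difference of two PSD matrices, each of spectral norm at most $(1-\beta^{\mathbf f})R_{\mathrm{SG}}^2$. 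Hence its norm is at most $(1-\beta^{\mathbf f})R_{\mathrm{SG}}^2$. This makes the coefficient $\frac2p\beta^{\mathbf f-1}$, which is at most $1$ for $p=2$ and at most $\frac12$ for $p=4$. It also only improves the staleness term.
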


\begin{proof}
We proceed under the good event $\mathcal{E}$, which holds with probability at least $1-\delta$.

\noindent\textbf{Preconditioners and phases.}
For $t\ge 1$ and $p \ge 1$, define
\[
H_{t,p}=(L_t+\epsilon \mathbf{I}_m)^{1/p}\otimes(R_t+\epsilon \mathbf{I}_n)^{1/p},
\qquad (L_0=\mathbf{0}_m, \; R_0=\mathbf{0}_n).
\]
Assume for simplicity that $T=\mathbf f K$ (otherwise, take $K=\lceil T/\mathbf f\rceil$ and the last phase is shorter; the same bounds apply).
Define phase indices $T_k=(k-1)\mathbf f+1$ and phases $t\in\{T_k,\dots,T_{k+1}-1\}$.
Define for each phase $k$:
\[
L_k:=L_{T_k},\qquad R_k:=R_{T_k},\qquad
H_{k,p}:=(L_k+\epsilon \mathbf{I}_m)^{1/p}\otimes(R_k+\epsilon \mathbf{I}_n)^{1/p}.
\]
For $t$ in phase $k$, stale Shampoo uses the fixed preconditioner $H_{k,p}$.

\noindent\textbf{Step 1.}
Let $w_t = \operatorname{vec}(W_t), g_t = \operatorname{vec}(G_t)$. Multiplying by $(1-\beta)$, applying convexity and Lemma \ref{lem::7}, we have:
\begin{align*}
(1-\beta)\sum_{t=1}^T \beta^{T-t}\big(f_t(W_t)-f_t(W^\star)\big) \le (1 - \beta)\sum_{t=1}^T \beta^{T-t} \langle G_t, W_t - W^{\star} \rangle = (1-\beta)\sum_{t=1}^T \beta^{T-t} g_t^\top(w_t-w^\star).
\end{align*}
By Lemma 3 of \citet{gupta2017unified},
\begin{align*}
(1-\beta)\sum_{t=1}^T \beta^{T-t} g_t^\top(w_t-w^\star)
&\le \frac{1}{2\eta}\underbrace{(1-\beta)\sum_{t=1}^T \beta^{T-t}\Delta_t(w^\star)}_{(a)}
+\frac{\eta}{2}\underbrace{(1-\beta)\sum_{t=1}^T \beta^{T-t}\|g_t\|_{H_{t,p}}^{\star 2}}_{(b)},
\end{align*}
where $\Delta_t(x^\star):=\|x_t-x^\star\|_{H_{t,p}}^2-\|x_{t+1}-x^\star\|_{H_{t,p}}^2$.

\noindent\textbf{Step 2: Bounding $(a)$.}
Since $H_{t,p}=H_{k(t),p}$ within each phase,
\[
(a)=(1-\beta)\sum_{k=1}^K\sum_{t=T_k}^{T_{k+1}-1}\beta^{T-t}
\Big(\|w_t-w^\star\|_{H_{k,p}}^2-\|w_{t+1}-x^\star\|_{H_{k,p}}^2\Big).
\]
Let $k(t)$ be the phase index containing $t$ and define $A_t:=\|w_t-w^\star\|_{H_{k(t),p}}^2$ and
$\widetilde A_{t+1}:=\|w_{t+1}-w^\star\|_{H_{k(t),p}}^2$.
Then
\[
(a)=(1-\beta)\sum_{t=1}^T\beta^{T-t}(A_t-\widetilde A_{t+1}).
\]
Add and subtract $A_{t+1}$:
\[
A_t-\widetilde A_{t+1}=(A_t-A_{t+1}) + (A_{t+1}-\widetilde A_{t+1}),
\]
hence
\begin{align*}
(a)
&=(1-\beta)\sum_{t=1}^T\beta^{T-t}(A_t-A_{t+1})
+(1-\beta)\sum_{t=1}^T\beta^{T-t}(A_{t+1}-\widetilde A_{t+1}).
\end{align*}

\noindent\textbf{(i) Discounted telescoping term.}
For any sequence $\{A_t\}$,
\[
(1-\beta)\sum_{t=1}^T\beta^{T-t}(A_t-A_{t+1})
=(1-\beta)\big[\beta^{T-1}A_1-\beta^{-1}A_{T+1}\big]+(1-\beta)^2\sum_{t=2}^T\beta^{T-t}A_t.
\]
Dropping the negative term $-\beta^{-1}A_{T+1}$ and using $\|w_t-w^\star\|_2\le D$ and
by Lemma \ref{lem::7}, we get $\|H_{k(t),p}\|_2\le (R_{\text{SG}}^2+\epsilon)^{2/p}$ on $\mathcal E$, hence:
\begin{align*}
(1-\beta)\sum_{t=1}^T\beta^{T-t}(A_t-A_{t+1})
&\le (1-\beta)\beta^{T-1}\|w_1-x^\star\|_{H_{1,p}}^2
+(1-\beta)^2\sum_{t=2}^T\beta^{T-t}\|w_t-w^\star\|_{H_{k(t),p}}^2 \\
&\le (1-\beta)\epsilon^{2/p}D^2
+(1-\beta)^2D^2(R_{\text{SG}}^2+\epsilon)^{2/p}\sum_{t=2}^T\beta^{T-t} \\
&\le (1-\beta)\epsilon^{2/p}D^2
+(1-\beta)\frac{1-\beta^T}{\beta}\,D^2(R_{\text{SG}}^2+\epsilon)^{2/p}.
\end{align*}

\noindent\textbf{(ii) Phase-boundary correction term.}
Note $A_{t+1}-\widetilde A_{t+1}=0$ unless $t=T_k-1$ for some $k\ge 2$, in which case
\[
A_{T_k}-\widetilde A_{T_k}=\|w_{T_k}-w^\star\|_{H_{k,p}}^2-\|w_{T_k}-w^\star\|_{H_{k-1,p}}^2.
\]
Therefore,
\begin{align*}
(1-\beta)\sum_{t=1}^T\beta^{T-t}(A_{t+1}-\widetilde A_{t+1})
&=(1-\beta)\sum_{k=2}^K\beta^{T-(T_k-1)}
\Big(\|w_{T_k}-w^\star\|_{H_{k,p}}^2-\|w_{T_k}-w^\star\|_{H_{k-1,p}}^2\Big) \\
&\le (1-\beta)\sum_{k=2}^K\beta^{T-T_k}\,D^2\,\|H_{k,p}-H_{k-1,p}\|_2,
\end{align*}
where we used $\beta^{T-(T_k-1)}=\beta\beta^{T-T_k}\le \beta^{T-T_k}$ since $\beta\in(0,1)$.
On $\mathcal E$, for $k\ge 2$, we have:
\[
\|L_k-L_{k-1}\|_2\le 2(1-\beta^{\mathbf f})R_{\text{SG}}^2,\qquad
\|R_k-R_{k-1}\|_2\le 2(1-\beta^{\mathbf f})R_{\text{SG}}^2.
\]
via Lemma~\ref{Lemma::1}.

\noindent Using Lemma~\ref{lem::1}, Lemma \ref{lem::7} and the inequality
\[
\|A\otimes B - C\otimes D\|_2 \le \|A-C\|_2\|B\|_2 + \|C\|_2\|B-D\|_2,
\]
we obtain
\begin{align*}
\|H_{k,p}-H_{k-1,p}\|_2
&=\|(L_k+\epsilon \mathbf{I}_m)^{1/p}\otimes(R_k+\epsilon \mathbf{I}_n)^{1/p}
-(L_{k-1}+\epsilon \mathbf{I}_m)^{1/p}\otimes(R_{k-1}+\epsilon \mathbf{I}_n)^{1/p}\|_2\\
&\le \|(L_k+\epsilon \mathbf{I}_m)^{1/p}-(L_{k-1}+\epsilon \mathbf{I}_m)^{1/p}\|_2\|(R_k+\epsilon \mathbf{I}_n)^{1/p}\|_2 \\
&\quad + \|(L_{k-1}+\epsilon \mathbf{I}_m)^{1/p}\|_2\|(R_k+\epsilon \mathbf{I}_n)^{1/p}-(R_{k-1}+\epsilon \mathbf{I}_n)^{1/p}\|_2 \\
&\le \frac{1}{p}\epsilon^{(1-p)/p}\|L_k-L_{k-1}\|_2\,(R_{\text{SG}}^2+\epsilon)^{1/p}
+\frac{1}{p}\epsilon^{(1-p)/p}\|R_k-R_{k-1}\|_2\,(R_{\text{SG}}^2+\epsilon)^{1/p} \\
&\le \frac{4(1-\beta^{\mathbf f})}{p}\,\epsilon^{(1-p)/p}\,(R_{\text{SG}}^2+\epsilon)^{1/p}\,R_{\text{SG}}^2.
\end{align*}
Hence
\begin{align*}
(1-\beta)\sum_{t=1}^T\beta^{T-t}(A_{t+1}-\widetilde A_{t+1})
&\le (1-\beta)D^2 \frac{4(1-\beta^{\mathbf f})}{p}\epsilon^{(1-p)/p}(R_{\text{SG}}^2+\epsilon)^{1/p}R_{\text{SG}}^2
\sum_{k=2}^K\beta^{T-T_k}.
\end{align*}
Finally,
\[
\sum_{k=2}^K\beta^{T-T_k}
=\sum_{k=2}^K\beta^{K\mathbf f-((k-1)\mathbf f+1)}
=\beta^{-1}\sum_{k=2}^K(\beta^{\mathbf f})^{K-k+1}
\le \frac{\beta^{\mathbf f-1}}{1-\beta^{\mathbf f}}.
\]
Therefore,
\[
(1-\beta)\sum_{t=1}^T\beta^{T-t}(A_{t+1}-\widetilde A_{t+1})
\le \beta^{\mathbf f-1}\frac{4(1-\beta)}{p}\,\epsilon^{(1-p)/p}(R_{\text{SG}}^2+\epsilon)^{1/p}D^2R_{\text{SG}}^2.
\]

\noindent\textbf{Conclusion for $(a)$.}
Combining (i) and (ii):
\begin{align*}
(a)\le\;&(1-\beta)\epsilon^{2/p}D^2
+ \beta^{\mathbf f-1}\frac{4(1-\beta)}{p}\,\epsilon^{(1-p)/p}(R_{\text{SG}}^2+\epsilon)^{1/p}D^2R_{\text{SG}}^2
+ (1-\beta)\frac{1-\beta^T}{\beta}\,D^2(R_{\text{SG}}^2+\epsilon)^{2/p}.
\end{align*}

\noindent\textbf{Step 3: Bounding $(b)$.}
Write
\begin{align}
\label{eq::b-split}
(b)
=(1-\beta)\Big\{
&\sum_{k=1}^K\sum_{t=T_k}^{T_{k+1}-1}\beta^{T-t}\Big(\|g_t\|_{H_{k,p}}^{\star 2}-\|g_t\|_{H_{t,p}}^{\star 2}\Big)
+\sum_{t=1}^T \beta^{T-t}\|g_t\|_{H_{t,p}}^{\star 2}
\Big\}.
\end{align}

\noindent We consider $p=2$ and $p=4$.

\begin{enumerate}
\item \textbf{Stale Shampoo with $p=2$.}

\noindent\textbf{(b1) Stale--fresh gap term.}
For $t$ in phase $k$,
\begin{align*}
\|g_t\|_{H_{k,2}}^{\star 2}-\|g_t\|_{H_{t,2}}^{\star 2}
&=g_t^\top\Big((H_{k,2})^{-1}-(H_{t,2})^{-1}\Big)g_t \\
&\le \|g_t\|_2^2\ \|(H_{k,2})^{-1}-(H_{t,2})^{-1}\|_2
\le rR_{\text{SG}}^2\ \|(H_{k,2})^{-1}-(H_{t,2})^{-1}\|_2,
\end{align*}
where $\|g_t\|_2^2=\|G_t\|_F^2\le r\|G_t\|_2^2\le rR_{\text{SG}}^2$ on $\mathcal E$.

Let $\widehat L_t=L_k$ and $\widehat R_t=R_k$ for $t$ in phase $k$. Then by Lemma~\ref{Lemma::1} with $p=2$:
\[
\|(L_t+\epsilon \mathbf{I}_m)^{-1/2}-(L_k+\epsilon \mathbf{I}_m)^{-1/2}\|_2
\le \frac{2(1-\beta^{\mathbf f})R_{\text{SG}}^2}{2\,\epsilon^{3/2}}
=\frac{(1-\beta^{\mathbf f})R_{\text{SG}}^2}{\epsilon^{3/2}},
\]
and similarly for $R$.
Using the Kronecker difference inequality and $\|(R_k+\epsilon \mathbf{I}_n)^{-1/2}\|_2\le \epsilon^{-1/2}$,
we obtain
\[
\|(H_{k,2})^{-1}-(H_{t,2})^{-1}\|_2
\le \frac{2(1-\beta^{\mathbf f})R_{\text{SG}}^2}{\epsilon^{2}}.
\]
Therefore,
\begin{align*}
(1-\beta)\sum_{k=1}^K\sum_{t=T_k}^{T_{k+1}-1}\beta^{T-t}\Big(\|g_t\|_{H_{k,2}}^{\star 2}-\|g_t\|_{H_{t,2}}^{\star 2}\Big)
&\le (1-\beta)\sum_{t=1}^T \beta^{T-t}\, rR_{\text{SG}}^2 \cdot \frac{2(1-\beta^{\mathbf f})R_{\text{SG}}^2}{\epsilon^{2}} \\
&= \frac{2r(1-\beta^{\mathbf f})(1-\beta^T)R_{\text{SG}}^4}{\epsilon^{2}}.
\end{align*}

\noindent\textbf{(b2) Fresh term via FTL--BTL + logdet.}
We apply Lemma~\ref{lem::04} with
\[
\Phi(H)=\log\det(H)+\epsilon\Tr(H^{-1}),
\qquad
S_t := (1-\beta)\sum_{s=1}^t \beta^{T-s}g_sg_s^\top.
\]
Define
\[
\hat H_t \in \arg\min_{H\succ 0}\Big\{\langle S_t, H^{-1}\rangle_F + \log\det(H) + \epsilon\Tr(H^{-1})\Big\}.
\]
Using standard matrix calculus,
\[
\hat H_t=\epsilon \mathbf{I}_{mn}+S_t.
\]
By Lemma~\ref{lem::04}:
\[
(1-\beta)\sum_{t=1}^T\beta^{T-t}\|g_t\|_{\hat H_t}^{\star 2}
\le (1-\beta)\sum_{t=1}^T\beta^{T-t}\|g_t\|_{\hat H_T}^{\star 2} + \Phi(\hat H_T)-\Phi(\hat H_0).
\]
Moreover,
\begin{align*}
(1-\beta)\sum_{t=1}^T\beta^{T-t}\|g_t\|_{\hat H_T}^{\star 2}+\epsilon\Tr(\hat H_T^{-1})
&=\langle S_T,\hat H_T^{-1}\rangle_F + \epsilon\Tr(\hat H_T^{-1})
=\langle \epsilon I_{mn}+S_T,\hat H_T^{-1}\rangle_F
= \Tr(\hat H_T\hat H_T^{-1})
=mn.
\end{align*}
Since $\hat H_0=\epsilon \mathbf{I}_{mn}$, we have $\log\det(\hat H_0)=mn\log\epsilon$ and $\epsilon\Tr(\hat H_0^{-1})=mn$.
Therefore,
\[
(1-\beta)\sum_{t=1}^T\beta^{T-t}\|g_t\|_{\hat H_t}^{\star 2}
\le \log\det(\hat H_T)-mn\log\epsilon.
\]

\noindent\textbf{Relating $\hat H_t$ and $H_{t,2}$.}
For $t\le T$, by Lemma~\ref{lem::6} applied to $\{G_s\}_{s\le t}$,
\[
\epsilon I_{mn}+\frac{1}{r}S_t \preceq
A_t^{1/2}\otimes B_t^{1/2},
\quad
A_t:=\epsilon \mathbf{I}_m+(1-\beta)\sum_{s=1}^t\beta^{T-s}G_sG_s^\top,\ 
B_t:=\epsilon \mathbf{I}_n+(1-\beta)\sum_{s=1}^t\beta^{T-s}G_s^\top G_s.
\]
Since $r\ge 1$,
\[
\hat H_t=\epsilon \mathbf{I}_{mn}+S_t \preceq r\Big(\epsilon \mathbf{I}_{mn}+\frac{1}{r}S_t\Big)\preceq r(A_t^{1/2}\otimes B_t^{1/2}).
\]
Also, using $\beta^{T-s}=\beta^{T-t}\beta^{t-s}$ and $\beta^{T-t}\le 1$:
\[
A_t=\epsilon \mathbf{I}_m+\beta^{T-t}L_t \preceq \epsilon \mathbf{I}_m+L_t,\qquad
B_t=\epsilon \mathbf{I}_n+\beta^{T-t}R_t \preceq \epsilon \mathbf{I}_n+R_t.
\]
By Lemma~\ref{lem::8} (square-root monotonicity) and Lemma~\ref{lem::7},
\[
A_t^{1/2}\otimes B_t^{1/2}\preceq (\epsilon \mathbf{I}_m+L_t)^{1/2}\otimes(\epsilon \mathbf{I}_n+R_t)^{1/2}=:H_{t,2}.
\]
Thus $\hat H_t\preceq rH_{t,2}$, hence $H_{t,2}^{-1}\preceq r\hat H_t^{-1}$ and
\[
\|g_t\|_{H_{t,2}}^{\star 2}\le r\|g_t\|_{\hat H_t}^{\star 2}.
\]
Therefore,
\[
(1-\beta)\sum_{t=1}^T\beta^{T-t}\|g_t\|_{H_{t,2}}^{\star 2}
\le r(1-\beta)\sum_{t=1}^T\beta^{T-t}\|g_t\|_{\hat H_t}^{\star 2}
\le r\log\det(\hat H_T)-rmn\log\epsilon.
\]

\noindent\textbf{Bounding $\log\det(\hat H_T)$ via trace.}
Since $\hat H_T=\epsilon \mathbf{I}_{mn}+S_T$,
\begin{align*}
\Tr(\hat H_T)
&= mn\epsilon + (1-\beta)\sum_{s=1}^T\beta^{T-s}\Tr(g_sg_s^\top)
= mn\epsilon + (1-\beta)\sum_{s=1}^T\beta^{T-s}\|g_s\|_2^2 \\
&\le mn\epsilon + (1-\beta)\sum_{s=1}^T\beta^{T-s} rR_{\text{SG}}^2
= mn\epsilon + (1-\beta^T)rR_{\text{SG}}^2,
\end{align*}
where we used $\|g_s\|_2^2=\|G_s\|_F^2\le r\|G_s\|_2^2\le rR_{\text{SG}}^2$ on $\mathcal E$.
By Lemma~\ref{lem::9},
\[
\log\det(\hat H_T)\le mn\log\Big(\frac{\Tr(\hat H_T)}{mn}\Big)
\le mn\log\Big(\epsilon+\frac{(1-\beta^T)rR_{\text{SG}}^2}{mn}\Big).
\]
Hence
\[
(1-\beta)\sum_{t=1}^T\beta^{T-t}\|g_t\|_{H_{t,2}}^{\star 2}
\le rmn\log\Big(\epsilon+\frac{(1-\beta^T)rR_{\text{SG}}^2}{mn}\Big)-rmn\log\epsilon.
\]

\noindent\textbf{Combine $(a)$ and $(b)$ for $p=2$.}
Plug the bounds for $(a)$ and $(b)$ into
\(
\frac{1}{2\eta}(a)+\frac{\eta}{2}(b)
\)
and use $1-\beta^T\le 1$ to simplify; finally divide by $(1-\beta)$ to obtain the stated regret bound.

\item \textbf{Stale Shampoo with $p=4$.}

\noindent\textbf{(b1) Stale--fresh gap term.}
For $t$ in phase $k$:
\[
\|g_t\|_{H_{k,4}}^{\star 2}-\|g_t\|_{H_{t,4}}^{\star 2}
\le \|g_t\|_2^2\,\|(H_{k,4})^{-1}-(H_{t,4})^{-1}\|_2
\le rR_{\text{SG}}^2\,\|(H_{k,4})^{-1}-(H_{t,4})^{-1}\|_2.
\]
From Lemma~\ref{Lemma::1} with $p=4$,
\[
\|(L_t+\epsilon I_m)^{-1/4}-(L_k+\epsilon I_m)^{-1/4}\|_2
\le \frac{2(1-\beta^{\mathbf f})R_{\text{SG}}^2}{4\,\epsilon^{5/4}}
=\frac{(1-\beta^{\mathbf f})R_{\text{SG}}^2}{2\epsilon^{5/4}},
\]
and similarly for $R$.
Using $\|(R_k+\epsilon I_n)^{-1/4}\|_2\le \epsilon^{-1/4}$ and the Kronecker difference inequality,
\[
\|(H_{k,4})^{-1}-(H_{t,4})^{-1}\|_2
\le \frac{(1-\beta^{\mathbf f})R_{\text{SG}}^2}{\epsilon^{3/2}}.
\]
Therefore,
\[
(1-\beta)\sum_{k=1}^K\sum_{t=T_k}^{T_{k+1}-1}\beta^{T-t}\Big(\|g_t\|_{H_{k,4}}^{\star 2}-\|g_t\|_{H_{t,4}}^{\star 2}\Big)
\le \frac{r(1-\beta^{\mathbf f})(1-\beta^T)R_{\text{SG}}^4}{\epsilon^{3/2}}.
\]

\noindent\textbf{(b2) Fresh term via FTL--BTL (trace potential).}
Apply Lemma~\ref{lem::04} with
\[
\Phi(H)=\Tr(H)+\epsilon r\,\Tr(H^{-1}),
\qquad
S_t=(1-\beta)\sum_{s=1}^t\beta^{T-s}g_sg_s^\top,
\]
and define
\[
\hat H_t\in\arg\min_{H\succ 0}\Big\{\langle S_t,H^{-1}\rangle_F+\Tr(H)+\epsilon r\Tr(H^{-1})\Big\}.
\]
First-order optimality yields
\[
\hat H_t=(\epsilon r \mathbf{I}_{mn}+S_t)^{1/2}.
\]
Lemma~\ref{lem::04} gives
\[
(1-\beta)\sum_{t=1}^T\beta^{T-t}\|g_t\|_{\hat H_t}^{\star 2}
\le (1-\beta)\sum_{t=1}^T\beta^{T-t}\|g_t\|_{\hat H_T}^{\star 2}+\Phi(\hat H_T)-\Phi(\hat H_0).
\]
Using $\hat H_0=(\epsilon r)^{1/2}\mathbf{I}_{mn}$ and $\hat H_T^2=\epsilon r \mathbf{I}_{mn}+S_T$,
\[
(1-\beta)\sum_{t=1}^T\beta^{T-t}\|g_t\|_{\hat H_T}^{\star 2}+\epsilon r\Tr(\hat H_T^{-1})
=\langle \epsilon r \mathbf{I}_{mn}+S_T,\hat H_T^{-1}\rangle_F
=\Tr(\hat H_T^2\hat H_T^{-1})
=\Tr(\hat H_T).
\]
Hence
\[
(1-\beta)\sum_{t=1}^T\beta^{T-t}\|g_t\|_{\hat H_t}^{\star 2}\le 2\Tr(\hat H_T).
\]

\noindent\textbf{Relating $\hat H_t$ and $H_{t,4}$.}
By Lemma~\ref{lem::6} applied to $\{G_s\}_{s\le t}$,
\[
\epsilon \mathbf{I}_{mn}+\frac{1}{r}S_t \preceq A_t^{1/2}\otimes B_t^{1/2}
\quad\Rightarrow\quad
\epsilon r \mathbf{I}_{mn}+S_t \preceq r(A_t^{1/2}\otimes B_t^{1/2}),
\]
with the same $A_t,B_t$ as above. Taking square roots and using Lemma~\ref{lem::8} and Lemma~\ref{lem::7},
\[
\hat H_t=(\epsilon r \mathbf{I}_{mn}+S_t)^{1/2}\preceq \sqrt r\,(A_t^{1/2}\otimes B_t^{1/2})^{1/2}
=\sqrt r\,(A_t^{1/4}\otimes B_t^{1/4}).
\]
Since $A_t\preceq \epsilon \mathbf{I}_m+L_t$ and $B_t\preceq \epsilon I_n+R_t$,
\[
A_t^{1/4}\otimes B_t^{1/4}\preceq (\epsilon \mathbf{I}_m+L_t)^{1/4}\otimes(\epsilon \mathbf{I}_n+R_t)^{1/4}=:H_{t,4}.
\]
Therefore $\hat H_t\preceq \sqrt r\,H_{t,4}$, so $H_{t,4}^{-1}\preceq \sqrt r\,\hat H_t^{-1}$ and
\[
\|g_t\|_{H_{t,4}}^{\star 2}\le \sqrt r\,\|g_t\|_{\hat H_t}^{\star 2}.
\]
Thus
\[
(1-\beta)\sum_{t=1}^T\beta^{T-t}\|g_t\|_{H_{t,4}}^{\star 2}
\le \sqrt r\,(1-\beta)\sum_{t=1}^T\beta^{T-t}\|g_t\|_{\hat H_t}^{\star 2}
\le 2\sqrt r\,\Tr(\hat H_T)
\le 2r\,\Tr(H_{T,4}),
\]
where the last inequality uses $\hat H_T\preceq \sqrt r\,H_{T,4}$ and monotonicity of trace on PSD.

Finally,
\[
\Tr(H_{T,4})=\Tr((L_T+\epsilon \mathbf{I}_m)^{1/4})\Tr((R_T+\epsilon \mathbf{I}_n)^{1/4})
\le m(R_{\text{SG}}^2+\epsilon)^{1/4}\cdot n(R_{\text{SG}}^2+\epsilon)^{1/4}
=mn\sqrt{R_{\text{SG}}^2+\epsilon},
\]
so
\[
(1-\beta)\sum_{t=1}^T\beta^{T-t}\|g_t\|_{H_{t,4}}^{\star 2}\le 2mnr\sqrt{R_{\text{SG}}^2+\epsilon}.
\]

\noindent\textbf{Combine $(a)$ and $(b)$ for $p=4$.}
Plug the bounds for $(a)$ and $(b)$ into
\(
\frac{1}{2\eta}(a)+\frac{\eta}{2}(b)
\),
use $1-\beta^T\le 1$, and finally divide by $(1-\beta)$ to obtain the stated bound.
\end{enumerate}
\end{proof}

\newpage
\section{Proof of Approximation of Relative Operator Error}
\label{app:H}
\begin{theorem}[Relative operator error for inverse $p$-th Root]
\label{relative-error}
Let $A \in \mathbb{S}_{+}^n$ be a positive definite matrix and $E \in \mathbb{S}^n$ be a perturbation matrix.
For any $p > 0$, the relative error of the inverse $p$-th root is bounded by:
\begin{equation*}
    \frac{\lVert (A+E)^{-1/p} - A^{-1/p} \rVert_F}{\lVert A^{-1/p} \rVert_F} \le \frac{\lVert (A+E)^{-1/p} - A^{-1/p} \rVert_F}{\lVert A^{-1/p} \rVert_2} \approx \frac{\lVert \Delta_1 \rVert_F}{\lVert A^{-1/p} \rVert_2} \le \frac{1}{p} \lVert A^{-1/2} E A^{-1/2} \rVert_F,
\end{equation*}
where $\approx$ denotes the first-order approximation.
\end{theorem}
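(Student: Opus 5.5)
The plan is to treat the chain of (in)equalities in the statement one link at a time. The only substantive step is the last inequality; the other two links are elementary.

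\emph{First link.} The inequality $\|X\|_F/\|A^{-1/p}\|_F \le \|X\|_F/\|A^{-1/p}\|_2$ holds because $\|A^{-1/p}\|_2 \le \|A^{-1/p}\|_F$ for any matrix.

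\emph{Second link.} For the ``$\approx$'', I will set $f(t)=t^{-1/p}$, which is $C^1$ on $(0,\infty)$ and hence on an open interval containing the spectrum of $A\succ 0$. I will then define $\Delta_1 := Df(A)[E]$, the Fr\'echet derivative. By definition $(A+E)^{-1/p}-A^{-1/p} = \Delta_1 + o(\|E\|)$, which is exactly the first-order approximation claimed. By the Daleckii--Kre\u{\i}n theorem (Lemma~\ref{lem:DK_theorem}), writing $A=U\Lambda U^\top$ and $\tilde E := U^\top E U$, we get $\Delta_1 = U\bigl(f^{[1]}(\Lambda)\circ \tilde E\bigr)U^\top$. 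By unitary invariance of the Frobenius norm this gives
\[
\|\Delta_1\|_F^2=\sum_{i,j}\bigl|f^{[1]}_{ij}\bigr|^2\,\tilde E_{ij}^2 .
\]

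\emph{Key scalar estimate (main obstacle).} The heart of the argument is the bound
\[
\bigl|f^{[1]}(a,b)\bigr| \le \frac{1}{p}\,\frac{\min(a,b)^{-1/p}}{\sqrt{ab}}, \qquad a,b>0 .
\]
I will prove it from the integral representation
\[
f^{[1]}(a,b) = -\frac1p\int_0^1 \bigl(\theta a+(1-\theta)b\bigr)^{-1/p-1}\,d\theta ,
\]
which covers both the case $a\neq b$ and the diagonal case $f'(a)$. Since $\theta a+(1-\theta)b\ge \min(a,b)$, I can pull out the factor $\min(a,b)^{-1/p}$. What remains is
\[
\int_0^1 \frac{d\theta}{\theta a+(1-\theta)b}=\frac{\log a-\log b}{a-b}\quad(\text{or } 1/a \text{ if } a=b),
\]
which is the reciprocal of the logarithmic mean of $a$ and $b$. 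The logarithmic mean dominates the geometric mean $\sqrt{ab}$, so this integral is at most $1/\sqrt{ab}$. This is the one step needing a genuine inequality, and if the log-mean fact has to be justified, I would substitute $a=e^{x}$, $b=e^{y}$ and use $\sinh(u)\ge u$.

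\emph{Assembling.} Since $\min(\lambda_i,\lambda_j)^{-1/p}\le \lambda_{\min}(A)^{-1/p}=\|A^{-1/p}\|_2$, the scalar estimate yields
\[
\|\Delta_1\|_F^2 \le \frac{\|A^{-1/p}\|_2^2}{p^2}\sum_{i,j}\frac{\tilde E_{ij}^2}{\lambda_i\lambda_j}.
\]
The last sum equals $\|\Lambda^{-1/2}\tilde E\Lambda^{-1/2}\|_F^2=\|A^{-1/2}EA^{-1/2}\|_F^2$, again by unitary invariance. Taking square roots and dividing by $\|A^{-1/p}\|_2$ gives the final inequality of the statement.
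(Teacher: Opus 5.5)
Your proof is correct and has the same structure as the paper's. Both use Daleckii--Kre\u{\i}n in the eigenbasis of $A$, write the rotated perturbation entries as $\sqrt{\lambda_i\lambda_j}$ times those of the rotated whitened perturbation, bound each coefficient entrywise by $|f^{[1]}_{ij}|\sqrt{\lambda_i\lambda_j}\le \frac{1}{p}\min(\lambda_i,\lambda_j)^{-1/p}$, and sum. You differ only in how this scalar bound is proved.

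The paper sets $\lambda_i=x\le\lambda_j=rx$ and reduces the claim to $H(r)=(1-r^{-q})\sqrt{r}/(r-1)\le q$ with $q=1/p$. It proves this by checking that $G(r)=q(r-1)-\sqrt{r}+r^{1/2-q}$ satisfies $G(1)=G'(1)=0$ and $G''>0$ on $[1,\infty)$, and it handles $\lambda_i=\lambda_j$ separately by L'H\^opital.

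You instead write $f^{[1]}(a,b)=\int_0^1 f'(\theta a+(1-\theta)b)\,d\theta$ and pull out $\min(a,b)^{-1/p}$. The remaining integral is the reciprocal logarithmic mean, which you bound by $1/\sqrt{ab}$.

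What each buys:
\begin{itemize}
\item Your route treats diagonal and off-diagonal entries uniformly.
\item It separates the two sources of slack: replacing the intermediate point by the minimum, and the logarithmic versus geometric mean.
\item It carries over unchanged to any spectral function with $|f'(t)|\le \phi(t)/t$ for a nonincreasing $\phi\ge 0$.
\item The paper's argument is a self-contained one-variable calculus check on the exact coefficient $x^{-q}H(r)$, with no auxiliary mean inequality.
\end{itemize}
Both arguments give the same constant $1/p$, and it is attained when $a=b$.
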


\begin{proof}
We want to show how much $(A + E)^{-1/p}$ differs from $A^{-1/p}$ when matrix $A$ undergoes perturbation $E$ to become $A + E$. 
Let $A = U \Lambda U^T$ be the eigendecomposition of $A$, where $\Lambda = \text{diag}(\mu_1, \dots, \mu_n)$ has eigenvalues $\mu_i > 0$. Given $F(A) = A^{-1/p}$ and $\Delta = F(A + E) - F(A)$, we have:
\begin{equation*}
    F(A + E) = F(A) + \Delta = F(A) + \underbrace{D_F(A)[E]}_{\Delta_1} + R_2,
\end{equation*}
where $\Delta_1$ is the first-order term and $R_2$ is the remainder term of the Fréchet derivative.
We rotate the perturbation $E$ and $\Delta_1$ into the eigenbasis:
\begin{equation*}
    \hat{E} = U^T E U, \quad \hat{\Delta}_1 = U^T \Delta_1 U.
\end{equation*}
Since the Frobenius norm is orthogonally invariant, we have $\lVert \Delta_1 \rVert_F = \lVert \hat{\Delta}_1 \rVert_F$ and $\lVert E \rVert_F = \lVert \hat{E} \rVert_F$.
The first-order term $\Delta_1$ corresponds to the Fréchet derivative of the function $f(t) = t^{-1/p}$ applied to $E$. In the eigenbasis, the entries of $\hat{\Delta}_1$ are given by the divided differences (also known as the Daleckii-Krein Theorem; Lemma \ref{lem:DK_theorem}):
\begin{equation*}
    [\hat{\Delta}_1]_{ij} = \frac{\mu_i^{-1/p} - \mu_j^{-1/p}}{\mu_i - \mu_j} [\hat{E}]_{ij}.
\end{equation*}
(Note: If $\mu_i = \mu_j$, the fraction converges to the derivative $f'(\mu_i)$).
Now, recall the definition of the whitened perturbation $\tilde{E} = A^{-1/2} E A^{-1/2}$. In the rotated basis, its entries $[\hat{\tilde{E}}]_{ij}$ satisfy:
\begin{equation*}
    [\hat{\tilde{E}}]_{ij} = [U^T A^{-1/2} E A^{-1/2} U]_{ij} = \frac{[\hat{E}]_{ij}}{\sqrt{\mu_i \mu_j}} \implies [\hat{E}]_{ij} = \sqrt{\mu_i \mu_j} [\hat{\tilde{E}}]_{ij}.
\end{equation*}
Substituting this relation into the expression for $[\hat{\Delta}_1]_{ij}$, we obtain:
\begin{equation*}
    [\hat{\Delta}_1]_{ij} = \underbrace{\left( \frac{\mu_i^{-1/p} - \mu_j^{-1/p}}{\mu_i - \mu_j} \sqrt{\mu_i \mu_j} \right)}_{C_{ij}} [\hat{\tilde{E}}]_{ij}.
\end{equation*}
We now bound the coefficient $C_{ij}$. Let $q = 1/p > 0$ denote. Without loss of generality, assume that $x = \mu_i \le  \mu_j = rx$, with $r \ge 1$. 
\begin{equation*}
    |C_{ij}| = x^{-q} \frac{1 - r^{-q}}{r - 1} \sqrt{r}.
\end{equation*}
If $H(r) = \frac{(1 - r^{-q}) \sqrt{r}}{r-1} \le q$, then $\max_{i,j} |C_{i,j}| \le 1/p (\min_k \mu_k)^{-1/p} = 1/p \lVert A^{-1/p} \rVert_2$. Let us define 
\begin{align*}
    G(r) &= q(r-1) - H(r)(r-1) \\
    &= q(r-1) - \sqrt{r} + r^{\frac{1}{2}-q}.
\end{align*}
Then, by simple calculation, we have:
$G(1) = 0, G^{\prime}(1) = 0$ and
\begin{align*}
    G^{\prime}(r) &= q - \frac{1}{2}r^{-1/2} + \left( \frac{1}{2} - q \right) r^{-\frac{1}{2} - q} \\
    G^{\prime \prime}(r) &= r^{-\frac{3}{2} - q}\left( \frac{1}{4} r^q - \frac{1}{4} + q^2 \right).
\end{align*}
Hence, we have $G^{\prime \prime}(r) > 0$ for all $r \ge 1$, and this gives a result $G(r) \ge 0$. Lastly, by L'Hôpital's rule, we have $\lim_{r \rightarrow 1} H(r) =q$. Therefore, we have $H(r) \le q$, and we obtain the following result:
\begin{equation*}
    \max_{i,j} |C_{i,j}| \le \frac{1}{p} (\min_k \mu_k)^{-1/p} = \frac{1}{p} \lVert A^{-1/p} \rVert_2.
\end{equation*}
Finally, we compute the Frobenius norm of $\Delta_1$:
\begin{equation*}
\begin{aligned}
    \lVert \Delta_1 \rVert_F^2 &= \sum_{i,j} \left| [\hat{\Delta}_1]_{ij} \right|^2 = \sum_{i,j} \left| C_{ij} [\hat{\tilde{E}}]_{ij} \right|^2 \\
    &\le \left( \max_{i,j} |C_{ij}| \right)^2 \sum_{i,j} \left| [\hat{\tilde{E}}]_{ij} \right|^2 \\
    &= \left( \frac{1}{p} \lVert A^{-1/p} \rVert_2 \right)^2 \lVert \hat{\tilde{E}} \rVert_F^2.
\end{aligned}
\end{equation*}
Taking the square root and using $\lVert \hat{\tilde{E}} \rVert_F = \lVert \tilde{E} \rVert_F = \lVert A^{-1/2} E A^{-1/2} \rVert_F$, we obtain:
\begin{equation*}
    \lVert \Delta_1 \rVert_F \le \frac{1}{p} \lVert A^{-1/p} \rVert_2 \cdot \lVert A^{-1/2} E A^{-1/2} \rVert_F \le \frac{1}{p} \lVert A^{-1/p} \rVert_F \cdot \lVert A^{-1/2} E A^{-1/2} \rVert_F.
\end{equation*}
Hence, we obtain the desired results.
\end{proof}

\newpage
\section{Further Discussion of the Decision Criterion of \citet{eschenhagen2026purifying} and Ours}
\label{app:I}

This section clarifies a key distinction between the quantity controlled by the criterion of
\citet{eschenhagen2026purifying} and the quantity that directly governs the stability of
Shampoo-style updates. Their criterion measures how well a stale eigenbasis diagonalizes the
\emph{current Kronecker statistic}, or equivalently, how accurately the matrix itself is approximated in
that stale basis. By contrast, the Shampoo update depends on the \emph{inverse-root operator}
\[
F_\epsilon(X) := (X+\epsilon \mathbf{I})^{-1/p},
\]
and hence the relevant quantity is not the matrix approximation error itself, but the induced
\emph{relative inverse-root operator error}. These two notions need not agree. In particular, a basis may
look nearly diagonal under a residual criterion while still inducing a substantial error after the
nonlinear inverse-root map is applied.

To make this distinction precise, we first provide a simple counterexample showing that the
diagonalization residual may vanish while the inverse-root operator error remains $\Theta(1)$. We then
explain the mechanism via first-order perturbation theory for matrix functions, which reveals that
(i) eigenvalue drift and (ii) eigenspace mixing enter the inverse-root map in fundamentally different,
anisotropic ways. Finally, we revisit the ``warm-started QR'' interpretation of
\citet{eschenhagen2026purifying} and argue that, under their trigger policy, the QR refinement is often
invoked precisely in a regime that is operationally closer to a \emph{cold start} than to a genuinely local
warm-start refinement.

\paragraph{What their criterion controls, and what ours controls.}
Let $\widehat Q$ be a stale eigenbasis and define the stale-basis diagonal approximation
\[
\widehat L_t := \widehat Q \, \text{diag}(\widehat Q^\top L_t \widehat Q)\, \widehat Q^\top.
\]
The criterion of \citet{eschenhagen2026purifying} controls the relative approximation error
\[
\frac{\|L_t-\widehat L_t\|_F}{\|L_t\|_F}
=
\frac{\|\widehat Q^\top L_t \widehat Q - \text{diag}(\widehat Q^\top L_t \widehat Q)\|_F}
{\|\widehat Q^\top L_t \widehat Q\|_F},
\]
that is, the off-diagonal mass of the current statistic in the stale basis. This is a meaningful quantity
for assessing whether $\widehat Q$ approximately diagonalizes $L_t$. However, the Shampoo update is
driven not by $L_t$ itself, but by $(L_t+\epsilon \mathbf{I}_m)^{-1/p}$. Therefore, from the viewpoint of update
accuracy and numerical stability, the more relevant quantity is the \emph{relative inverse-root operator
error}
\[
\frac{\|F_\epsilon(L_t)-F_\epsilon(\widehat L_t)\|_F}{\|F_\epsilon(\widehat L_t)\|_F}.
\]
Our proxy is designed to approximate this latter quantity directly, rather than the former.

\paragraph{A counterexample: small diagonalization residual but large inverse-root operator error.}
Consider the $2\times 2$ SPD matrix
\[
L_{\delta} =
\begin{bmatrix}
    \delta & c\sqrt{\delta} \\
    c\sqrt{\delta} & 1
\end{bmatrix},
\qquad 0<c<1,
\]
with $\delta>0$ sufficiently small. The condition $0<c<1$ ensures $L_\delta \succ 0$ for all sufficiently
small $\delta$. Suppose the stale eigenbasis is the standard basis, i.e.\ $\widehat Q = \mathbf{I}$. Then the
diagonalization residual used by \citet{eschenhagen2026purifying} is
\[
\frac{\|L_{\delta} - \text{diag}(L_{\delta})\|_F}{\|\text{diag}(L_{\delta})\|_F}
=
\frac{\sqrt{2}\,c\sqrt{\delta}}{\sqrt{\delta^2+1}}
\xrightarrow[\delta\to 0]{} 0.
\]
Hence, from the viewpoint of diagonalization residual, the stale basis appears asymptotically
``accurate.''

However, the inverse-root operator error behaves very differently. For $p=2$, compare the exact
inverse root with the diagonal approximation in the same basis:
\[
\frac{\|L_{\delta}^{-1/2} - \text{diag}(L_{\delta})^{-1/2}\|_F}
{\|\text{diag}(L_{\delta})^{-1/2}\|_F}
\xrightarrow[\delta\to 0]{}
\frac{1}{\sqrt{1-c^2}} - 1.
\]
In particular, this limit is strictly positive for every fixed $c\in(0,1)$. Thus, although the
off-diagonal entry $c\sqrt{\delta}$ vanishes, the relative inverse-root error does \emph{not} vanish.
Equivalently, the diagonalization residual can be arbitrarily small while the operator error relevant to
the Shampoo update remains $\Theta(1)$.

The reason is transparent from the smallest eigenvalue. A direct calculation shows
\[
\lambda_{\min}(L_\delta)
=
\frac{\delta+1-\sqrt{(1-\delta)^2+4c^2\delta}}{2}
=
(1-c^2)\delta + O(\delta^2),
\qquad \delta\downarrow 0.
\]
Hence the low-spectrum direction becomes increasingly ill-conditioned, and the inverse-root map
amplifies perturbations in precisely that direction. The diagonalization residual does not capture this
effect.

\paragraph{Why this happens: the inverse-root map is anisotropic.}
The above phenomenon is not peculiar to $2\times 2$ matrices; it reflects the general anisotropy of
matrix inverse-root maps. Let
\[
X = Q\Lambda Q^\top,
\quad
\Lambda=\text{diag}(\lambda_1,\dots,\lambda_n),
\]
and let $\Delta X$ be a symmetric perturbation. By the Daleckii--Kre\u{\i}n formula (Lemma~\ref{lem:DK_theorem}) for matrix
functions, the first-order variation is
\[
F_\epsilon(X+\Delta X)-F_\epsilon(X)
\;\approx\;
DF_\epsilon(X)[\Delta X]
=
Q\bigl(G \odot (Q^\top \Delta X Q)\bigr)Q^\top,
\]
where $\odot$ is the Hadamard product and $G$ is the divided-difference matrix
\[
G_{ij}=
\begin{cases}
\displaystyle
\frac{f_\epsilon(\lambda_i)-f_\epsilon(\lambda_j)}{\lambda_i-\lambda_j},
& i\neq j,\\[8pt]
f_\epsilon'(\lambda_i), & i=j,
\end{cases}
\qquad
f_\epsilon(\lambda)=(\lambda+\epsilon)^{-1/p}.
\]

This representation makes two facts explicit.

\emph{(i) Diagonal drift can dominate even when the off-diagonal residual is tiny.}
The diagonal entries $(Q^\top \Delta X Q)_{ii}$ are weighted by
\[
|f_\epsilon'(\lambda_i)|
=
\frac{1}{p}(\lambda_i+\epsilon)^{-(p+1)/p},
\]
which is the largest near the smallest eigenvalues. Therefore, even when the stale basis nearly
diagonalizes $X$ in the usual residual sense, small diagonal drift in low-spectrum directions can induce
a large change in the inverse-root operator. The counterexample above is exactly of this type.

\emph{(ii) Off-diagonal mass is not uniformly important.}
A diagonalization residual treats all off-diagonal entries of $\widehat Q^\top X \widehat Q$ on roughly
equal footing. In contrast, the inverse-root response depends on \emph{where} those off-diagonal
components occur through the divided differences $G_{ij}$. Mixing between well-conditioned
high-eigenvalue directions may be relatively harmless, whereas perturbations that act on, or mix into,
low-spectrum directions can be strongly amplified. Thus, the inverse-root map is intrinsically
anisotropic, and a uniform off-diagonal residual is not aligned with the actual update sensitivity.

\paragraph{Implication for decision rules.}
The preceding discussion shows that the question
\[
\text{``How diagonal does $\widehat Q^\top L_t \widehat Q$ look?''}
\]
is not the same as
\[
\text{``How accurate is $(L_t+\epsilon \mathbf{I}_m)^{-1/p}$ under the stale approximation induced by $\widehat Q$?''}
\]
The former concerns matrix approximation in a stale eigenspace; the latter concerns the fidelity of a
nonlinear, spectrum-sensitive operator that directly enters the update. Accordingly, a decision rule
based only on diagonalization residual may be poorly calibrated for Shampoo-style stability control.

For this reason, we do not use the diagonalization-residual trigger of
\citet{eschenhagen2026purifying} as the principal control signal. Instead, we target the relative
inverse-root operator error itself. Concretely, our sensing objective is the relative gap
\[
\frac{\|\Delta_L(t)\|_F}{\|\widehat L_t^{-1/p}\|_F},
\qquad
\Delta_L(t):=(L_t+\epsilon_t \mathbf{I}_m)^{-1/p}-\widehat L_t^{-1/p},
\]
and our proxy is derived from a first-order approximation to this quantity in the stale eigenspace. In
particular, it reweights the drift by inverse spectral factors and therefore tracks precisely the
low-spectrum sensitivity that the diagonalization residual misses.

\paragraph{On the (mis)use of warm-started QR: a cold-started schedule in disguise.}
The idea of refining a stale basis by a small number of QR iterations is natural. Our criticism is not of
warm-started QR \emph{per se}, but of the regime in which it is invoked under the trigger policy of
\citet{eschenhagen2026purifying}. Their QR refinement is activated precisely when the residual
criterion fails, i.e.\ when the current basis is already a poor diagonalizer of the current statistic. In that
regime, the iterate is no longer necessarily close to the desired invariant subspace, so the favorable
local behavior associated with a genuine warm start cannot be assumed. Operationally, the refinement
is therefore often initiated in what is better described as a \emph{cold-started regime}.

This matters computationally. Dense QR iteration costs $\mathcal{O}(n^3)$ per step (and similarly
$\mathcal{O}(m^3)$ for the other factor), and several iterations may be required before entering an
accurate regime. In contrast, a direct eigensolver such as \texttt{eigh} performs the standard reduction
once and then solves the resulting tridiagonal problem efficiently. Hence, when the trigger calls QR
only after the basis has already drifted substantially, the cumulative cost of multiple dense QR
iterations can be comparable to, or exceed, that of direct eigendecomposition.

This interpretation is also consistent with the empirical findings reported by
\citet{eschenhagen2026purifying}: on their Imagewoof ViT experiment, adaptive QR was slower in
wall-clock time than adaptive \texttt{eigh}, and even the default one-step SOAP-style refinement was
slightly slower and reached a worse final loss. Indeed, they ultimately emphasize using the criterion
mainly to decide when to call \texttt{eigh}, rather than to justify QR refinement as the primary
computational primitive. From our viewpoint, this is exactly what one would expect when the trigger is
not aligned with the inverse-root operator error and when the refinement is invoked only after the stale
basis has already become substantially inaccurate.

\subsection{Real-task complement: cost of residual-based QR refinement}
We examine a complementary practical question on real trigger states collected during ViT training: how expensive is it to satisfy the diagonalization-residual criterion by QR refinement?
Each saved state contains a stale basis together with the corresponding current Kronecker factor at a refresh-check step, and all runtimes are normalized by the direct \code{EVD} time measured on the same state and implementation path.

\autoref{tab:qr_results} reports the median number of QR iterations required to satisfy the residual threshold $\tau$ in~\citet{eschenhagen2026purifying} for these saved trigger states.

\begin{table}[h]
\centering
\resizebox{0.7\linewidth}{!}{%
\begin{tabular}{l c c c c}
\toprule
Model & Residual threshold ($\tau$) & QR iters & Median runtime / direct \code{EVD} & Median residual after QR \\
\hline
ViT & 0.1 & 1 & $0.52\times$ & 0.695 \\
ViT & 0.1 & 2 & $1.01\times$ & 0.451 \\
ViT & 0.1 & 4 & $1.97\times$ & 0.230 \\
ViT & 0.1 & 8 & $3.93\times$ & 0.098 \\
ViT & 0.25 & 1 & $0.36\times$ & 0.682 \\
ViT & 0.25 & 2 & $0.65\times$ & 0.360 \\
ViT & 0.25 & 4 & $1.27\times$ & 0.125 \\
ViT & 0.25 & 8 & $2.55\times$ & 0.060 \\
\bottomrule
\end{tabular}}
\caption{QR refinement cost to satisfy the diagonalization-residual criterion on saved ViT trigger states.}
\label{tab:qr_results}
\end{table}

The median residual drops below $\tau=0.1$ (or $\tau = 0.25$) only after $8$ (or $4$) QR iterations, at which point the median runtime is already $3.93\times$ (or $1.27 \times$) that of direct \code{EVD}.
Thus, for real saved ViT trigger states, a single QR step is typically insufficient to satisfy the residual criterion, whereas sufficient QR refinement to satisfy it is already slower than direct \code{EVD}.

\newpage
\section{Further Empirical Results}
\label{app:J}
In this section, we discuss additional empirical results. 
\subsection{Experimental Settings}
We implement \alg and all baselines in PyTorch 2.8 with CUDA 12.8. For the ViT (ImageNet-1k) task, we used the \code{NVIDIA RTX A6000} $\times$ 4 units; as an exception, we used \code{NVIDIA RTX Pro 6000} $\times$ 4 units only for the experiment shown in \autoref{fig:ablation}. Next, we used \code{NVIDIA RTX Pro 6000} $\times$ 4 units for the Conformer (Librispeech) task, and we used \code{NVIDIA DGX B200} $\times$ 2 units for the language model task. Unless stated otherwise, we use an identical batch size, momentum coefficients, step size, and its scheduling, and weight decay (WD) for AdamW, Stale Shampoo\footnote{\url{https://github.com/facebookresearch/optimizers.git}}, \alg, DR-Shampoo (\citet{eschenhagen2026purifying}), and SOAP \footnote{\url{https://github.com/nikhilvyas/SOAP.git}} (\autoref{tab:vit_hparams_all}--\ref{tab:gpt_hparams_all}). We couldn't find the source code for \citet{eschenhagen2026purifying}; hence, we implemented it.

For the learning rate (LR), we selected a single LR using only the Stale Shampoo setting (from the candidate grid in the tables) and then compared baseline optimizers and \alg under this LR. We apply Adam LR grafting~\citep{shi2023distributed} for all experiments except AdamW and SOAP. Additional baselines and \alg specific hyperparameters (\eg $\mathbf{f}$, $\tau$, $\epsilon_{\max}$) are reported in each Figure.

\begin{table*}[t]
\centering
\caption{Hyperparameters}
\label{tab:hparams_all}
\footnotesize
\setlength{\tabcolsep}{3pt}
\renewcommand{\arraystretch}{1.00}

\begin{subtable}[t]{0.335\linewidth}
\centering
\caption{ViT on ImageNet.}
\label{tab:vit_hparams_all}
\resizebox{\linewidth}{!}{%
\begin{tabular}{@{}ll@{}}
\toprule
\textbf{Hyperparameter} & \textbf{Value} \\
\midrule
Epoch         & 90 \\
Batch size    & 1024 \\
Momentum      & $(0.95,\,0.995)$ \\
WD            & $4.2\!\times\!10^{-4}$ \\
LR            & $\{1.75,\,2.00,\,2.20,\,2.35\}\!\times\!10^{-3}$ \\
LR scheduling & Warm-up (5\%) + cosine decay \\
LR grafting   & Adam \\
\bottomrule
\end{tabular}%
}
\end{subtable}
\hfill
\begin{subtable}[t]{0.32\linewidth}
\centering
\caption{Conformer on LibriSpeech.}
\label{tab:conf_hparams_all}
\resizebox{\linewidth}{!}{%
\begin{tabular}{@{}ll@{}}
\toprule
\textbf{Hyperparameter} & \textbf{Value} \\
\midrule
Epoch         & 90 \\
Batch size    & 1024 \\
Momentum      & $(0.95,\,0.995)$ \\
WD            & $5.0\!\times\!10^{-4}$ \\
LR            & $\{1.75,\,2.00,\,2.25\}\!\times\!10^{-3}$ \\
LR scheduling & Warm-up (5\%) + cosine decay \\
LR grafting   & Adam \\
\bottomrule
\end{tabular}%
}
\end{subtable}
\hfill
\begin{subtable}[t]{0.32\linewidth}
\centering
\caption{GPT-2 on WikiText.}
\label{tab:gpt_hparams_all}
\resizebox{\linewidth}{!}{%
\begin{tabular}{@{}ll@{}}
\toprule
\textbf{Hyperparameter} & \textbf{Value} \\
\midrule
Epoch         & 30 \\
Batch size    & 512 \\
Momentum      & $(0.95,\,0.99)$ \\
WD            & $1.0\!\times\!10^{-2}$ \\
LR            & $\{1.0,\,3.5,\,6.0\}\!\times\!10^{-3}$ \\
LR scheduling & Warm-up (5\%) + cosine decay \\
LR grafting   & Adam \\
\bottomrule
\end{tabular}%
}
\end{subtable}

\end{table*}

\subsection{ViT on ImageNet: More Results}
\begin{table*}[h]
\centering
\caption{Comparison of wall-clock time and training loss on the ViT task.}
\label{tab:ViT-comparison}
\resizebox{0.7\linewidth}{!}{%
\begin{tabular}{lllll}
\toprule
\textbf{Optimizer} & ($\mathbf{f}$, $\tau$, $\epsilon_0$, $\epsilon_{\max}$) & \textbf{Train Loss} & \textbf{Validation accuracy} & \textbf{Wall-clock Time} \\
\midrule
AdamW & (N/A, N/A, $10^{-9}$, N/A) & 0.635 & 72.39 $\%$ & 1,258 (minute) \\
stale Shampoo &($20$, N/A, $10^{-9}$, N/A) & 0.498 & $75.13\%$ & 1,415 (minute) \\
DR-Shampoo &($20$, $0.75$, $10^{-9}$, N/A) & 0.531 & 73.56 $\%$ & 1,310 (minute) \\
\alg (Ours) & ($20$, $0.75$, $10^{-9}$, $3 \times 10^{-7}$) &0.458 & 75.72 $\%$ & 1,300 (minute) \\
SOAP &  ($20$, N/A, $10^{-9}$, N/A) & 0.485 & 75.1$\%$ & 1,360 (minute) \\
\bottomrule
\end{tabular}
}
\end{table*}

\begin{figure}[h]
    \centering
    \begin{subfigure}[b]{0.24\textwidth}
        \centering
        \includegraphics[width=\textwidth]{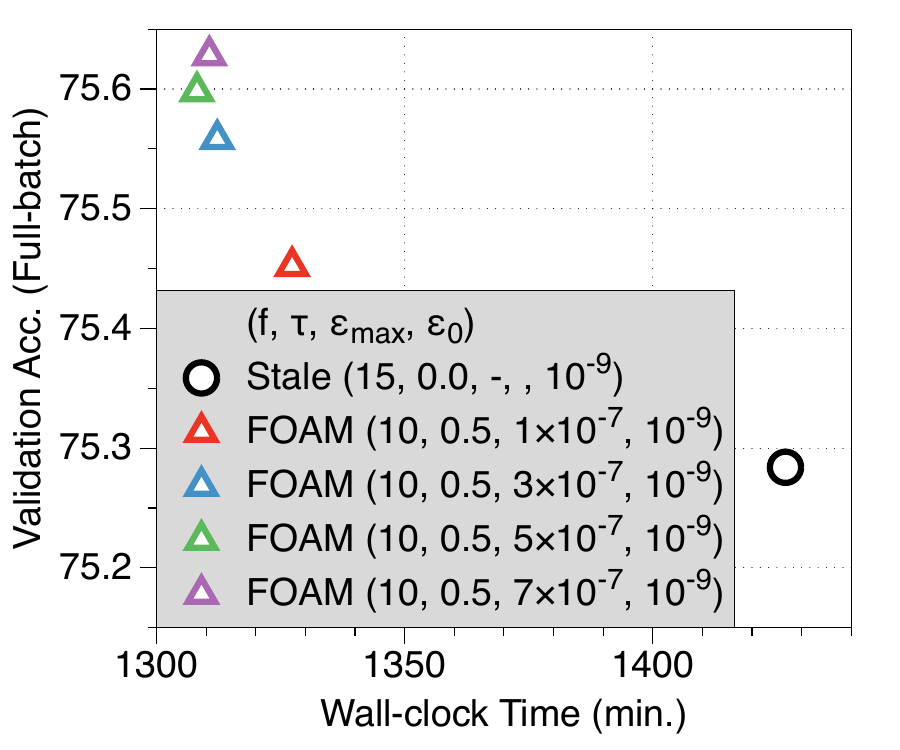}
    \end{subfigure}
    \hfill
    \begin{subfigure}[b]{0.24\textwidth}
        \centering
        \includegraphics[width=\textwidth]{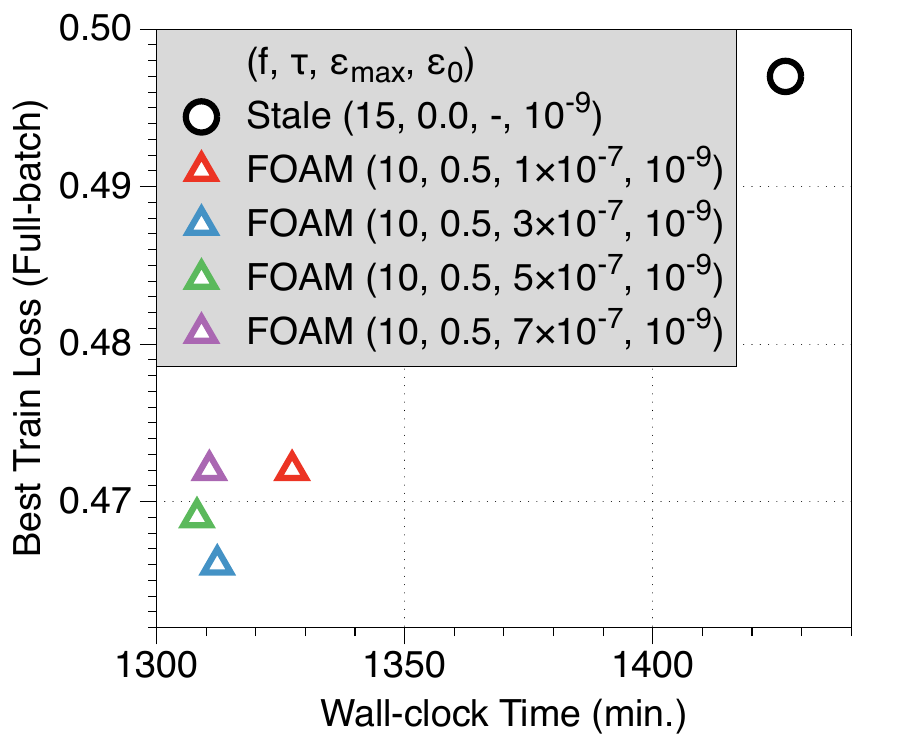}
    \end{subfigure}
    \begin{subfigure}[b]{0.24\textwidth}
        \centering
        \includegraphics[width=\textwidth]{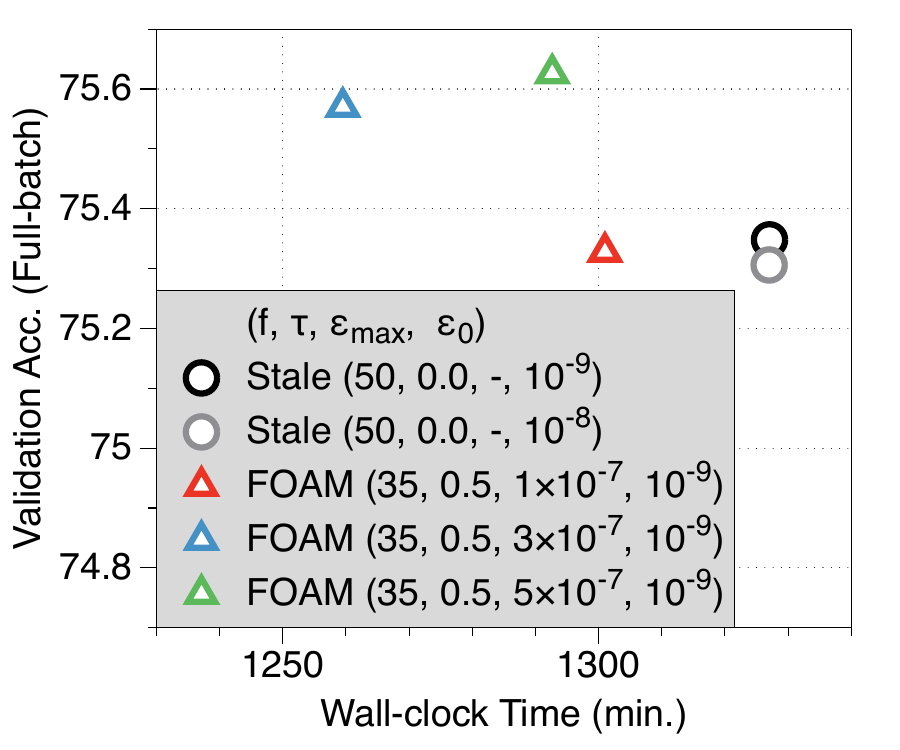}
    \end{subfigure}
    \hfill
    \begin{subfigure}[b]{0.24\textwidth}
        \centering
        \includegraphics[width=\textwidth]{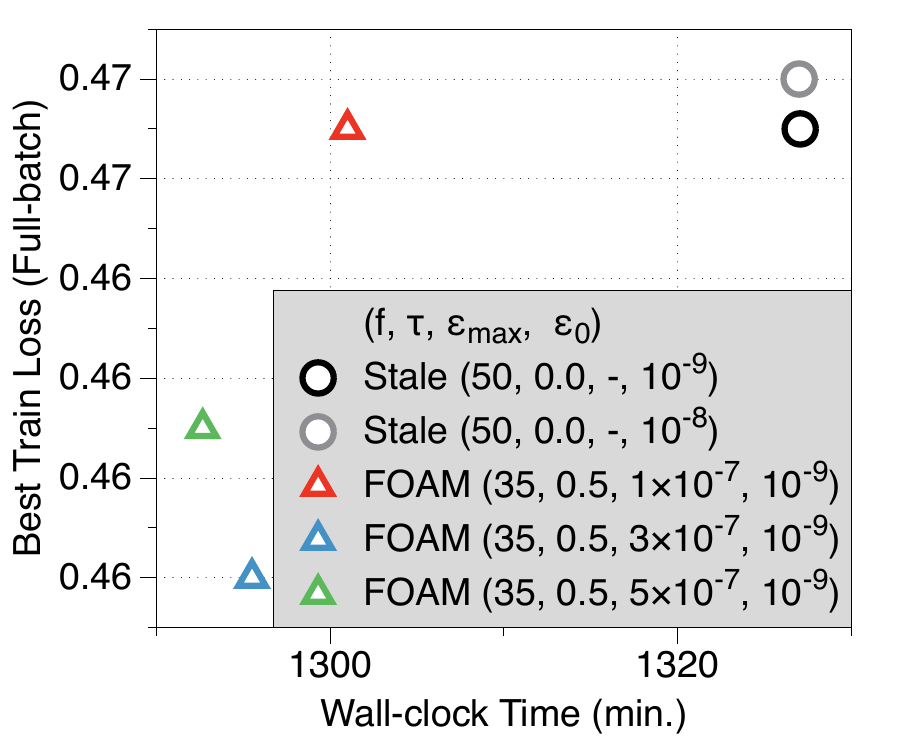}
    \end{subfigure}
    \vspace{-5pt}
    \caption{ViT: Best training loss and validation accuracy}
    \vspace{-10pt}
    \label{fig:vit_more}
\end{figure}

Our experimental results in \autoref{tab:ViT-comparison} show that our algorithm \alg outperforms several baseline optimizers. Next, \autoref{fig:vit_more} shows the validation accuracy
and training loss as functions of wall-clock time (minutes). In general, \alg
exhibits a wall-clock advantage without loss of performance. This demonstrates
that reducing the eigendecomposition and adjusting epsilon via the error proxy
effectively controls both wall-clock time gains and staleness-induced
preconditioner errors.

\begin{table*}[t]
\centering

\begin{minipage}[t]{0.52\linewidth}
\centering
\caption{Proxy ($h_t$) computation cost and \code{EVD} cost under different settings.}
\label{tab:proxy_evd_cost}
\resizebox{\linewidth}{!}{%
\begin{tabular}{l c c c}
\toprule
Model & Setting & Proxy computation cost & \code{EVD} cost \\
\midrule
ViT & $(\tau = 0.5;\; \mathbf{f} = 25;\; \epsilon_{\max} = 3 \times 10^{-7})$ & $\sim 4.72\%$ & $\sim 17.67\%$ \\
ViT & $(\tau = 0.1;\; \mathbf{f} = 50;\; \epsilon_{\max} = 2 \times 10^{-7})$ & $\sim 2.10\%$ & $\sim 29.48\%$ \\
\bottomrule
\end{tabular}%
}
\end{minipage}
\hfill
\begin{minipage}[t]{0.42\linewidth}
\centering
\caption{Dimension-wise profiling results for ViT.}
\label{tab:vit_dimension_profiling}
\resizebox{\linewidth}{!}{%
\begin{tabular}{l c c}
\toprule
Model \& Setting & Dimensions & profiling \\
\midrule
\multirow{3}{*}{ViT $(\tau = 0.5;\; \mathbf{f} = 25;\; \epsilon_{\max} = 3 \times 10^{-7})$} & 384 & $0.256\,\mathrm{ms}$ \\
 & 512 & $0.264\,\mathrm{ms}$ \\
 & 1024 & $0.286\,\mathrm{ms}$ \\
\bottomrule
\end{tabular}%
}
\end{minipage}

\end{table*}



\paragraph{Computational Overhead analysis}
In \autoref{tab:proxy_evd_cost}, we measured proxy-computation time and \code{EVD} time separately and reported each as a percentage of the total optimizer time over the full run.

In \autoref{tab:vit_dimension_profiling}, we also recorded (Kronecker) factor-level timings by matrix dimensions, which showed that across ViT factor sizes (384-1024), proxy computation remained nearly flat and was much smaller than the \code{EVD} cost.
Thus, when comparing the corresponding median EVD times of $ 13.29$ ms, $ 18.44$ ms, and $ 14.98$ ms, the proxy remained roughly $50-70\times$ cheaper than \code{EVD} across the real-factor dimensions encountered during training.

\begin{figure*}[h]
\centering

\begin{minipage}[b]{0.40\linewidth}
\centering
\begin{subfigure}[b]{0.48\textwidth}
\centering
\includegraphics[width=\textwidth]{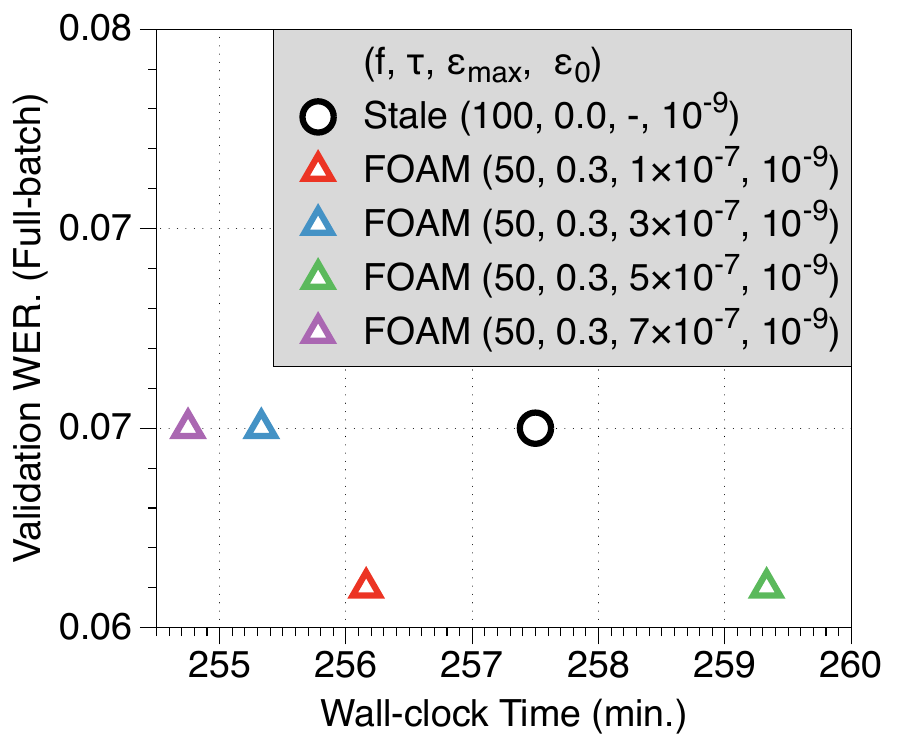}
\end{subfigure}
\hfill
\begin{subfigure}[b]{0.48\textwidth}
\centering
\includegraphics[width=\textwidth]{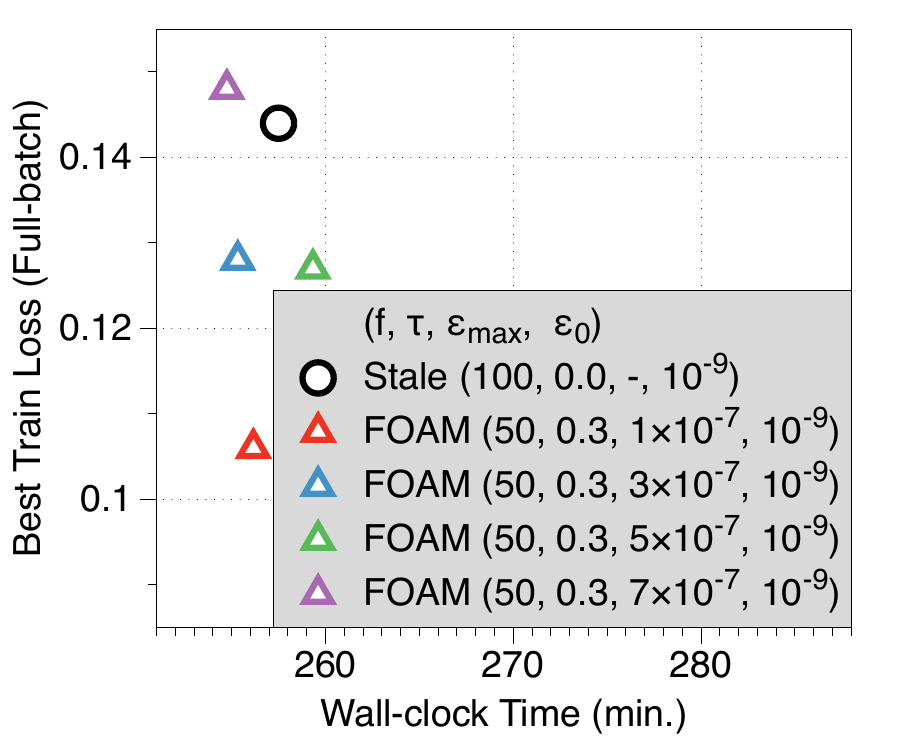}
\end{subfigure}
\vspace{-5pt}
\caption{Conformer: Best train. loss and valid. WER}
\label{fig:conformer_more}

\end{minipage}
\hfill
\begin{minipage}[b]{0.585\linewidth}
\centering
\resizebox{\linewidth}{!}{%
\begin{tabular}{lllll}
\toprule
\textbf{Optimizer} & ($\mathbf{f}$, $\tau$, $\epsilon_0$, $\epsilon_{\max}$) & \textbf{Train Loss} & \textbf{Validation WER} & \textbf{Wall-clock Time} \\
\midrule
AdamW & (N/A, N/A, $10^{-9}$, N/A) & 0.42 & 0.099 & 205 (minute) \\
stale Shampoo & ($50$, N/A, $10^{-9}$, N/A) & 0.17 & 0.069 & 264 (minute) \\
DR-Shampoo & ($50$, $0.4$, $10^{-9}$, N/A) & 0.18 & 0.065 & 294 (minute) \\
\alg (Ours) & ($50$, $0.4$, $10^{-9}$, $3 \times 10^{-7}$) & 0.14 & 0.065 & 257 (minute) \\
SOAP & ($50$, N/A, $10^{-9}$, N/A) & 0.17 & 0.067 & 265 (minute) \\
\bottomrule
\end{tabular}}
\captionof{table}{Comparison of wall-clock time and training loss on the Conformer task.}
\label{tab:Conformer-comparison}
\end{minipage}

\end{figure*}



\subsection{Conformer on LibriSpeech: More Results}
In \autoref{tab:Conformer-comparison}, we show the superior performance of \alg compared to several baseline optimizers. Our experimental results in \autoref{fig:conformer_more} show the validation WER and training loss as functions of wall-clock time (minutes). In general, \alg
exhibits a wall-clock advantage without loss of performance. However, the
\alg $(50, 0.3, 7 \times 10^{-7}, 10^{-9})$ setting exhibits a higher training loss
than Stale Shampoo, indicating that excessive damping degrades the
preconditioning eff

\begin{table*}[h]
\centering

\begin{minipage}[b]{0.48\linewidth}
\centering
\scriptsize
\caption{Comparison of wall-clock time and training loss on the language model task.}
\label{tab:language_model_comparison}
\resizebox{\linewidth}{!}{%
\begin{tabular}{lllll}
\toprule
\textbf{Optimizer} & ($\mathbf{f}$, $\tau$, $\epsilon_0$, $\epsilon_{\max}$) & \textbf{Train Loss} & \textbf{Val PPL} & \textbf{Wall-clock Time} \\
\midrule
AdamW &(N/A, N/A, $10^{-12}$, N/A) & 2.30 & 18.37 & 609 (minute) \\
stale Shampoo &($100$, N/A, $10^{-12}$, N/A) & 2.31 & 18.26 & 630 (minute) \\
\alg & ($75$, $0.4$, $10^{-12}$, $1 \times 10^{-10}$) & 2.29 & 18.22 & 622 (minute) \\
\alg & ($75$, $0.4$, $10^{-12}$, $2 \times 10^{-10}$) & 2.30 & 18.22 & 621 (minute) \\
\alg & ($75$, $0.4$, $10^{-12}$, $3 \times 10^{-10}$) & 2.30 & 18.27 & 625 (minute) \\
DR-Shampoo &($75$, $0.4$, $10^{-12}$, N/A) & 2.31 & 18.33 & 623 (minute) \\
SOAP &($75$, N/A, $10^{-12}$, N/A) & 2.31 & 18.33  & 642 (minute) \\
\bottomrule
\end{tabular}}
\end{minipage}
\hfill
\begin{minipage}[b]{0.48\linewidth}
\centering
\scriptsize
\caption{Comparison of wall-clock time and training loss on the ViT task with DR-Shampoo and Ours.}
\label{tab:further_vit_results}
\resizebox{\linewidth}{!}{%
\begin{tabular}{lllll}
\toprule
\textbf{Optimizer} & $(\mathbf{f}, \tau, \epsilon_0, \epsilon_{\max})$ &\textbf{Train Loss} & \textbf{Validation accuracy} & \textbf{Wall-clock Time} \\
\midrule
DR-Shampoo & $(20, 0.25, 10^{-9}, -)$ & 0.489 & 74.7\% & 1,354 (minute)\\
\alg   & $(20, 0.25, 10^{-9}, 3 \times 10^{-7})$ &0.468 & 75.38\% & 1,310 (minute) \\
DR-Shampoo & $(20, 0.5, 10^{-9}, -)$ & 0.494 & 74.2\% &  1,326 (minute) \\
\alg & $(20, 0.5, 10^{-9}, 3 \times 10^{-7})$ & 0.467 & 75.6\% & 1,300 (minute) \\
DR-Shampoo & $(20, 0.75, 10^{-9}, -)$ & 0.531 & 73.56\% & 1,310 (minute) \\
\alg & $(20, 0.75, 10^{-9}, 3 \times 10^{-7})$ & 0.458 & 75.72\% & 1,300 (minute) \\
\bottomrule
\end{tabular}}
\end{minipage}

\end{table*}


\subsection{Language model on Wikitext-103-v1}
We conducted experiments with the GPT-2 base (247M) model and confirmed \alg's robust effectiveness and superior efficiency (see \autoref{tab:language_model_comparison}).


\subsection{Compare with \citet{eschenhagen2026purifying}}
For the diagonal-residual criterion method~\citep{eschenhagen2026purifying}, we refresh using EVD rather than QR iterations. Our reason is that, as observed in \autoref{tab:qr_results}, the number of QR iterations needed to satisfy the residual criterion was already comparable to, or slower than, direct EVD. Furthermore, although \citet{eschenhagen2026purifying} proposed adaptive QR, they also observed in their experiments that the QR algorithm was less efficient than EVD in terms of wall-clock time and explicitly stated that they used EVD when the diagonal-residual was greater than the threshold $\tau$ (refer to subsection 4.2 in \citet{eschenhagen2026purifying}). Taking these factors into account, we conducted experiments using EVD rather than warm-started QR iteration. We report additional experimental results in \autoref{tab:further_vit_results}.

\clearpage

\section{Proxy Validation Experiments}
\label{app:K}

\subsection{Synthetic example}

\paragraph{Goal.}
We test whether the surrogate
\[
h(\epsilon)=RC(\epsilon)\,\frac{\alpha(\epsilon)}{p}
\]
tracks the ground-truth relative inverse-root operator error and yields a useful trigger signal for eigendecomposition.

\paragraph{Data generation.}
For each configuration, we sample an SPD matrix
\[
A = Q\operatorname{diag}(\lambda)Q^\top,
\qquad
\lambda_i=i^{-r},
\]
where $Q$ is obtained by QR factorization of a Gaussian matrix.
To model controlled stale drift, we use a PSD perturbation
\[
E=BB^\top \succeq 0,
\]
rescaled as
\[
E \leftarrow \frac{E}{\|E\|_F}\bigl(\texttt{E\_scale}\cdot\|A\|_F\bigr),
\qquad
A_{\mathrm{new}} = A + E.
\]
We use PSD perturbations to preserve positive definiteness uniformly across the sweep.

\paragraph{Damping sweep and operators.}
We sweep $\epsilon$ on a logarithmic grid
\[
\epsilon\in[\epsilon_{\min},\epsilon_{\max}]
\]
with $K=\texttt{eps\_steps}$ points.
For inverse-root order $p$, define the stale and fresh damped inverse-root operators by
\[
P_s(\epsilon)
=
Q\operatorname{diag}\bigl((\lambda+\epsilon)^{-1/p}\bigr)Q^\top,
\qquad
P_f(\epsilon)
=
U\operatorname{diag}\bigl((\tilde\lambda+\epsilon)^{-1/p}\bigr)U^\top,
\]
where
\[
(\tilde\lambda,U)=\code{EVD} (A_{\mathrm{new}}).
\]

\paragraph{Ground-truth error.}
We measure the target relative operator error by
\[
\Delta(\epsilon)
=
\frac{\|P_f(\epsilon)-P_s(\epsilon)\|_F}{\|P_s(\epsilon)\|_F}.
\]

\paragraph{Proxy and baseline.}
Let
\[
\hat E = Q^\top E Q,
\qquad
D=\operatorname{diag}(\lambda+\epsilon).
\]
We compute
\[
RC(\epsilon)=\|D^{-1/2}\hat E D^{-1/2}\|_F,
\qquad
\alpha(\epsilon)=\frac{\|(\lambda+\epsilon)^{-1/p}\|_\infty}{\|(\lambda+\epsilon)^{-1/p}\|_2},
\qquad
h(\epsilon)=RC(\epsilon)\frac{\alpha(\epsilon)}{p}.
\]

As a baseline, we use a \emph{damped} diagonalization-residual score in the stale basis:
\[
L(\epsilon)=U\operatorname{diag}(\tilde\lambda+\epsilon)U^\top,
\qquad
\hat L(\epsilon)=Q^\top L(\epsilon)Q,
\qquad
d(\epsilon)=
\frac{\|\hat L(\epsilon)-\operatorname{diag}(\hat L(\epsilon))\|_F}{\|\hat L(\epsilon)\|_F}.
\]
We include the same damping level $\epsilon$ in $d(\epsilon)$ so that the proxy and the baseline are compared under a matched damped operator scale.

\paragraph{Evaluation.}
For each configuration, we compute Pearson and Spearman correlations between
$\log_{10}\Delta(\epsilon)$ and $\log_{10}h(\epsilon)$ across the $K=25$ values of $\epsilon$, and we compute the ROC-AUC for detecting ``refresh-needed'' samples defined as the top $20\%$ by $\Delta(\epsilon)$ within that configuration.
We then summarize these configuration-wise statistics by their median and interquartile range (IQR) across all configurations.
For calibration and alignment, we additionally pool all samples across all configurations and examine
(i) the empirical distribution of $\Delta(\epsilon)/h(\epsilon)$ and
(ii) the log-scale scatter of $\log_{10}h(\epsilon)$ versus $\log_{10}\Delta(\epsilon)$.

\paragraph{Sweep.}
We sweep
\[
d\in\{256,512,1024\},\qquad
p\in\{2,4\},\qquad
r\in\{0.5,1.0,1.5,2.0,2.5\},
\]
\[
\texttt{E\_scale}\in\{10^{-5},10^{-4},10^{-3},10^{-2},10^{-1}\},
\qquad
\epsilon\in[10^{-8},10^{-2}]
\]
(log-spaced), with $K=25$ damping values and $T=15$ trials per configuration.
This yields $150$ configurations and $56{,}250$ total samples.

\paragraph{Findings.}
Across the $150$ configurations, the proposed proxy $h(\epsilon)$ provides near-perfect decision utility:
the configuration-wise ROC-AUC has median $0.9988$ (IQR $[0.9944,\,1.000]$; worst-case $0.902$), substantially outperforming the diagonalization-residual baseline $d(\epsilon)$ (median $0.672$, IQR $[0.625,\,0.710]$).
The configuration-wise log-scale alignment is also strong, with median Pearson correlation $0.9994$ and median Spearman correlation $0.9982$.
For pooled calibration over all $56{,}250$ samples, the ratio $\Delta(\epsilon)/h(\epsilon)$ has median $0.761$ and maximum $0.999$, indicating that, throughout our sweep, the proxy remains conservative while staying tightly aligned with the target relative error; see \autoref{fig:proxy_validation}.

\begin{figure*}[t]
    \centering
    \begin{subfigure}[t]{0.32\textwidth}
        \centering
        \includegraphics[width=\linewidth]{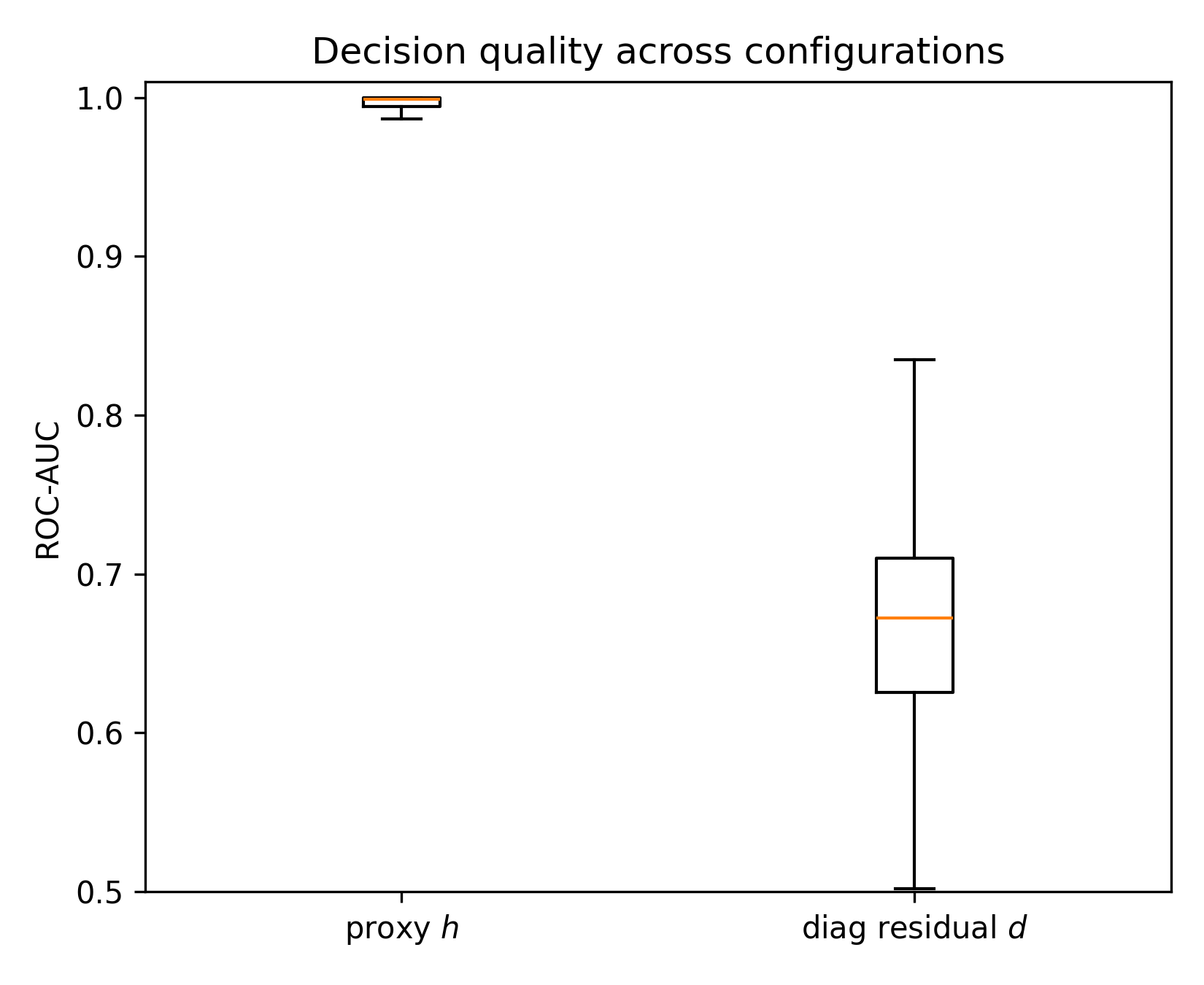}
        \caption{\textbf{Decision utility (configuration-wise).} Distribution across the $150$ configurations of ROC-AUC for identifying the top-$20\%$ samples by $\Delta(\epsilon)$.}
        \label{fig:proxy_val_auc}
    \end{subfigure}\hfill
    \begin{subfigure}[t]{0.32\textwidth}
        \centering
        \includegraphics[width=\linewidth]{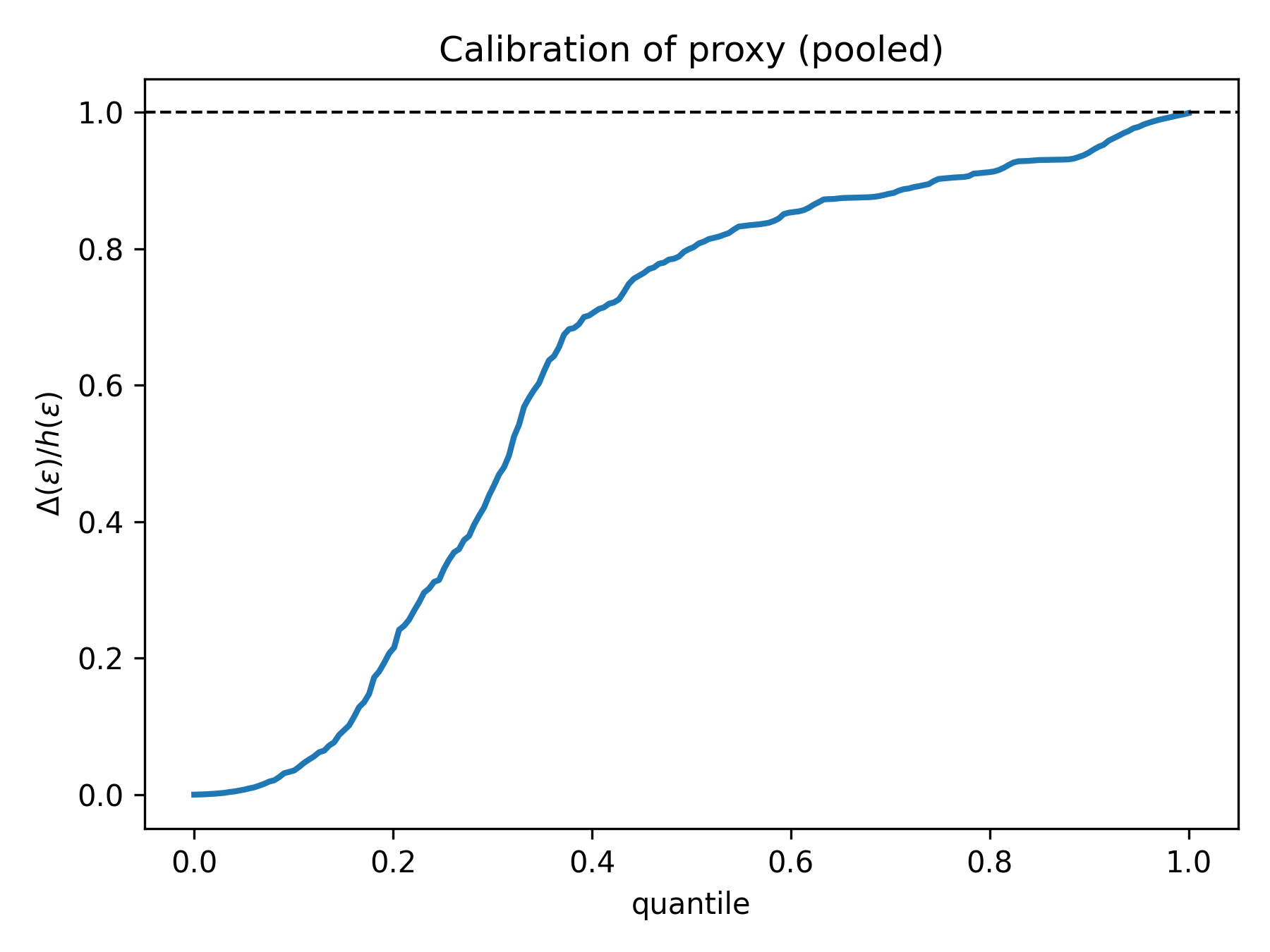}
        \caption{\textbf{Calibration (pooled).} Empirical quantiles of $\Delta(\epsilon)/h(\epsilon)$ over all $56{,}250$ samples; values below $1$ indicate conservativeness.}
        \label{fig:proxy_val_calib}
    \end{subfigure}\hfill
    \begin{subfigure}[t]{0.32\textwidth}
        \centering
        \includegraphics[width=\linewidth]{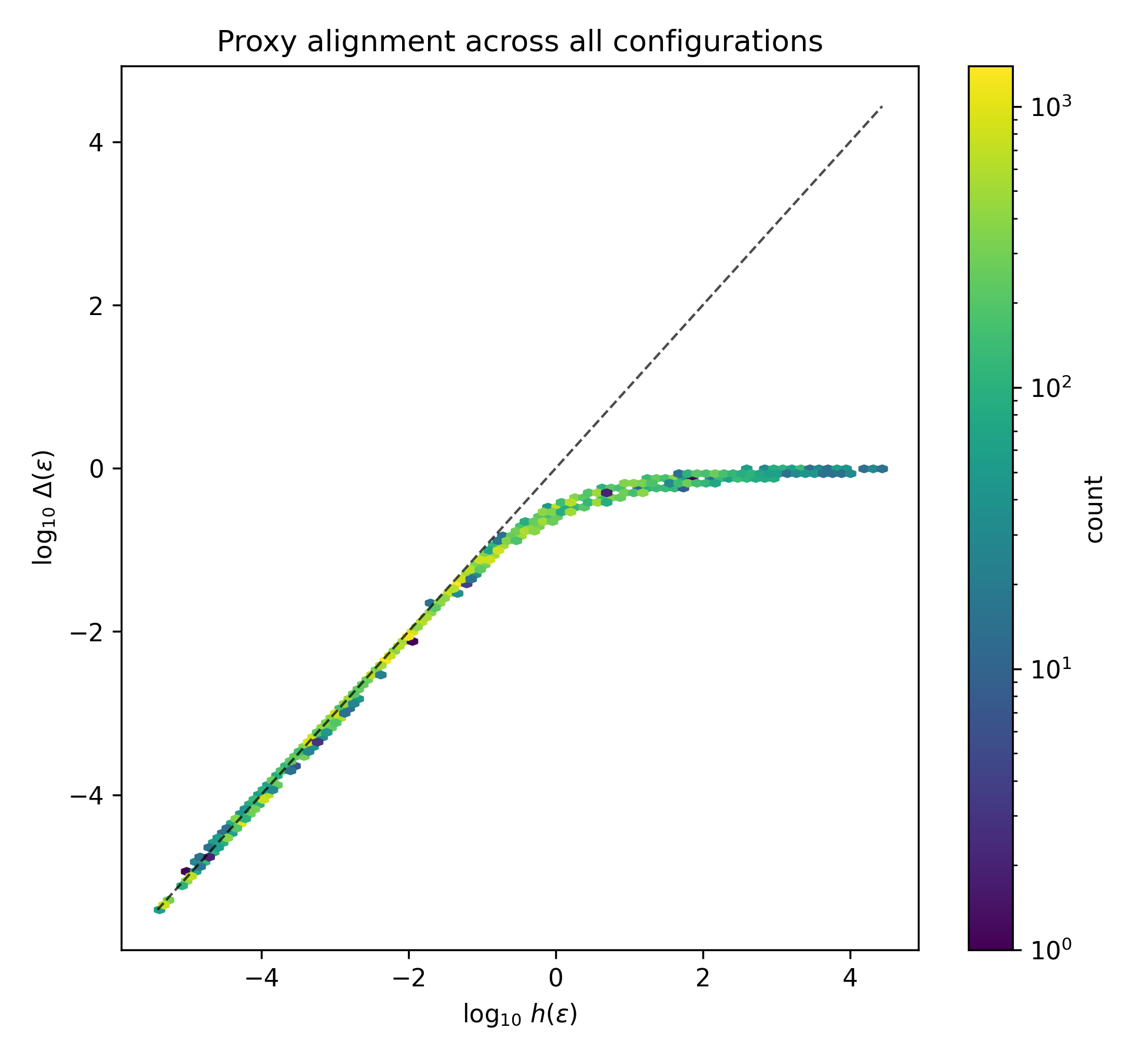}
        \caption{\textbf{Alignment (pooled).} Hexbin of $\log_{10}h(\epsilon)$ versus $\log_{10}\Delta(\epsilon)$ over all $56{,}250$ samples, with $y=x$ shown as a reference line.}
        \label{fig:proxy_val_align}
    \end{subfigure}

    \caption{\textbf{Synthetic validation of the proxy.}
    Panel (a) reports configuration-wise decision quality, showing that $h(\epsilon)$ provides an almost perfect trigger signal for eigendecomposition and substantially outperforms the diagonalization-residual baseline $d(\epsilon)$.
    Panels (b) and (c) pool all samples across the sweep: the calibration plot shows that $\Delta(\epsilon)/h(\epsilon)\le 1$ throughout the evaluated range, indicating conservative behavior in this experiment, while the alignment plot shows strong monotone agreement between $h(\epsilon)$ and the ground-truth relative operator error $\Delta(\epsilon)$.}
    \label{fig:proxy_validation}
\end{figure*}
\end{document}